\documentclass{article}
\PassOptionsToPackage{style=apa}{biblatex}
\usepackage[arxiv]{kf-basic}
\usepackage[mode=tex,subpreambles,sort]{standalone}

\zcsetup{lang=english,cap=true,abbrev=false}
\zcRefTypeSetup{equation}{
  Name-sg={Eq.},
  Name-pl={Eqs.},
}

\title{Provable Data Scaling Law for Meta Learning via Complexity Minimization}
\author[1,2]{Kazuto Fukuchi\thanks{fukuchi@cs.tsukuba.ac.jp, Corresponding author}}
\author[3,4]{Ryuichiro Hataya}
\author[4,5,6]{Kota Matsui}
\affil[1]{University of Tsukuba, Japan}
\affil[2]{RIKEN AIP, Japan}
\affil[3]{SB Intuitions Corp., Japan}
\affil[4]{Kyoto University, Japan}
\affil[5]{Shiga University, Japan}
\affil[6]{Institute of Science Tokyo, Japan}

\begin{document}
\sloppy
\hbadness=10000

\maketitle
\begin{abstract}
  Pre-training has become a fundamental paradigm in modern machine learning, with one of its key empirical benefits being reduced downstream sample complexity as the scale of pre-training data increases. However, existing theoretical frameworks for pre-training do not fully explain this phenomenon. In this paper, we introduce complexity minimization, a novel meta-representation learning framework designed to enable theoretical analysis of this scaling behavior, which learns representations by evaluating the downstream model complexity best suited to each domain and minimizing the worst-case such complexity across source domains. Our end-to-end theoretical analysis, spanning pre-training through downstream regression, shows that this framework provably captures this scaling behavior; in particular, we show that the error rate of few-shot adaptation improves as the amount of meta-training data grows. Empirically, we demonstrate that incorporating complexity regularization into existing meta-learning methods consistently improves downstream sample efficiency.
\end{abstract}

\section{Introduction}
Pre-training, encompassing self-supervised learning, representation learning, and meta-learning, is now a fundamental component of modern machine learning, as demonstrated by the recent success of foundation models, large models pre-trained on massive datasets. 
For example, in natural language processing, large-scale pre-trained
language models such as BERT and GPT-3 have shown that representations
learned from broad text corpora can be adapted to a wide range of downstream
tasks, including natural language inference, question answering, and text
generation~\autocite{devlin2019bert,brown2020language}. 
In computer vision and vision--language learning, contrastive and promptable pre-training has led to models such as CLIP and the Segment Anything Model, which exhibit strong transferability across image classification, retrieval, and segmentation tasks with little or no task-specific supervision \autocite{radford2021learning,kirillov2023segment}. 
A similar paradigm has also emerged in scientific and domain-specific applications: AlphaFold has transformed protein-structure prediction by leveraging large-scale biological sequence and structural information \autocite{jumper2021highly}, while medical large language models such as Med-PaLM illustrate the potential of pre-training and instruction tuning for clinical question answering~\autocite{singhal2023large}.  
In robotics, large transformer-based policies trained on diverse robot interaction data have shown improved generalization to new objects, environments, and instructions~\autocite{brohan2022rt}. 

The theoretical study of pre-training, including analyses for few-shot learning~\autocite{duFewShotLearningLearning2020}, in-context learning~\autocite{baiTransformersStatisticiansProvable2023,liTransformersAlgorithmsGeneralization2023,kimTransformersAreMinimax2024}, and meta-learning~\autocite{collinsMAMLANILProvably2022,aliakbarpourMetalearningVeryFew2024,huang2022provable,wuLearningLearnContrastive2025,dingStabilityGeneralizationMetaLearning2025,blockProvableMetaLearningLowRank2025}, has revealed the advantage of pre-training in terms of the sample complexity of the downstream learning task. For example, \textcite{duFewShotLearningLearning2020} showed that the existence of a common linear representation shared across source and downstream tasks yields a reduction in sample complexity. Pre-training has also been shown to reduce downstream sample complexity in in-context learning under generalized linear models~\autocite{baiTransformersStatisticiansProvable2023}, nonparametric regression models~\autocite{kimTransformersAreMinimax2024}, and a hypothesis class with bounded algorithmic stability~\autocite{liTransformersAlgorithmsGeneralization2023}. Furthermore, many researchers have shown that the meta-learning algorithms provably reduce the downstream sample complexity~\autocite{collinsMAMLANILProvably2022,aliakbarpourMetalearningVeryFew2024,huang2022provable,wuLearningLearnContrastive2025,dingStabilityGeneralizationMetaLearning2025,blockProvableMetaLearningLowRank2025}.

These results, however, are inconsistent with the empirical phenomenon known as the {\em scaling law}, first introduced by \textcite{kaplanScalingLawsNeural2020}. The {\em data scaling law} for pre-trained models, in particular, shows that pre-training on more data leads to better error rate of the downstream learning task~\autocite{henighanScalingLawsAutoregressive2020, mikamiScalingLawSyn2real2023}. The aforementioned theoretical results cannot explain this empirical finding, since the downstream error rates they establish are independent of the pre-training data size.

Recently, \textcite{fukuchiProvableTargetSample2026} has provided a theoretical framework that can explain the data scaling law for pre-trained models. Their framework, caulking, adapts the pre-trained model to the downstream task by inserting an adapter, as in parameter-efficient fine-tuning (PEFT) methods. Their analysis establishes that training the pre-trained model so that the complexity of the adapter decreases as the pre-training data size grows provably reduces the sample complexity of the downstream task.

However, their results lack an end-to-end analysis from pre-training to downstream learning, leaving unclear the training strategy that achieves the data scaling law. Developing a pre-training algorithm that provably achieves the data scaling law is important not only for the theoretical understanding of recent advances in foundation models, but also for guiding the practical development of pre-trained models.

\paragraph{Our contribution}
The main contribution of this paper is a meta-representation learning algorithm together with its theoretical analysis, proving the achievability of the data scaling law. Our contributions are summarized as follows:
\begin{itemize}
  \item We propose \emph{complexity minimization}, a novel meta-representation learning framework that selects a feature extractor by minimizing the worst-case best model complexity across observed source domains. The best model complexity serves as a proxy for the convergence rate of the downstream excess error: for instance, when the underlying regression function is sparse, the sparsity level governs the downstream convergence rate, so minimizing it directly reduces downstream sample complexity.
  \item To instantiate complexity minimization, we construct a novel estimator of the best model complexity using Lepski's method~\autocite{lepskiiProblemAdaptiveEstimation1991}, a principled adaptive model selection procedure that identifies the optimal complexity level from a sample without prior knowledge of the underlying complexity parameters.
  \item We provide an end-to-end theoretical analysis spanning meta-training through downstream learning and prove that the downstream error rate exhibits the data scaling law. Specifically, the downstream excess error achieves the rate
    \begin{align}
      (n/\ln n)^{-\beta^* + O(\ln^{-\gamma} m)}, \label{eq:scaling-law-rate}
    \end{align}
    where $m$ and $n$ are the meta-learning and downstream sample sizes, respectively, $\beta^* > 0$ is the ideal downstream convergence exponent, and $\gamma > 0$ is a constant. The exponent in \zcref{eq:scaling-law-rate} approaches $\beta^*$ as $m \to \infty$, meaning the downstream error decays faster with $n$ as the meta-training sample size grows, which is precisely the data scaling law.
  \item We empirically verify that adding a norm-based complexity regularizer to standard meta-learning algorithms consistently improves downstream sample efficiency across multiple baselines and datasets.
\end{itemize}

All missing proofs are left to the appendix.

\section{Problem Formulation}

\paragraph{Notation}
For a positive integer $m$, let $[m] = \Bab{1, \dots, m}$. For sequences $a_n$ and $b_n$, we write $a_n \lesssim b_n$ (resp.\ $a_n \gtrsim b_n$) if $a_n \le Cb_n$ (resp.\ $a_n \ge Cb_n$) for some $C > 0$ and all $n$; $a_n \asymp b_n$ means both hold. For a vector $x \in \RealSet^d$ and a function $f \colon \mathcal{X} \to \RealSet$, we write $\Vab*{x}_p$ for the $\ell^p$-norm and $\Vab*{f}_{L^p}$ for the $L^p$-norm. We write $\p$ and $\Mean$ for probability and expectation. For a measurable function $f \colon \mathcal{X} \to \RealSet$ and a random variable $X$ taking values in $\mathcal{X}$, we define $\Vab*{f}_{L^p(X)} = (\Mean[|f(X)|^p])^{1/p}$ for $p \in [1,\infty)$. For a set $A$ endowed with a metric $\rho$ and $\epsilon > 0$, $N(\epsilon, A, \rho)$ denotes the $\epsilon$-covering number of $A$; for $A$ endowed with a norm $\Vab*{\cdot}$, we write $N(\epsilon, A, \Vab*{\cdot}) = N(\epsilon, A, \rho_{\Vab*{\cdot}})$ where $\rho_{\Vab*{\cdot}}(x,y) = \Vab*{x-y}$. Additional notation used in the proofs is collected in \zcref{sec:analyses}.

\paragraph{Meta-representation learning problem}
Consider a representation learning problem with samples from multiple domains. Let $\mathcal{P}^*$ be the set of all pairs $(X, Y)$ of random variables, where $X \in \mathcal{X} \subseteq \RealSet^d$ is a feature and $Y \in \RealSet$ is an outcome. Throughout, we assume $\Mean[Y|X] \in [0,1]$ almost surely. Let $\mathcal{P} \subseteq \mathcal{P}^*$ denote the subset of feature-outcome pairs associated with all domains of interest, and let $\mathfrak{P} \subseteq 2^{\mathcal{P}^*}$ be the collection of all possible realizations of $\mathcal{P}$. The learner knows $\mathcal{P}^*$ but not $\mathcal{P}$. Let $(X^{(1)}, Y^{(1)}), \dots, (X^{(D)}, Y^{(D)}) \in \mathcal{P}$ be the feature-outcome pairs for $D$ observed domains, drawn i.i.d. from a distribution over $\mathcal{P}$. The learner observes $m$ i.i.d.\ copies of each $(X^{(d)}, Y^{(d)})$, denoted $(X_1^{(d)}, Y_1^{(d)}), \dots, (X_m^{(d)}, Y_m^{(d)})$. The goal is to learn a feature extractor $\phi \colon \mathcal{X} \to \RealSet^p$ that minimizes the sample complexity of learning a regressor of the form $f \circ \phi$ for some $f \colon \RealSet^p \to [0,1]$ from an additional sample drawn from some $(X, Y) \in \mathcal{P}$, which we refer to as the downstream learning task.

\begin{remark}[Intuition behind $\mathcal{P}^*$ and $\mathcal{P}$]
  The distinction between $\mathcal{P}^*$ and $\mathcal{P}$ is central to characterizing the conditions for successful meta representation learning. We assume that a single common feature representation performs well for all domains in $\mathcal{P}$, so that obtaining such a representation minimizes the sample complexity of the downstream task. In other words, $\mathcal{P}$ shares a common feature representation, whereas $\mathcal{P}^*$ encompasses all possible feature-outcome distributions over a variety of representations. Since the learner does not know $\mathcal{P}$, they do not know this common representation either. Identifying $\mathcal{P}$ from pre-training data is therefore valuable for reducing downstream sample complexity.
\end{remark}

\paragraph{Downstream regression problem}
In the downstream task, the learner receives an additional sample from some $(X, Y) \in \mathcal{P}$ and a pre-trained feature extractor $\phi \colon \mathcal{X} \to \RealSet^p$, and finds a head function $f \colon \RealSet^p \to [0,1]$ such that $f \circ \phi$ is an accurate regressor for $(X, Y)$. Let $(X_1, Y_1), \dots, (X_n, Y_n)$ be $n$ i.i.d.\ copies of $(X, Y)$. The quality of $f$ is measured by the expected squared error
\begin{align}
  E_\phi(f; X, Y) = \Mean\ab[\ab((f \circ \phi)(X) - Y)^2].
\end{align}
Equivalently, $f$ minimizes the excess error $\bar{E}_\phi(f; X, Y) = \Vab*{f \circ \phi - \Mean[Y | X]}_{L^2(X)}^2$, where $\Mean[Y | X]$ is the Bayes optimal regressor for $(X, Y)$. Let $f_n$ denote the head function learned from the sample. The sample complexity of the downstream task is characterized by the rate at which $\bar{E}_\phi(f_n; X, Y)$ decreases as $n$ grows.

\section{Complexity Minimization}\label{sec:complexity-minimization}
\begin{figure}
  \centering
  \begin{tikzpicture}[
  font=\footnotesize,
  >=Latex,
  arr/.style={-{Latex[length=1.35mm]}, thin, draw=gray!70},
  lbl/.style={anchor=east, align=right, inner sep=0pt, outer sep=0pt},
  domainbox/.style={
    rounded corners=1.5pt,
    draw=gray!60,
    fill=gray!6,
    align=center,
    inner xsep=2pt,
    inner ysep=1.2pt,
  },
  topbox/.style={
    rounded corners=2pt,
    draw=gray!50,
    fill=blue!4,
    align=center,
    inner xsep=3pt,
    inner ysep=2pt,
    minimum width=6.15cm,
  },
]

  \begin{scope}[local bounding box=cmfig]
    \node[topbox] (ml) {%
      $\displaystyle \min_{\phi} \; \max_{d=1,\ldots,D} J^{\dagger(d)}_n(\phi)$};

    \path let
      \p1 = ($(ml.south east)-(ml.south west)$),
      \p2 = ($(ml.north west)-(ml.south west)$),
      \n6 = {max(2.2, 0.0065*abs(\x1))},
      \n7 = {max(7, 0.012*abs(\x1))},
      \n2 = {(abs(\x1) - 2*\n6 - \n7)/2},
      \n3 = {max(5.5, 0.09*abs(\y2))},
      \n4 = {max(14, \n2 - 4)},
      \n5 = {max(21, 0.16*abs(\y2))}
    in
      node[domainbox,
        minimum width=\n2,
        minimum height=\n5,
        text width=\n4,
        anchor=north west] (d1) at ($ (ml.south west) + (0,-\n3) $) {%
          Domain 1\\ $J^{\dagger(1)}_n(\phi)$}
      node[anchor=center, inner xsep=2pt, inner ysep=0.35pt] (dd) at ($ (d1.east) + (\n6 + \n7/2, 0) $) {$\cdots$}
      node[domainbox,
        minimum width=\n2,
        minimum height=\n5,
        text width=\n4,
        anchor=north west] (dD) at ($ (d1.north east) + (\n6 + \n7 + \n6, 0) $) {%
          Domain $D$\\ $J^{\dagger(D)}_n(\phi)$};

    \coordinate (role-label-east) at ([xshift=-0.12cm]ml.west);
    \coordinate (base-row-mid) at ($(d1.west)!0.5!(dD.west)$);
    \node[lbl] (lmeta) at (role-label-east |- ml.center) {\textbf{Meta-learner}};
    \node[lbl] (lbase) at (role-label-east |- base-row-mid) {\textbf{Base-learners}};

    \draw[arr] ($(d1.north)-(0,0.4pt)$) -- ($(ml.south -| d1)+(0,0.4pt)$);
    \draw[arr] ($(dD.north)-(0,0.4pt)$) -- ($(ml.south -| dD)+(0,0.4pt)$);
  \end{scope}

  \coordinate (row-south-mid) at ($(d1.south)!0.5!(dD.south)$);
  \coordinate (fig-h-center) at ($(cmfig.west)!.5!(cmfig.east)$);

\end{tikzpicture}
  \caption{Conceptual diagram of \emph{complexity minimization}. Each base-learner (bottom) assesses the best model complexity $J^{\dagger(d)}_n(\phi)$ for its domain $d$ and reports it to the meta-learner. The meta-learner (top) selects $\phi$ to minimize the worst-case best model complexity over the observed domains, thereby reducing the downstream sample complexity across all domains.}
  \label{fig:complexity-minimization}
\end{figure}

We propose \emph{complexity minimization} (\zcref{fig:complexity-minimization}), a meta-representation learning framework following the meta-learner/base-learner architecture of existing approaches~\autocite{hospedalesMetaLearningNeuralNetworks2022,finnModelAgnosticMetaLearningFast2017,wuLearningLearnContrastive2025,dingStabilityGeneralizationMetaLearning2025}. The meta-learner maintains the feature extractor $\phi$ as its meta-parameter~(\zcref{fig:complexity-minimization} top); each base-learner is associated with one observed domain and evaluates $\phi$ by a domain-specific criterion~(\zcref{fig:complexity-minimization} bottom). Many existing meta-learning algorithms, including MAML~\autocite{finnModelAgnosticMetaLearningFast2017}, instantiate this criterion as the downstream regression error.

Our key departure is to replace the regression error with the \emph{best model complexity} of the head function, which serves as a proxy for the convergence rate of the downstream excess error. Formally, let $\mathcal{F}_J$ be a sequence of increasing classes of head functions $f \colon \RealSet^p \to [0,1]$, indexed by a complexity parameter $J \in \NaturalSet$, where model complexity (e.g., the number of non-zero weights) increases with $J$. Letting $f_{n,J}$ denote the head function learned over $\mathcal{F}_J$ from a downstream sample of size $n$, the best model complexity for domain $d$ under $\phi$ is
\begin{align}
  J^{\dagger(d)}_n(\phi) = \argmin_{J \in \NaturalSet} \bar{E}_\phi(f_{n,J}; X^{(d)}, Y^{(d)}).
\end{align}
When, for instance, the head function is truly sparse, the minimal sufficient sparsity level governs the downstream convergence rate; a smaller best model complexity therefore implies faster downstream learning.

The meta-learner collects these criteria from every base-learner and selects $\phi$ to minimize the worst-case value across all observed domains:
\begin{align}
  \min_{\phi \in \Phi} \max_{d=1,\ldots,D} J^{\dagger(d)}_n(\phi), \label{eq:complexity-minimization}
\end{align}
where $\Phi$ is a class of feature extractors. Because $J^{\dagger(d)}_n(\phi)$ depends on the unknown downstream distribution, it must be estimated from pre-training samples in practice. This estimation step is precisely what connects complexity minimization to the data scaling law: larger pre-training samples yield more accurate complexity estimates, resulting in a smaller selected model complexity and therefore a faster downstream convergence rate across all domains in $\mathcal{P}$.

\section{Provable Scaling Laws via Complexity Minimization}

In this section, we present a concrete instantiation of the complexity minimization framework and establish the data scaling law for the resulting algorithm.

\paragraph{Downstream rate}
We employ a specific characterization of the downstream error rate $\bar{E}_\phi(f_{n,J}; X, Y)$ to build the concrete algorithm. Specifically, the downstream error is characterized by two quantities: the approximation error and the Minkowski–Bouligand dimension. This characterization is applicable to deep neural network based estimators~\autocite{schmidt-hieberNonparametricRegressionUsing2020,imaizumiGeneralizationBoundsDeep2023,suzukiAdaptivityDeepReLU2019,nishimuraMinimaxOptimalityConvolutional2023,hayakawaMinimaxOptimalitySuperiority2020,kohlerRateConvergenceFully2021,chenNonparametricRegressionLowdimensional2022} and hence covers modern machine learning algorithms.

We first introduce these two quantities and then present the characterization on the downstream error. Given $\phi \in \Phi$ and $(X, Y) \in \mathcal{P}^*$, the approximation error of the regression function under $\phi$ is
\begin{align}
  A_J(\phi; X, Y) = \min_{f_J \in \mathcal{F}_J} \Vab{f_J \circ \phi - \Mean[Y | X]}_{L^2(X)}^2.
\end{align}
The Minkowski–Bouligand dimension of a set $S$ with respect to a norm $\Vab*{\cdot}$ is defined as
\begin{align}
  d_{\mathrm{MB}}(S; \Vab{\cdot}) = \limsup_{\epsilon \to 0} \frac{\ln N(\epsilon, S, \Vab{\cdot})}{\ln(1/\epsilon)}.
\end{align}
Building from these definitions, we have the following theorem.
\begin{theorem}[Based on \textcite{schmidt-hieberNonparametricRegressionUsing2020,hayakawaMinimaxOptimalitySuperiority2020}]\label{thm:oracle-rate}
  Fix $(X, Y) \in \mathcal{P}^*$ and $\phi \in \Phi$. Let $\mathcal{F}_J$ be a sequence of increasing classes of functions $f: \RealSet^p \to [0,1]$ such that $d_{\mathrm{MB}}(\mathcal{F}_J; \Vab{\cdot}_{L^\infty}) \le J$ for any $J \in \NaturalSet$. Then, there is a learning algorithm $f_{n,J}$ such that with high probability,
  \begin{align}
    \bar{E}_\phi(f_{n,J}; X, Y) \lesssim A_J(\phi; X, Y) + \frac{J\ln(n)}{n}. \label{eq:oracle-rate}
  \end{align}
\end{theorem}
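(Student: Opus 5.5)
The algorithm I would take is the least-squares estimator over $\mathcal{F}_J$ composed with the fixed $\phi$, computed on a finite $\delta$-net:
\[
  f_{n,J} \in \argmin_{f \in \mathcal{N}_\delta} \frac{1}{n}\sum_{i=1}^n \bigl((f\circ\phi)(X_i) - Y_i\bigr)^2 ,
\]
where $\mathcal{N}_\delta$ is a minimal $\delta$-cover of $\mathcal{F}_J$ in $\Vab{\cdot}_{L^\infty}$ and $\delta = 1/n$. Discretizing costs at most $O(\delta)$ in approximation error, since all functions are $[0,1]$-valued. Because $\phi$ is fixed, the class $\{f\circ\phi : f\in\mathcal{F}_J\}$ has $L^\infty$ covering numbers no larger than those of $\mathcal{F}_J$. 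The problem is therefore standard nonparametric regression with a bounded regression function $\Mean[Y|X]\in[0,1]$ over a class of known metric entropy. I would follow the oracle-inequality route of \textcite{schmidt-hieberNonparametricRegressionUsing2020} (their Lemma 4), in the form used by \textcite{hayakawaMinimaxOptimalitySuperiority2020}.

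\textbf{Step 1 (entropy).} From $d_{\mathrm{MB}}(\mathcal{F}_J;\Vab{\cdot}_{L^\infty})\le J$, the definition of $\limsup$ gives, for any fixed $\eta>0$, a threshold $\epsilon_0$ such that $\ln N(\epsilon,\mathcal{F}_J,\Vab{\cdot}_{L^\infty}) \le (J+\eta)\ln(1/\epsilon)$ for $\epsilon<\epsilon_0$. With $\epsilon=1/n$ and $n$ large, this yields $\ln|\mathcal{N}_{1/n}| \lesssim J\ln n$.

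\textbf{Step 2 (oracle inequality).} I would prove a high-probability inequality for the empirical risk minimizer over the finite class $\mathcal{N}_\delta$. For each $g\in\mathcal{N}_\delta$, apply Bernstein's inequality to the excess-loss variable
\[
  \ell_g = \bigl(g\circ\phi(X)-Y\bigr)^2 - \bigl(\Mean[Y|X]-Y\bigr)^2 .
\]
Its mean is $\Vab{g\circ\phi-\Mean[Y|X]}_{L^2(X)}^2$, and under bounded (or sub-Gaussian, truncated) noise its variance is bounded by a constant times that mean. Bernstein plus a union bound over $\mathcal{N}_\delta$ then gives, with probability at least $1-e^{-t}$,
\[
  \bar E_\phi(f_{n,J}) \lesssim \min_{g\in\mathcal{N}_\delta}\bar E_\phi(g) + \frac{\ln|\mathcal{N}_\delta| + t}{n}.
\]
Combining with Step 1 and choosing $\delta=1/n$ and $t\asymp\ln n$ gives
\[
  \bar E_\phi(f_{n,J}) \lesssim A_J(\phi;X,Y) + \frac{1}{n} + \frac{J\ln n}{n},
\]
and the $1/n$ term is absorbed, which is \zcref{eq:oracle-rate}.

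\textbf{Main obstacle.} I expect the hardest part to be two things the statement leaves implicit. The first is controlling the noise $Y-\Mean[Y|X]$ when $Y$ is unbounded. Here I would assume sub-Gaussian noise and truncate at level $\sqrt{\ln n}$, which only costs logarithmic factors; these can be absorbed into the $\ln n$ already present, or the argument is carried out directly under boundedness. The second is that the $\limsup$ in $d_{\mathrm{MB}}$ gives the entropy bound only for small $\epsilon$, with an $\eta$-slack and a $J$-dependent threshold. I would handle this by requiring $n\ge n_0(J)$ and absorbing $J+\eta\le 2J$ into the constant hidden in $\lesssim$. I would then remark that the implied constant is uniform only if the covering bound holds non-asymptotically, which is presumably how the paper intends the assumption to be read.
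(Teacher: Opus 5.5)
Your argument is sound in outline, and it reaches the bound by a different route from the paper's. The paper gives no standalone proof of this theorem. It defers to Schmidt-Hieber's oracle inequality (his Lemma~4, stated in expectation for Gaussian noise), and its own version of the argument appears only later, as \zcref{lem:sieved-LS-oracle} plus the first step of the proof of \zcref{thm:downstream-learning-analysis}. There the estimator is the least-squares estimator over all of $\mathcal{F}_J$, and the net is used only in the analysis. The basic inequality leaves the noise in one linear cross term $\frac{2}{n}\sum_i ((\hat f_J - f_J)\circ\phi)(X_i)\,\epsilon_i$. That term is bounded conditionally on the design via the sub-Gaussian moment generating function (\zcref{lem:sub-gaussian-noise-concentration}) and a union bound over an $L^\infty$-cover at scale $c_1/n$, which gives an empirical-norm bound. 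Passing to $L^2(X)$ is a separate step: Bernstein for the bounded $h^2$, a union bound over the net, then AM--GM. You instead run ERM on the net itself and apply Bernstein once per net element to the excess loss $\ell_g$, using the variance-to-mean condition of the squared loss. This gives the population-norm oracle inequality in one pass, which is shorter and cleaner for the theorem's existence claim. The paper's route has the advantage of analyzing the actual sieved estimator. It also produces the empirical-norm control that its Lepski comparisons $\hat B_{J,J'}$ are built on. Your handling of the $\limsup$ in $d_{\mathrm{MB}}$ through a $J$-dependent threshold matches what the paper implicitly assumes when it writes $\ln N(c_1/m,\mathcal{F}_J,\|\cdot\|_{L^\infty})\le c_2 J\ln m$. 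You are also right that a noise condition such as \zcref{asm:sub-gauss-noise} has to be imported, since the statement lists none.

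One step fails as written: truncating the noise at level $\sqrt{\ln n}$ does not cost only an absorbable factor. In the bounded form of Bernstein's inequality, the range of $\ell_g$ multiplies $(\ln|\mathcal{N}_\delta|+t)/n\asymp J\ln n/n$. A range of order $\sqrt{\ln n}$ therefore produces $J(\ln n)^{3/2}/n$, which is not $\lesssim J\ln n/n$. Truncation is in fact unnecessary. Write $\ell_g=h^2-2h\xi$ with $h=g\circ\phi-\Mean[Y|X]$ and $\xi=Y-\Mean[Y|X]$. Taking the net inside $\mathcal{F}_J$ (or clipping to $[0,1]$) gives $|h|\le 1$, so $|h|^k\le h^2$ for $k\ge 2$. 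Conditional sub-Gaussianity gives $\Mean[|\xi|^k\mid X]\le (C\sigma\sqrt{k})^k$. Hence the centered $\ell_g$ satisfies Bernstein's moment condition with variance factor $v\asymp \Mean[h(X)^2]$ and scale $c\asymp 1+\sigma$. The moment form of Bernstein's inequality then gives exactly the deviation you need, with the same variance-to-mean structure. Alternatively, you can split off the cross term $-2h\xi$ and bound it conditionally on $X_1,\dots,X_n$ as the paper does; that bound has no range term at all.
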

The convergence rate induced from \zcref{thm:oracle-rate} is determined by the decreasing rate of the approximation error as $J$ grows. For example, if $A_J(\phi; X, Y) \asymp J^{-2\alpha}$ for some $\alpha > 0$, then the convergence rate is $\asymp (n/\ln n)^{-2\alpha/(2\alpha + 1)}$ with $J \asymp (n/\ln n)^{1/(2\alpha + 1)}$, derived by optimizing the right hand side of \zcref{eq:oracle-rate} for $J$. As the choice of $J$ depends on the unknown parameter $\alpha$, we refer to this rate as the oracle rate.

\paragraph{Ideal downstream rate}
We introduce the ideal decreasing rate of the downstream error. In our analysis, we focus only on the polynomial decreasing rate of $A_J$.
\begin{assumption}[Polynomial decreasing rate of $A_J$]\label{asm:poly-approx}
  There exists a functional $\alpha(\phi; X, Y) \in (0,\infty)$ for $(X, Y) \in \mathcal{P}^*$ and $\phi \in \Phi$ such that $A_J(\phi; X, Y) \asymp J^{-2\alpha(\phi; X, Y)}$. Additionally, there exists a constant $\bar{\alpha} < \infty$ such that $\alpha(\phi; X, Y) \le \bar{\alpha}$ for any $\phi \in \Phi$ and $(X, Y) \in \mathcal{P}^*$.
\end{assumption}
We write $\beta(\phi; X, Y) = 2\alpha(\phi; X, Y)/(2\alpha(\phi; X, Y)+1)$ and $\bar\beta = 2\bar{\alpha}/(2\bar\alpha + 1)$. From \zcref{thm:oracle-rate}, $\bar{E}_\phi(f_{n,J}; X, Y) \lesssim (n/\ln n)^{-\beta(\phi; X, Y)}$ with appropriately chosen $J$ for fixed $\phi$ and $(X, Y) \in \mathcal{P}$. The ideal convergence exponent for a given $\mathcal{P} \in \mathfrak{P}$ is therefore
\begin{align}
  \beta_{\mathcal{P}}^* \coloneqq \sup_{\phi \in \Phi}\inf_{(X, Y) \in \mathcal{P}}\beta(\phi; X, Y).
\end{align}

\paragraph{Technical assumptions}
For our main theorem, we need several technical assumptions. First, we introduce an assumption about the complexity of the class of feature extractors, $\Phi$.
\begin{assumption}[Complexity of $\Phi$]\label{asm:complex-phi}
  There exist $\beta_0 > 0$ with $\beta_0 + \bar\beta < 1$ and $\gamma \in (0,1]$ such that for any $\mathcal{P} \in \mathfrak{P}$, for all $m \ge 1$,
  \begin{align}
    \sum_{J \in [m] : J \le \frac{m}{\ln m}}N(V_{m,J}, \Phi, \rho_J) \lesssim m^{(m/\ln m)^{\beta_0}},
  \end{align}
  and
  \begin{align}
    \ln N(\ln^{-\gamma}m, \Phi, \rho_{\beta,\mathcal{P}}) \lesssim \ln m,
  \end{align}
  where $\rho_J(\phi, \phi') = \sup_{f \in \mathcal{F}_J}\Vab*{f\circ\phi - f\circ\phi'}_{L^\infty}$ and $\rho_{\beta,\mathcal{P}}(\phi, \phi') = \sup_{(X,Y) \in \mathcal{P}}|\beta(\phi; X, Y) - \beta(\phi'; X, Y)|$.
\end{assumption}
\zcref{asm:complex-phi} requires that two types of metric entropy conditions on $\Phi$ hold simultaneously. Constructing concrete families $\Phi$ satisfying \zcref{asm:complex-phi} is an important open problem.

Next, we introduce an assumption about the distribution over the domains.
\begin{assumption}[Uniform domain sampling]\label{asm:unif-domain}
  For each $\mathcal{P} \in \mathfrak{P}$, $(X, Y) \in \mathcal{P}$ is distributed by the domain distribution. There exists $\nu > 0$ such that for any $\delta > 0$ and any $\phi \in \Phi$, the domain-distributed $(X', Y')$ satisfies $|\beta(\phi; X, Y) - \beta(\phi; X', Y')| \le \delta$ with probability at least $C\delta^\nu$ for some universal constant $C > 0$.
\end{assumption}
Since the learner observes samples from only finitely many domains, these domains must collectively represent all of $\mathcal{P}$ for the meta-learned representation to generalize. \zcref{asm:unif-domain} imposes a near-uniform condition on the domain distribution, ensuring that the observed domains approximately cover $\mathcal{P}$ in terms of $\beta$ when $D$ is sufficiently large.

Lastly, we introduce a mild assumption about the noise in $Y$.
\begin{assumption}[Sub-gaussian noise]\label{asm:sub-gauss-noise}
  For any $(X, Y) \in \mathcal{P}^*$, conditioned on $X$, $Y - \Mean[Y|X]$ is sub-gaussian with variance proxy at most $\sigma^2$; namely, for all $\lambda \in \RealSet$, $\Mean[\exp(\lambda(Y - \Mean[Y|X]))|X] \le e^{\sigma^2\lambda^2/2}$ almost surely.
\end{assumption}
\zcref{asm:sub-gauss-noise} is a standard assumption employed in a broad literature~\autocite{schmidt-hieberNonparametricRegressionUsing2020,imaizumiGeneralizationBoundsDeep2023,suzukiAdaptivityDeepReLU2019,nishimuraMinimaxOptimalityConvolutional2023,hayakawaMinimaxOptimalitySuperiority2020,kohlerRateConvergenceFully2021,chenNonparametricRegressionLowdimensional2022}.

\paragraph{Main result}
We present our main theoretical result, exhibiting meta-learning and downstream algorithms that provably achieve the data scaling law.
\begin{theorem}[Main theorem]\label{thm:main}
  Assume \zcref{asm:poly-approx,asm:complex-phi,asm:sub-gauss-noise,asm:unif-domain}. There exist a meta-learning algorithm and a downstream learning algorithm such that, if $\ln^{1+\nu\gamma}m \lesssim D$ and $\ln D \lesssim \ln m$, then with probability at least $1 - O(m^{-1}) - O(n^{-1})$,
  \begin{align}
    \bar{E}_{\hat\phi}(\hat{f}_n) \lesssim \ab(\frac{n}{\ln(n)})^{-\beta^*_{\mathcal{P}}+O(\frac{1}{\ln^\gamma m})}.
  \end{align}
\end{theorem}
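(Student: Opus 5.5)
The plan is to build an explicit pair of algorithms and chain three estimates: a per-domain complexity estimate, a uniform-in-$\phi$ transfer to unseen domains, and an adaptive downstream bound. The proof goes through exponents: instead of estimating $J^{\dagger(d)}_n(\phi)$ directly, I estimate $\beta(\phi;X^{(d)},Y^{(d)})$, since $J^\dagger\asymp(n/\ln n)^{1/(2\alpha+1)}$ is monotone in $\alpha$.

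\textbf{Meta-learning step.} For each domain $d$ and each $\phi$ in a finite cover of $\Phi$, run Lepski's method on the $m$ meta-training points over the grid $J\in[m]$, $J\le m/\ln m$. Use the estimators $f_{m,J}$ of \zcref{thm:oracle-rate}, comparing them in empirical $L^2$ with thresholds $\asymp J\ln m/m$. The selected $\hat J^{(d)}(\phi)$ satisfies $\hat J\asymp (m/\ln m)^{1/(2\alpha+1)}$ up to constants. Set $\hat\beta^{(d)}(\phi)=1-\ln\hat J/\ln(m/\ln m)$, whose error is $O(1/\ln m)$ because the constants become additive errors after taking logarithms. Uniformity over the cover is handled as follows:
\begin{itemize}
\item \zcref{asm:sub-gauss-noise} gives exponential concentration for each $(\phi,J)$ pair.
\item The first entropy condition of \zcref{asm:complex-phi}, at scale $V_{m,J}$, bounds the union bound over $\phi$ and $J$ by $m^{(m/\ln m)^{\beta_0}}$.
\item Since $\beta_0+\bar\beta<1$, the entropy cost $(m/\ln m)^{\beta_0}\ln m$ is dominated by the effective sample size $m^{1-\bar\beta}$ available at the relevant $J$.
\end{itemize}
Together these give failure probability $O(m^{-1})$, with a further union over $D$ domains absorbed because $\ln D\lesssim\ln m$. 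Define
\[
\hat\phi\in\arg\max_{\phi}\min_{d\le D}\hat\beta^{(d)}(\phi),
\]
which is \zcref{eq:complexity-minimization} in exponent form.

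\textbf{Transfer to $\mathcal P$.} I need $\inf_{(X,Y)\in\mathcal P}\beta(\hat\phi;X,Y)\ge\beta^*_{\mathcal P}-O(\ln^{-\gamma}m)$, which follows from three steps.
\begin{itemize}
\item Take a $\ln^{-\gamma}m$-cover of $\Phi$ in $\rho_{\beta,\mathcal P}$. By the second condition of \zcref{asm:complex-phi} it has $m^{O(1)}$ elements.
\item Fix $\phi$ in the cover and $(X,Y)\in\mathcal P$. By \zcref{asm:unif-domain}, a sampled domain lies within $\delta=\ln^{-\gamma}m$ in $\beta$ with probability $\gtrsim\ln^{-\nu\gamma}m$. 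With $D\gtrsim\ln^{1+\nu\gamma}m$ independent domains, some domain is that close except with probability $\exp(-c\ln m)$.
\item This only gives closeness for the given $(X,Y)$. To make it uniform over the infimum over $\mathcal P$, I apply it to a near-minimizing $(X,Y)$ for each cover element, then union over the $m^{O(1)}$ cover elements.
\end{itemize}
Hence for all cover elements, $\min_d\beta(\phi;X^{(d)},Y^{(d)})$ and $\inf_{\mathcal P}\beta(\phi;\cdot)$ differ by $O(\ln^{-\gamma}m)$. Combining this with the $O(1/\ln m)$ estimation error and the cover approximation, the maximizer $\hat\phi$ loses at most $O(\ln^{-\gamma}m)$ against the supremum defining $\beta^*_{\mathcal P}$. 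Here $\gamma\le1$ is used so that $1/\ln m\le\ln^{-\gamma}m$.

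\textbf{Downstream.} Given $\hat\phi$, run Lepski's method (or the oracle choice over $J$) on the $n$ downstream points. By \zcref{thm:oracle-rate} and \zcref{asm:poly-approx}, this gives $\bar E_{\hat\phi}(\hat f_n)\lesssim (n/\ln n)^{-\beta(\hat\phi;X,Y)}$ with probability $1-O(n^{-1})$. Lepski achieves the oracle rate up to constants, and no extra logarithmic factor appears because \zcref{eq:oracle-rate} already carries $\ln n$. Substituting the transfer bound finishes the proof.

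\textbf{Main obstacle.} I expect the hardest step to be the uniform Lepski analysis over $\Phi$ in the meta step. The Lepski thresholds must be uniformly valid while $J$ ranges up to $m/\ln m$, and the discretization of $\Phi$ in $\rho_J$ perturbs every $f\circ\phi$ at once. I would first try to show that the bias-variance comparison at the Lepski-selected $J$ tolerates a sup-norm perturbation of $V_{m,J}\lesssim\sqrt{J\ln m/m}$. This is exactly where the specific scales $V_{m,J}$ must be chosen.
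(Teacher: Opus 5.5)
Your proposal is correct and follows essentially the same route as the paper. The shared steps are: per-domain Lepski estimation made uniform over $\Phi$ through the $V_{m,J}$-covers in $\rho_J$ and the condition $\beta_0+\bar\beta<1$; a min-max selection of $\hat\phi$; transfer from the $D$ observed domains to $\mathcal{P}$ via a $\ln^{-\gamma}m$-cover in $\rho_{\beta,\mathcal{P}}$ combined with the uniform domain sampling assumption; and a downstream Lepski step on the independent downstream sample. Your $\max_{\phi}\min_{d}\hat\beta^{(d)}(\phi)$ is only a monotone reparametrization of the paper's $\min_{\phi}\max_{d}\hat J^{(d)}(\phi)$, and the paper likewise passes to exponents by taking logarithms at an $O(1/\ln m)$ cost. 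The one thing to drop is the parenthetical ``oracle choice over $J$'' downstream: it depends on the unknown $\alpha(\hat\phi;X,Y)$, so it is not an admissible algorithm, and Lepski's method, as in the paper, is what delivers the adaptive rate there.
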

The error rate in \zcref{thm:main} exhibits the data scaling law: the rate at which the downstream error decreases in $n$ improves as the meta-learning sample size $m$ grows. To the best of our knowledge, this is the first end-to-end theoretical analysis establishing the data scaling law for pre-training. The proof constructs concrete meta-learning and downstream algorithms and establishes that both achieve the rate stated in \zcref{thm:main}. The analyses of our meta-learning and downstream algorithms for proving \zcref{thm:main} are found in \zcref{sec:analyses}.

\subsection{Base-learner}
Our meta-learning algorithm consists of interacting base-learner and meta-learner as described in \zcref{sec:complexity-minimization}, and in this subsection, we describe the concrete construction of the base-learner. In the complexity minimization framework, the base-learner assesses the best model complexity $J^{\dagger(d)}_n(\phi)$. Because this quantity depends on the unknown downstream distribution, each base-learner must estimate it from pre-training samples. Our idea in estimating the best model complexity is to employ the {\em Lepski's method}~\autocite{lepskiiProblemAdaptiveEstimation1991}, which selects the model complexity adaptively without knowledge of the underlying complexity parameters.

\paragraph{Lepski's method}
Lepski's method~\autocite{lepskiiProblemAdaptiveEstimation1991} is a powerful tool for adaptive model selection in nonparametric statistics and can automatically find the optimal model complexity from a sample without prior knowledge of the underlying complexity parameters. For example, it builds estimators for nonparametric regression within smooth function classes such as H\"older, Sobolev, and Besov spaces, achieving convergence rates determined by the smoothness parameter without prior knowledge of it~\autocite{lepskiiProblemAdaptiveEstimation1991,lepskiOptimalSpatialAdaptation1997}.

We now instantiate Lepski's method using the oracle rate from \zcref{thm:oracle-rate} with fixed $\phi$ and $(X, Y) \in \mathcal{P}^*$. The idea is to estimate $A_J(\phi; X, Y)$ and select $J$ so that the estimated $A_J$ and the term $\frac{J\ln(n)}{n}$ are balanced. Let $f^*_{Y|X,\phi,J}$ be the best regressor in $\mathcal{F}_J$ such that
\begin{align}
  \Vab{f^*_{Y|X,\phi,J} \circ \phi - \Mean[Y | X]}_{L^2(X)}^2 = \min_{f_J \in \mathcal{F}_J} \Vab{f_J \circ \phi - \Mean[Y | X]}_{L^2(X)}^2,
\end{align}
where $f^*_{Y|X,\phi,J}$ is an arbitrary one if the tie occurs. We omit the first and second subscripts to denote $f^*_J$ if $(X, Y)$ and $\phi$ are clear from the context. Then, the algorithm selects $J$ following the Lepski's rule, defined as
\begin{align}
  J^*_n(\phi; X, Y) = \min\Bab{J \in [m] : \forall J \le J' \le \frac{m}{\ln m}, \Vab{f^*_J \circ \phi - f^*_{J'} \circ \phi}_{L^2(X)}^2 \le \rho V_{n,J'}}, \label{eq:m-dim-x-y}
\end{align}
where $V_{n,J} = \frac{J\ln(n)}{n}$ is referred to as a variance term or a majorant, and $\rho > 0$ is a constant chosen as specified in the analyses. The intuition behind the Lepski's rule is that if $f^*_J$ sufficiently approximates the regression function, then increasing $J$ does not significantly deviate $f^*_J$ up to the variance.

\paragraph{Empirical estimation}
The base-learner for domain $d$ estimates the Lepski rule value $J^*_m(\phi; X^{(d)}, Y^{(d)})$ from \zcref{eq:m-dim-x-y} as a proxy for the best model complexity $J^{\dagger(d)}_n(\phi)$ from \zcref{eq:complexity-minimization}. For each $J$, a sieved least-squares estimator is computed, yielding the regressor
\begin{align}
  \hat{f}^{(d)}_{m,J} = \argmin_{f_J \in \mathcal{F}_J} \frac{1}{m} \sum_{i=1}^{m} \ab((f_J \circ \phi)(X_i^{(d)}) - Y_i^{(d)})^2.
\end{align}
Then, the estimated complexity is obtained as
\begin{multline}
  \hat{J}^{(d)}(\phi) = \min\Bab[Bigg]{J \in [m] : \forall J \le J' \le \frac{m}{\ln m}, \\ \frac{1}{m}\sum_{i=1}^m\ab((\hat{f}^{(d)}_{m,J} \circ \phi)\ab(X^{(d)}_i) - (\hat{f}^{(d)}_{m,J'} \circ \phi)\ab(X^{(d)}_i))^2 \le \rho V_{m,J'}}.
\end{multline}

\subsection{Meta-learner}\label{sec:meta-learner}
The meta-learner collects the estimated complexity $\hat{J}^{(d)}(\phi)$ from all $D$ base-learners and selects $\phi$ to minimize the worst-case estimated complexity, forming the empirical counterpart of \zcref{eq:complexity-minimization}. Specifically, the estimated feature extractor is defined as
\begin{align}
  \hat\phi = \argmin_{\phi \in \Phi} \max_{d \in [D]} \hat{J}^{(d)}(\phi). \label{eq:meta-feature-extractor}
\end{align}
Minimizing the worst case ensures that $\hat\phi$ simultaneously reduces the estimated downstream complexity across all observed domains, yielding a feature extractor whose downstream performance generalizes uniformly over $\mathcal{P}$.

\subsection{Downstream Algorithm}\label{sec:downstream-algorithm}
At downstream time, we again carry out Lepski's method to construct the learned regressor. Specifically, define
\begin{align}
  \hat{f}_{n,J} = \argmin_{f_J \in \mathcal{F}_J} \frac{1}{n} \sum_{i=1}^{n} \ab((f_J \circ \hat\phi)(X_i) - Y_i)^2.
\end{align}
Then, the complexity is estimated as
\begin{align}
  \hat{J}_n = \min\Bab{J \in [n] : \forall J \le J' \le \frac{n}{\ln n}, \frac{1}{n}\sum_{i=1}^n\ab((\hat{f}_{n,J} \circ \hat\phi)\ab(X_i) - (\hat{f}_{n,J'} \circ \hat\phi)\ab(X_i))^2 \le \rho V_{n,J'}}.
\end{align}
Consequently, the learned regressor is given by $\hat{f}_n = \hat{f}_{n,\hat{J}_n}$.

\section{Experiments}

We empirically validate the effect of complexity minimization by adding a complexity regularization term to meta-learning algorithms.
The experiments demonstrate how the pre-training size $m$ affects the downstream scaling behavior in terms of the fine-tuning size $n$.
Four representative meta-learning algorithms are adopted, which are discussed in \zcref{sec:related_work}: first- and second-order MAML~\autocite{finnModelAgnosticMetaLearningFast2017}, Prototypical Networks~\autocite{snell2017prototypical}, and R2-D2~\autocite{bertinetto2018metalearning}.
The meta-losses of these algorithms are augmented with a spectral norm-based regularizer on the model parameters.
Further experimental details are described in \zcref{app:experiments}.

\begin{figure}[t]
  \centering
  \includegraphics[width=\linewidth]{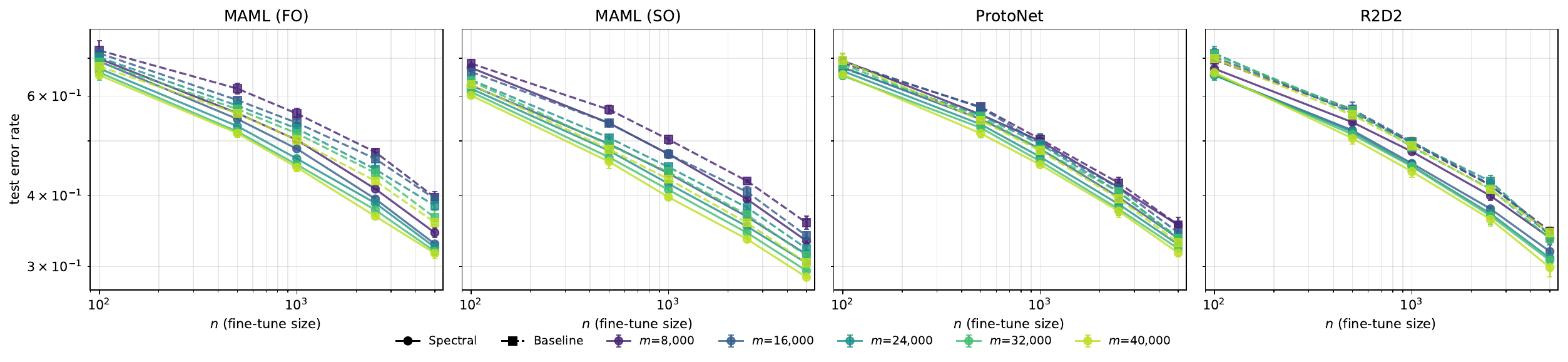}
  \caption{
    \textbf{Regularizing model complexity (parameter spectral norms) improves downstream sample efficiency.}
    Test error rate on CIFAR-10 (log scale) vs.\ fine-tuning size ($n$, log scale) with different pre-training size $m$ for four meta-learning algorithms trained on Mini-ImageNet in the 5-way 1-shot setting with and without model complexity regularization.
  }
  \label{fig:sample_efficiency_mini_spectral}
\end{figure}
\zcref{fig:sample_efficiency_mini_spectral} reports the downstream test error rates on CIFAR-10~\autocite{cifar2009} for a CNN, whose feature extractor is pre-trained with each meta-learning algorithm on Mini-ImageNet~\autocite{ravi2017optimization} with $m$ samples in the 5-way 1-shot setting and then fine-tuned on a subset of the CIFAR-10 training dataset with $n$ samples.
Regularizing the spectral norm of the parameters as a measure of model complexity yields a clear improvement in downstream performance.
Additional experiments in \zcref{app:experiments} show that complexity minimization works on other datasets and meta-learning settings, and regularizing other norms, such as the $\ell_1$ norm, also improves downstream sample efficiency.
Together, these algorithm-agnostic results provide empirical support for our theoretical claims.

\section{Related Work}\label{sec:related_work}
In this section, we briefly review prior works closely related to our study. 
A more comprehensive literature survey is provided in Appendix~\ref{sec:related_work_appendix}.

\paragraph{Meta-Learning Methodologies.}
Meta-learning aims to acquire a learning procedure that can rapidly adapt to unseen tasks from limited samples by exploiting experiences from a collection of past tasks drawn from a task distribution~\autocite{hospedales2021meta,wang2020generalizing}.
A standard taxonomy divides meta-learning methods into metric-based, optimization-based, and model-based approaches.
%
\textbf{Metric-based methods} classify queries by proximity, attention, or comparison in an embedding space.
Representative examples include Matching Networks, which introduced one-shot classification via attention over a support set and formalized episodic training~\autocite{vinyals2016matching}; Prototypical Networks, which classify queries by distances to class-wise mean embeddings and provide a clear view of meta-representation learning~\autocite{snell2017prototypical}; and Relation Networks, which learn the comparison function itself using a neural network~\autocite{sung2018learning}.
%
\textbf{Optimization-based methods} learn an initialization, update rule, or inner-loop adaptation mechanism such that a few optimization steps on a new task yield strong performance.
MAML established a model-agnostic framework based on inner-loop gradient descent and outer-loop optimization of post-adaptation performance~\autocite{finnModelAgnosticMetaLearningFast2017}.
%
Meta-SGD further learns the initialization, update directions, and learning rates~\autocite{li2017meta}.
R2-D2 replaces iterative inner-loop adaptation with a differentiable closed-form ridge-regression base learner on top of learned embeddings, occupying an intermediate position between metric-based classifiers and gradient-based adaptation methods~\autocite{bertinetto2018metalearning}.
%
%
\textbf{Model-based methods} implement adaptation within the network architecture itself, using memory, hypernetworks, or learned optimizers.
Memory-Augmented Neural Networks use external memory for rapid one/few-shot adaptation~\autocite{santoro2016meta}, while learning-to-learn approaches such as Optimization as a Model learn update rules using recurrent architectures~\autocite{ravi2017optimization,hochreiter2001learning,andrychowicz2016learning}.
SNAIL further combines temporal convolutions and attention as a general-purpose meta-learner across supervised and reinforcement learning domains~\autocite{mishra2017simple}.


\paragraph{Meta-Representation Learning: Sharing Representations Across Tasks}

Meta-learning is closely related to transfer learning, since both transfer information from previous tasks to unseen ones~\autocite{pan2009survey}.
Representation learning motivates the acquisition of shared latent features that facilitate learning across tasks~\autocite{bengio2013representation}, and meta-representation learning specializes such shared representations for few-shot task adaptation with statistical and computational efficiency.
Classically, Baxter's inductive bias learning model formalized meta-generalization as learning a good hypothesis space from multiple tasks sampled from a task environment~\autocite{baxter2000model}.
Subsequent work on Multi-Task Representation Learning established generalization bounds showing the benefit of learning low-dimensional dictionaries or feature maps shared across tasks~\autocite{maurer2016benefit,maurer2013sparse}.
More recent theory studies sample-efficient estimation and transfer of shared low-dimensional linear representations across linear regression tasks~\autocite{tripuraneni2021provable}, as well as the role of overparameterization in enabling few-shot adaptation with large-scale models~\autocite{sun2021towards}.


\paragraph{Learning Theory of (Deep) Meta-Learning}
Learning theory for meta-learning must handle a dual-sampling structure: tasks are sampled from a task distribution, and data points are sampled within each task~\autocite{baxter2000model,hospedales2021meta}.
Recent studies typically decompose excess risk into statistical estimation, optimization, and approximation errors, often through the meta-generalization gap between the expected risk on unseen tasks and the empirical meta-objective~\autocite{rezazadeh2022unified,wang2024stability}.
This line of work has clarified how representation sharing and the number of inner-loop adaptation steps affect sample efficiency and stability~\autocite{huang2022provable,chen2022understanding}.
Several theoretical frameworks have been developed.
Algorithmic stability, including meta-stability for both inner and outer loops, yields realistic bounds for gradient-based and non-convex meta-learning algorithms~\autocite{wang2024stability,bousquet2002stability}.
PAC-Bayes theory introduces hierarchical meta-priors and task-specific posteriors to obtain bounds depending on both the number of tasks and within-task sample size~\autocite{pentina2014pac,amit2018meta,rezazadeh2022unified}.
Information-theoretic analyses bound generalization via mutual information between algorithm outputs and data~\autocite{chen2021generalization}.
Uniform convergence remains a classical approach, but its bounds are often loose for deep learning and meta-learning, motivating the recent shift toward data-dependent analyses~\autocite{nagarajan2019uniform}.
Crucially, these results may not explain the data scaling law for pre-training, as their error rates with respect to the downstream sample size are independent of the pre-training sample size.


\section{Conclusion}\label{sec:conclusion}
We introduced complexity minimization, a meta representation learning framework that provably achieves the data scaling law for pre-training. The framework instructs each base-learner to estimate the best model complexity for its domain via Lepski's method, and the meta-learner selects the feature extractor that minimizes the worst-case complexity across all observed domains. Our end-to-end theoretical analysis establishes that the downstream excess error decays faster with the downstream sample size $n$ as the meta-training size $m$ grows, formally capturing the data scaling law. Empirically, augmenting standard meta-learning algorithms with a norm-based complexity regularizer consistently improves downstream sample efficiency across multiple algorithms and datasets, supporting the theoretical claims.

\paragraph{Limitations and broader impacts}
While the complexity assumptions on $\Phi$ in \zcref{asm:complex-phi} are reasonable, exhibiting a concrete class $\Phi$ that satisfies them remains open, and constructing such feature families is an important direction for future work. Our theoretical analysis likewise focuses on regression; extending it to classification and structured prediction remains open. We build on standard meta-learning and do not propose a qualitatively new paradigm with novel direct societal risks. As in broader work on large-scale pre-training and foundation models, familiar considerations regarding computational cost and equitable access to data and compute nonetheless apply.

\subsubsection*{Acknowledgments}
This work was partly supported by JSPS KAKENHI Grant Numbers JP26K02874 and JP23H00483 to K.F., JP23K28146, JP24K20836 and 25K03086 to K.M, and JST BOOST Grant Number JPMJBY24G2 to R.H.

\section*{References}
\printbibliography[heading=none]

\appendix

\section{Analyses}\label{sec:analyses}

\paragraph{Additional notations}
We fix a common probability space $(\Omega, \mathcal{F}, \mu)$ and identify all random variables with measurable maps on it. For a random variable $X \colon \Omega \to \RealSet$, we set $\Vab*{X}_p = (\Mean[|X|^p])^{1/p}$ for $p \in [1,\infty)$ and $\Vab*{X}_{\infty} = \inf\Bab*{C>0 : \p[|X|\le C]=1}$. For a measurable function $f\colon\mathcal{X}\to\RealSet$ and a random variable $X$ on $\mathcal{X}$, we set $\Vab*{f}_{L^\infty(X)} = \inf\Bab*{C>0:\p[|f(X)|\le C]=1}$. For a set $A$, $|A|$ denotes its cardinality; $\ind$ denotes the indicator function. For real values $a,b$, define $a \lor b = \max\Bab{a,b}$ and $a \land b = \min\Bab{a,b}$.

We write $\alpha_{\mathcal{P}}^* = \sup_{\phi \in \Phi}\inf_{(X,Y)\in\mathcal{P}}\alpha(\phi; X,Y)$, $\alpha_{\mathcal{P}}(\phi) = \inf_{(X,Y)\in\mathcal{P}}\alpha(\phi; X,Y)$, and $\beta_{\mathcal{P}}(\phi) = \inf_{(X,Y)\in\mathcal{P}}\beta(\phi; X,Y)$.

Given a function $h:\mathcal{X}\to\RealSet$ and $k \in \NaturalSet$, define the empirical $L^p(X)$ norm of $h$ for $p \in [1, \infty)$ and a random variable $X \in \mathcal{X}$ by
\begin{align}
  \Vab{h}_{L^p_k(X)}^p = \frac{1}{k}\sum_{i=1}^k h^p(X_i),
\end{align}
where $X_1, \ldots, X_k$ are i.i.d.\ copies of $X$. Given two random variables $X, Y \in \RealSet$ and $k \in \NaturalSet$, we use the empirical inner product and the empirical $L^p$ norm of $X$ for $p \in [1,\infty]$ defined as
\begin{align}
  \ab<X, Y>_{k} = \frac{1}{k}\sum_{i=1}^k X_i Y_i \mbox{ and } \Vab{X}_{L^p_k}^p = \frac{1}{k}\sum_{i=1}^kX_i^p,
\end{align}
where $(X_1, Y_1), \ldots, (X_k, Y_k)$ are i.i.d.\ copies of $(X, Y)$. Let
\begin{align}
  B_{J,J'}(\phi;X,Y) = \Vab{f^*_J\circ\phi - f^*_{J'}\circ\phi}^2_{L^2(X)} \mbox{ and } \hat{B}_{J,J'}(\phi;X,Y) = \Vab{\hat{f}_J\circ\phi - \hat{f}_{J'}\circ\phi}^2_{L^2_m(X)},
\end{align}
where we refer to these quantities as bias terms. Given a feature extractor $\phi$ and $k \in \NaturalSet$, the ideal and empirical best complexities are defined as
\begin{align}
  J^*_k(\phi) = \sup_{(X,Y) \in \mathcal{P}}J^*_k(\phi; X, Y) \mbox{ and } \hat{J}_k(\phi) = \max_{d \in [D]}\hat{J}_k(\phi; X^{(d)}, Y^{(d)}).
\end{align}
Here, $\hat{J}_k(\phi; X, Y)$ (two arguments) denotes the empirical Lepski complexity for a specific domain $(X,Y)$ and sample size $k$, as in \zcref{thm:complexity-bounds}, while $\hat{J}_k(\phi)$ (one argument) is its worst-case value over the observed domains. The ideal feature extractor $\phi^*$ is such that $J^*_m(\phi^*) = \inf_{\phi \in \Phi} J^*_m(\phi)$ and the estimated extractor $\hat\phi$ satisfies $\hat{J}_m(\hat\phi) = \min_{\phi \in \Phi}\hat{J}_m(\phi)$, consistent with \zcref{eq:meta-feature-extractor}. We define the empirical counterpart of $f^*_{Y|X,\phi,J}$ as
\begin{align}
  \hat{f}_{Y|X,\phi,k,J} = \argmin_{f_J \in \mathcal{F}_J}\frac{1}{k}\sum_{i=1}^k\ab((f_J\circ\phi)(X_i) - Y_i)^2,
\end{align}
where $(X_1,Y_1), \ldots, (X_k, Y_k)$ are i.i.d.\ copies of $(X, Y)$. We also omit the first and second subscripts in $\hat{f}_{Y|X,\phi,k,J}$ if $(X,Y)$ and $\phi$ are clear from the context.

\paragraph{Meta-learning and downstream learning analyses}
We analyze the proposed meta-learning and downstream learning algorithms and establish the performance guarantees. Specifically, we show the following two theorems corresponding to meta-learning and downstream learning, respectively.
\begin{theorem}[Meta-learning performance guarantee]\label{thm:meta-learning-analysis}
  Assume \zcref{asm:poly-approx,asm:complex-phi,asm:unif-domain,asm:sub-gauss-noise}. Suppose that $\ln^{1+\nu\gamma} m \lesssim D$ and $\ln D \lesssim \ln m$. For any $\mathcal{P} \in \mathfrak{P}$, the feature extractor $\hat\phi$ in \zcref{eq:meta-feature-extractor} satisfies that there exists a constant $C > 0$ such that with probability at least $1 - O(m^{-1})$,
  \begin{align}
    \beta_{\mathcal{P}}(\hat\phi) \ge \beta_{\mathcal{P}}^* - \frac{C}{\ln^\gamma m}.
  \end{align}
\end{theorem}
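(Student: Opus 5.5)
The plan is to relate the Lepski complexity to the approximation exponent. Then two uniform concentration statements sandwich the empirical minimizer $\hat\phi$ between the ideal quantities for $\phi^*$, and the resulting comparison of complexities is converted into a comparison of exponents.

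\emph{Step 1: ideal Lepski complexity versus $\alpha$.} For fixed $\phi$ and $(X,Y)\in\mathcal{P}^*$, I will show from \zcref{asm:poly-approx} that
\begin{align}
  J^*_m(\phi;X,Y) \asymp (m/\ln m)^{1/(2\alpha(\phi;X,Y)+1)}
  = (m/\ln m)^{1-\beta(\phi;X,Y)}.
\end{align}
The upper bound uses $B_{J,J'}\le 2A_J+2A_{J'}\lesssim J^{-2\alpha}$. This is at most $\rho V_{m,J'}$ once $J\gtrsim (m/\ln m)^{1/(2\alpha+1)}$, using that $J'\ge J$ and $\rho$ is large.

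The lower bound uses the two-sided $\asymp$ of $A_J$. For a smaller $J$, take $J'$ near $m/\ln m$. Then $B_{J,J'}\ge (\sqrt{A_J}-\sqrt{A_{J'}})^2\gtrsim J^{-2\alpha}$, which exceeds $\rho V_{m,J'}$. I expect the constants to make the lower bound hold only up to multiplicative factors, which is harmless after taking $\ln$.

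The consequence is the key translation
\begin{align}
  \ln J^*_m(\phi;X,Y)/\ln(m/\ln m) = 1-\beta(\phi;X,Y) + O(1/\ln m).
\end{align}
So an additive $O(1)$ factor in $J$ becomes an additive $O(1/\ln m)$ in the exponent. Since the map is monotone, taking the sup over $\mathcal{P}$ gives that $J^*_m(\phi)$ corresponds to $1-\beta_{\mathcal P}(\phi)$.

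\emph{Step 2: empirical versus ideal, uniformly in $\phi$ and $d$.} I will show that with probability $1-O(m^{-1})$, for all $\phi\in\Phi$ and $d\in[D]$,
\begin{align}
  c\,J^*_m(\phi;X^{(d)},Y^{(d)}) \le \hat J^{(d)}(\phi) \le C\,J^*_m(\phi;X^{(d)},Y^{(d)}),
\end{align}
possibly with the ideal rule run at thresholds $\rho/4$ and $4\rho$. This is the standard Lepski argument. Under \zcref{asm:sub-gauss-noise} and the entropy bound $d_{\mathrm{MB}}(\mathcal{F}_J)\le J$, the sieved least squares estimator gives $\|\hat f_J\circ\phi-f^*_J\circ\phi\|^2_{L^2_m}\lesssim A_J+V_{m,J}$ with high probability. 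In addition, empirical and population norms agree up to $V_{m,J}$, and the triangle inequality transfers the threshold tests.

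For uniformity over $\phi$, I will discretize $\Phi$ by $\rho_J$-nets at scale $V_{m,J}$ for each $J\le m/\ln m$. The first sum in \zcref{asm:complex-phi} bounds the total net size by $m^{(m/\ln m)^{\beta_0}}$. Its logarithm is $(m/\ln m)^{\beta_0}\ln m$, which must be absorbed into the variance. The condition $\beta_0+\bar\beta<1$ is what makes this work: the relevant $J$ are at least of order $(m/\ln m)^{1-\bar\beta}$, so $J\ln m$ dominates $(m/\ln m)^{\beta_0}\ln m$. A union bound over $d$ then costs $\ln D\lesssim\ln m$.

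I expect this step to be the main obstacle. The net discretization must interact correctly with $\argmin$ estimators, and the absorption needs care for small $J$. For small $J$ I would argue that the Lepski test only matters near $J^*$.

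\emph{Step 3: observed domains versus $\mathcal{P}$.} Let $\delta=\ln^{-\gamma}m$. Take a $\delta$-net of $\Phi$ in $\rho_{\beta,\mathcal P}$, which has size $m^{O(1)}$ by \zcref{asm:complex-phi}. For each net point $\phi$, \zcref{asm:unif-domain} says each domain lands within $\delta$ of the worst-case $\beta_{\mathcal P}(\phi)$ with probability at least $C\delta^\nu$. Hence all $D$ domains miss with probability at most
\begin{align}
  (1-C\delta^\nu)^D\le \exp(-CD\ln^{-\nu\gamma}m)\le m^{-2},
\end{align}
using $D\gtrsim\ln^{1+\nu\gamma}m$. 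A union bound over the net then gives, for all $\phi$, that $\min_d\beta(\phi;X^{(d)},Y^{(d)})\le\beta_{\mathcal P}(\phi)+3\delta$.

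\emph{Step 4: combine.} Take $\phi^*$ to be near-optimal, with $\beta_{\mathcal P}(\phi^*)\ge\beta^*_{\mathcal P}-\delta$. By Steps 1 and 2, and since the observed domains lie in $\mathcal P$,
\begin{align}
  \ln\hat J_m(\hat\phi)\le\ln\hat J_m(\phi^*)\le(1-\beta^*_{\mathcal P}+\delta)\ln(m/\ln m)+O(1).
\end{align}
Conversely, Steps 2 and 1 followed by Step 3 give
\begin{align}
  \ln\hat J_m(\hat\phi)\ge\bigl(1-\min_d\beta(\hat\phi;X^{(d)},Y^{(d)})\bigr)\ln(m/\ln m)-O(1)
  \ge(1-\beta_{\mathcal P}(\hat\phi)-3\delta)\ln(m/\ln m)-O(1).
\end{align}
Dividing by $\ln(m/\ln m)$ yields $\beta_{\mathcal P}(\hat\phi)\ge\beta^*_{\mathcal P}-4\delta-O(1/\ln m)$, which is the claim since $\gamma\le1$.
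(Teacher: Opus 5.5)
Your architecture matches the paper's proof. Minimality of $\hat\phi$ gives $\max_d\hat J^{(d)}(\hat\phi)\le\max_d\hat J^{(d)}(\phi^*)$. Both sides are then compared with the ideal Lepski index: the paper uses \zcref{thm:complexity-bounds} for the fixed $\phi^*$, and \zcref{thm:complexity-bounds-learned} for $\hat\phi$. The latter rests on $\rho_J$-nets at scale $V_{m,J}$, whose total size is controlled by the first condition of \zcref{asm:complex-phi} together with $\beta_0<1-\bar\beta$. Next, \zcref{lem:population-lepski-condition} turns $J^*_m\asymp(m/\ln m)^{1-\beta}$ into an $O(1/\ln m)$ error in the exponent. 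Finally, a $\ln^{-\gamma}m$-net in $\rho_{\beta,\mathcal P}$ combined with \zcref{asm:unif-domain} passes from $\min_d\beta(\hat\phi;X^{(d)},Y^{(d)})$ to $\beta_{\mathcal P}(\hat\phi)$. Two of your choices are tidier than the paper's: taking $\phi^*$ as a $\delta$-near-maximizer of $\beta_{\mathcal P}$, and setting $\delta=\ln^{-\gamma}m$ directly in Step~3. (The paper defines $\phi^*$ through $J^*_m$, and writes $(t/CD)^\nu$ where $(t/CD)^{1/\nu}$ is meant.)

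However, the lower bound in Step~1 fails as written. With $J'$ near $m/\ln m$ you have $V_{m,J'}\approx 1$, so the threshold $\rho V_{m,J'}\approx\rho$ exceeds every possible value of $B_{J,J'}\le 1$. This is because heads take values in $[0,1]$ and $\rho$ is large, so the ideal test can never be violated at such a $J'$. The witness $J'$ must instead be a constant multiple of $J$. Write $c_\ell J^{-2\alpha}\le A_J\le c_uJ^{-2\alpha}$ and take $J'=\lceil KJ\rceil$ with $c_uK^{-2\alpha}\le c_\ell/4$. Then $B_{J,J'}\ge(\sqrt{A_J}-\sqrt{A_{J'}})^2\ge c_\ell J^{-2\alpha}/4$, while $\rho V_{m,J'}\le 2\rho KJ\ln m/m$. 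Hence the test fails for all $J\le c(m/\ln m)^{1/(2\alpha+1)}$ once $c$ is small enough; this is \zcref{lem:population-bias-approx} used with $J'\ge CJ$.

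A related caution applies to Step~2. You cannot get the lower half of the sandwich by transferring the test through the triangle inequality around $f^*_J$. The reason is that $\|\hat f_J\circ\phi-f^*_J\circ\phi\|^2_{L^2_m}$ is only $\lesssim A_J+V_{m,J}$, and this swamps $B_{J,J'}\approx A_J$ when $J\ll J^*$. Argue as the paper does instead. Use the reverse triangle inequality around $\Mean[Y|X]$ together with the lower bound $\|\hat f_J\circ\phi-\Mean[Y|X]\|^2_{L^2_m}\gtrsim A_J-V_{m,J}-t/m$ from \zcref{lem:sieved-LS-lower}, made uniform over the net, at the pairs $(J,J^*)$ with $J<cJ^*$. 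For the upper half, it suffices to check the test at $J_{\mathrm{up}}\asymp J^*$, where $A_{J_{\mathrm{up}}}\lesssim V_{m,J'}$. With these repairs your proof goes through.
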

\begin{theorem}[Downstream learning performance guarantee]\label{thm:downstream-learning-analysis}
  Let $\mathcal{P} \in \mathfrak{P}$ be the target distribution. Suppose that the learned feature extractor $\hat\phi$ is independent of the downstream sample. Assume \zcref{asm:poly-approx}. Then, the learned regressor $\hat{f}_n$ in \zcref{sec:downstream-algorithm} satisfies that with probability at least $1 - n^{-1}$,
  \begin{align}
    \bar{E}_{\hat{\phi}}(\hat{f}_n) \lesssim (n/\ln n)^{-\beta_{\mathcal{P}}(\hat\phi)}.
  \end{align}
\end{theorem}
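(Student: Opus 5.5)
The plan is the standard Lepski oracle-inequality argument, run conditionally on $\hat\phi$. Because $\hat\phi$ is independent of the downstream sample, we condition on it and treat it as a fixed feature extractor. Fix the target $(X,Y)\in\mathcal{P}$ and write $f_0=\Mean[Y|X]$, $\alpha=\alpha(\hat\phi;X,Y)$ and $\beta=\beta(\hat\phi;X,Y)$. Since $\beta\ge\beta_{\mathcal{P}}(\hat\phi)$ and $n/\ln n\ge 1$, it suffices to prove $\bar{E}_{\hat\phi}(\hat f_n)\lesssim (n/\ln n)^{-\beta}$. Define the balancing index
\begin{align}
  \bar J=\min\Bab{J\in\NaturalSet : A_J(\hat\phi;X,Y)\le V_{n,J}}.
\end{align}
By \zcref{asm:poly-approx}, $\bar J\asymp (n/\ln n)^{1/(2\alpha+1)}$, hence $A_{\bar J}+V_{n,\bar J}\asymp (n/\ln n)^{-\beta}$. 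Since $\alpha>0$, we also have $\bar J\le n/\ln n$ for $n$ large, so $\bar J$ lies in the range scanned by the downstream rule.

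\textbf{Step 1 (uniform concentration).} Using $d_{\mathrm{MB}}(\mathcal{F}_J)\le J$, we get $\ln N(\epsilon,\mathcal{F}_J,\Vab{\cdot}_{L^\infty})\lesssim J\ln(1/\epsilon)$. From this, as in the proof of \zcref{thm:oracle-rate} (localized empirical-process bounds for bounded least squares with sub-gaussian noise, \zcref{asm:sub-gauss-noise}), we will establish an event $\mathcal{E}$ of probability at least $1-n^{-1}$. On $\mathcal{E}$, simultaneously for all $J\le n/\ln n$:
\begin{align}
  \Vab{\hat f_{n,J}\circ\hat\phi-f_0}^2_{L^2(X)}+\Vab{\hat f_{n,J}\circ\hat\phi-f^*_J\circ\hat\phi}^2_{L^2_n(X)}\le C_1\ab(A_J+V_{n,J}).
\end{align}
In addition, on $\mathcal{E}$, for all $g\in\mathcal{F}_J-\mathcal{F}_{J'}$ with $J,J'\le n/\ln n$, the two norms are comparable:
\begin{align}
  \Vab{g\circ\hat\phi}^2_{L^2(X)}\le 2\Vab{g\circ\hat\phi}^2_{L^2_n(X)}+C_2V_{n,J\lor J'}.
\end{align}
The union bound over at most $n$ values of $J$ (and $n^2$ pairs) costs only a factor $\ln n$ inside the deviation. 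This factor is absorbed into $V_{n,J}=J\ln n/n$ by enlarging constants.

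\textbf{Step 2 ($\hat J_n\le\bar J$).} On $\mathcal{E}$, take any $J'$ with $\bar J\le J'\le n/\ln n$. By the triangle inequality through $f^*_{\bar J}$, $f_0$ and $f^*_{J'}$, together with the empirical counterpart of $\Vab{f^*_J\circ\hat\phi-f_0}^2=A_J$, we get
\begin{align}
  \Vab{\hat f_{n,\bar J}\circ\hat\phi-\hat f_{n,J'}\circ\hat\phi}^2_{L^2_n(X)}\lesssim A_{\bar J}+V_{n,\bar J}+A_{J'}+V_{n,J'}\le C_3V_{n,J'}.
\end{align}
The last inequality holds because $A_J$ is nonincreasing, $V_{n,J}$ is increasing, and $A_{\bar J}\le V_{n,\bar J}$. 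The empirical norm of the fixed function $f^*_J\circ\hat\phi-f_0$ is controlled by a Bernstein bound included in $\mathcal{E}$. Choosing $\rho\ge C_3$, $\bar J$ passes the Lepski test, so $\hat J_n\le\bar J$.

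\textbf{Step 3 (error bound).} Since $\hat J_n\le\bar J\le n/\ln n$, the defining condition of $\hat J_n$ applied with $J'=\bar J$ gives
\begin{align}
  \Vab{\hat f_{n,\hat J_n}\circ\hat\phi-\hat f_{n,\bar J}\circ\hat\phi}^2_{L^2_n(X)}\le\rho V_{n,\bar J}.
\end{align}
The norm comparison from Step 1 turns this into a population bound of order $V_{n,\bar J}$. We then bound
\begin{align}
  \bar E_{\hat\phi}(\hat f_n)\le 2\Vab{\hat f_{n,\hat J_n}\circ\hat\phi-\hat f_{n,\bar J}\circ\hat\phi}^2_{L^2(X)}+2\Vab{\hat f_{n,\bar J}\circ\hat\phi-f_0}^2_{L^2(X)}\lesssim V_{n,\bar J}+A_{\bar J},
\end{align}
and the right-hand side is $\asymp (n/\ln n)^{-\beta}\le (n/\ln n)^{-\beta_{\mathcal{P}}(\hat\phi)}$.

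The main obstacle is Step 1. It requires a concentration statement that holds uniformly over all $J\le n/\ln n$ and over differences of two classes, with the empirical-to-population norm comparison holding at the scale $V_{n,J}$ and not at a slower $\sqrt{\cdot}$ scale. The model is misspecified ($f_0\notin\mathcal{F}_J$), so the additive approximation term must be carried through carefully. I would first try a localized Talagrand/Bernstein argument over $L^\infty$-covers of $\mathcal{F}_J$, using the uniform boundedness in $[0,1]$ and the sub-gaussian noise, with the confidence parameter set to $n^{-3}$ per $J$ so that the union bound still yields probability at least $1-n^{-1}$.
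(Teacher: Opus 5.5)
Your proposal is correct and uses essentially the same Lepski argument as the paper. The paper also shows that an oracle index passes the empirical test with high probability, so $\hat J_n$ cannot exceed it, and then combines the test at that index with the sieved least-squares oracle bound there.

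The differences are minor. You use the balancing index $\bar J=\min\{J : A_J\le V_{n,J}\}$ instead of the population Lepski index $J^*_n(\hat\phi;X,Y)$, so $A_{\bar J}\le V_{n,\bar J}$ holds by definition and \zcref{lem:population-lepski-condition} is not needed. You also make the uniformity over $J,J'$ explicit in a single event, including the data-dependent $\hat J_n$. That lets you reach the stated probability $1-n^{-1}$ directly, whereas the paper's proof ends with $1-2/n$.
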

Combining \zcref{thm:meta-learning-analysis} and \zcref{thm:downstream-learning-analysis} immediately leads to the main theorem \zcref{thm:main}.

To prove \zcref{thm:meta-learning-analysis,thm:downstream-learning-analysis}, we first derive an error bound on the best complexity estimator $\hat{J}_m$ with a fixed feature extractor $\phi$ in \zcref{sec:best-complexity-est-fixed}. Then, we extend it to the learned feature extractor $\hat\phi$ in \zcref{sec:best-complexity-est-learned}. Building from these analyses, we prove \zcref{thm:meta-learning-analysis,thm:downstream-learning-analysis} in \zcref{sec:learning-analyses} by appropriately handling the effect of finite observation of domains.

\subsection{Best Complexity Estimation with Fixed Feature Extractor and Distribution}\label{sec:best-complexity-est-fixed}
In this subsection, we analyze the best complexity estimator $\hat{J}_m(\phi; X, Y)$ with a fixed feature extractor $\phi$ and a fixed distribution $(X, Y) \in \mathcal{P}$. Specifically, we prove the following theorem.
\begin{theorem}\label{thm:complexity-bounds}
  Fix $\phi$ and $(X, Y) \in \mathcal{P}$. Suppose that $A_J(\phi; X, Y) \asymp J^{-2\alpha}$ for some $\alpha > 0$. Assume \zcref{asm:sub-gauss-noise}. If $\rho > 0$ is a sufficiently large constant, there exist constants $c,C > 0$ such that with probability at least $1 - m^{-\Omega((m/\ln m)^{1/(2\alpha+1)})}$,
  \begin{align}
    c J^*_m(\phi; X, Y) \le \hat{J}_m(\phi; X, Y) \mbox{ and } \hat{J}_m(\phi; X, Y) \le C J^*_m(\phi; X, Y),
  \end{align}
  or equivalently, $\hat{J}_m(\phi; X, Y) \asymp (m/\ln m)^{1/(2\alpha+1)}$.
\end{theorem}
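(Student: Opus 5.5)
The plan is to show two things: first that the population Lepski index $J^*_m(\phi;X,Y)$ itself satisfies $J^*_m \asymp (m/\ln m)^{1/(2\alpha+1)}$, and second that, on a high-probability event, the empirical bias terms $\hat B_{J,J'}$ are sandwiched between multiples of $B_{J,J'}$ plus the majorant $V_{m,J'}$. The two inequalities then follow from comparing the Lepski rules with the thresholds $\rho$ and $\rho/c'$ or $c''\rho$.

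\emph{Step 1 (population index).} Write $J_\alpha = (m/\ln m)^{1/(2\alpha+1)}$. For $J' \ge J$ we have $f^*_J\circ\phi$ and $f^*_{J'}\circ\phi$ both within $A_J^{1/2}$ of $\Mean[Y|X]$ in $L^2(X)$, by monotonicity of $\mathcal{F}_J$. This gives $B_{J,J'} \le 4A_J \lesssim J^{-2\alpha}$. When $J \ge C_1 J_\alpha$ we get $J^{-2\alpha} \le \rho J\ln m/m \le \rho V_{m,J'}$, so $J^*_m \lesssim J_\alpha$.

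For the lower bound, take $J \le c_1 J_\alpha$ and pick $J' = K J$ with a large constant $K$. Then $B_{J,J'} \ge (A_J^{1/2} - A_{J'}^{1/2})^2 \gtrsim J^{-2\alpha}$, using the two-sided $\asymp$ in the assumption with $K$ chosen large. This exceeds $\rho V_{m,KJ}$ once $c_1$ is small, so such $J$ fail the rule. Hence $J^*_m \gtrsim J_\alpha$, with constants depending on $\rho$ and $\alpha$.

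\emph{Step 2 (concentration).} Fix $J \le J' \le m/\ln m$. Decompose
\[
\hat f_J - \hat f_{J'} = (\hat f_J - f^*_J) + (f^*_J - f^*_{J'}) + (f^*_{J'} - \hat f_{J'}),
\]
all composed with $\phi$. The standard sieve least-squares analysis behind \zcref{thm:oracle-rate} yields $\Vab{\hat f_J\circ\phi - f^*_J\circ\phi}^2_{L^2_m(X)} \lesssim A_J + V_{m,J}$. It uses a chaining/peeling bound for sub-Gaussian noise (\zcref{asm:sub-gauss-noise}) over $\mathcal{F}_J$, whose $L^\infty$ covering entropy is $\lesssim J\ln(1/\epsilon)$. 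The failure probability is $\exp(-\Omega(J\ln m))$.

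We also need uniform equivalence between the empirical and population $L^2$ norms on differences in $\mathcal{F}_J - \mathcal{F}_{J'}$, up to an additive $V_{m,J'}$. This follows from Bernstein's inequality together with the same entropy bound, using boundedness in $[0,1]$. Combining, we obtain
\[
\tfrac12 B_{J,J'} - C(A_J + V_{m,J'}) \le \hat B_{J,J'} \le 2B_{J,J'} + C(A_J + V_{m,J'}).
\]
Then take a union bound over the at most $m^2$ pairs. The smallest failure exponent arises at $J \asymp J_\alpha$. Restricting attention to $J \gtrsim c J_\alpha$, which is all we need, gives probability $1 - m^{-\Omega(J_\alpha)}$.

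\emph{Step 3 (comparison).} Upper bound: for $J = C J_\alpha$ we have $A_J \lesssim V_{m,J'}$, so $\hat B_{J,J'} \lesssim V_{m,J'} \le \rho V_{m,J'}$ when $\rho$ is large. Hence $\hat J_m \le C J_\alpha$.

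Lower bound: for $J \le c J_\alpha$ with $J' = KJ$, Step 1 gives $B_{J,J'} \gtrsim J^{-2\alpha} \gg \rho V_{m,J'}$. Since $A_J \asymp J^{-2\alpha}$ also appears in the error term, the constant in the lower inequality must be tracked. The fix is to bound the estimation error by $\epsilon A_J + C_\epsilon V$, using an oracle inequality with leading constant arbitrarily close to one. Alternatively, use $A_{J'}$ in place of $A_J$ via a sharper decomposition around $\Mean[Y|X]$: $\hat B \ge (\Vab{\hat f_J - \Mean[Y|X]} - \Vab{\hat f_{J'} - \Mean[Y|X]})^2$, where $\Vab{\hat f_J - \Mean[Y|X]}^2 \ge A_J$ and $\Vab{\hat f_{J'} - \Mean[Y|X]}^2 \lesssim A_{J'} + V$.

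I expect this lower-bound comparison to be the main obstacle, together with making the probability exponent uniform over the union of pairs. Once it is handled, Steps 1 and 3 give $cJ^*_m \le \hat J_m \le CJ^*_m$.
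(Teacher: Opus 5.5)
Your architecture matches the paper's. You have a population statement $J^*_m(\phi;X,Y)\asymp(m/\ln m)^{1/(2\alpha+1)}$ (your Step~1 is a cleaner version of \zcref{lem:population-lepski-condition}), two-sided control of $\hat B_{J,J'}$, and a comparison of the empirical and population Lepski rules. Your upper bound is fine: decomposing around $f^*_J,f^*_{J'}$ as you do, or around $\Mean[Y|X]$ as the paper does, both give $\hat B_{J,J'}\lesssim A_J+V_{m,J'}+t/m$.

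The gap is the lower bound. The Step~2 sandwich $\hat B_{J,J'}\ge\tfrac12B_{J,J'}-C(A_J+V_{m,J'})$ is vacuous, because $B_{J,J'}\le4A_J$ and nothing makes the constant $C$ small; $C$ is inherited from $\Vab{\hat f_J\circ\phi-f^*_J\circ\phi}^2_{L^2_m(X)}\lesssim A_J+V_{m,J}$. Your first repair fails in this setting. Passing from a sharp oracle inequality $\Vab{\hat f_J\circ\phi-\Mean[Y|X]}^2\le(1+\epsilon)A_J+C_\epsilon V_{m,J}$ to $\Vab{\hat f_J\circ\phi-f^*_J\circ\phi}^2\le\epsilon A_J+C_\epsilon V_{m,J}$ needs the Pythagorean inequality for projection onto a convex set. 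Here $\mathcal{F}_J$ is only assumed to have Minkowski--Bouligand dimension at most $J$ (e.g.\ non-convex network or sparse classes), and $f^*_J$ is an arbitrary minimizer. So $\hat f_J$ can sit near a different (near-)minimizer at distance of order $A_J^{1/2}$ from $f^*_J$, for instance two functions placed symmetrically about $\Mean[Y|X]$. The estimation error is then $\asymp A_J$, not $\epsilon A_J$.

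Only your second repair works, and it is exactly the paper's \zcref{thm:bias-approx-error-bounds}. You write $\hat B_{J,J'}\ge(\Vab{\hat f_J\circ\phi-\Mean[Y|X]}_{L^2_m(X)}-\Vab{\hat f_{J'}\circ\phi-\Mean[Y|X]}_{L^2_m(X)})^2$. The first term is bounded below uniformly over an $L^\infty$-cover of $\mathcal{F}_J$ via \zcref{lem:sieved-LS-lower}. Note that ``$\ge A_J$'' holds only in the population norm; in the empirical norm you get $\tfrac14A_J-O(V_{m,J}+t/m)$. The second term is bounded above via \zcref{lem:sieved-LS-oracle}. Together these give $\hat B_{J,J'}\ge C_1A_J-C_2(A_{J'}+V_{m,J'})-C_3t/m$. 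Either of two comparisons then finishes. Your choice $J'=KJ$ works, but $K$ must be large relative to $C_2/C_1$, not just relative to the approximation constants. The paper's choice is $J'=J^*$: if $\hat J<cJ^*$, then $\hat J$ passes the test at $J^*$, where $A_{J^*}\lesssim V_{m,J^*}$.

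Your probability bookkeeping also needs to change. The lower bound requires control for every $J<cJ^*$ (equivalently, for the random $\hat J$), not only for $J\gtrsim cJ^*$. Tying the deviation to $V_{m,J}$ gives failure probability only $m^{-\Omega(1)}$ at $J=1$. The paper instead keeps an explicit $t/m$ and uses one level $t=\kappa J^*\ln m$ for all $m^2$ pairs:
\begin{itemize}
\item In the upper bound, $t/m=\kappa V_{m,J^*}\le\kappa V_{m,J'}$ since $J'\ge J_{\mathrm{up}}\ge J^*$.
\item In the lower bound, $t/m$ is absorbed by $A_J\gtrsim c^{-2\alpha}V_{m,J^*}$ for $J\le cJ^*$ once $c$ is small.
\end{itemize}
This yields the claimed $1-m^{-\Omega((m/\ln m)^{1/(2\alpha+1)})}$.
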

\zcref{thm:complexity-bounds} states that while the learner does not use the unknown parameter $\alpha$ in the complexity estimation, the resulting best complexity estimator $\hat{J}_m(\phi; X, Y)$ is equivalent to the ideal complexity depending on $\alpha$ up to multiplicative constants.

The key ingredient to prove \zcref{thm:complexity-bounds} is the upper and lower bounds on $\hat{B}_{J,J'}$ via the approximation error $A_J(\phi; X, Y)$.
\begin{theorem}\label{thm:bias-approx-error-bounds}
  Fix $\phi$, $(X, Y) \in \mathcal{P}$, and $J, J' \in [m]$ such that $J' \ge J$. Assume \zcref{asm:sub-gauss-noise}. Then, with probability at least $1 - e^{-t}$,
  \begin{align}
    \hat{B}_{J,J'}(\phi; X, Y) \lesssim A_J(\phi; X, Y) + V_{m, J'} + \frac{t}{m}.
  \end{align}
  Moreover, with probability at least $1 - e^{-t}$,
  \begin{align}
    \hat{B}_{J,J'}(\phi; X, Y) \gtrsim A_J(\phi; X, Y) - A_{J'}(\phi; X, Y) - V_{m, J'} - \frac{t}{m}.
  \end{align}
\end{theorem}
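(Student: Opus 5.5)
The plan is to reduce both inequalities to a single oracle inequality for the sieved least-squares estimators, measured in the empirical norm $\Vab{\cdot}_{L^2_m(X)}$. Write $g = \Mean[Y|X]$ and $\hat f_J = \hat f_{Y|X,\phi,m,J}$. The first step is an oracle inequality for each fixed $J \le J'$: with probability at least $1-e^{-t}$,
\begin{align}
  \Vab{\hat f_J\circ\phi - g}^2_{L^2_m(X)} \lesssim \Vab{f^*_J\circ\phi - g}^2_{L^2_m(X)} + V_{m,J} + \frac{t}{m}.
\end{align}
To obtain it, I start from the basic inequality of least squares, $\Vab{\hat f_J\circ\phi - g}^2_{L^2_m} \le \Vab{f^*_J\circ\phi - g}^2_{L^2_m} + 2\ab<\xi, (\hat f_J - f^*_J)\circ\phi>_m$, where $\xi = Y - g$. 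Conditionally on $X_1,\dots,X_m$, the noise $\xi$ is sub-gaussian by \zcref{asm:sub-gauss-noise}. I control the cross term by a chaining or peeling argument over $\mathcal{F}_J\circ\phi$. This class is uniformly bounded in $[0,1]$, and since $f \mapsto f\circ\phi$ is $1$-Lipschitz in sup norm, its $L^\infty$ covering number is at most that of $\mathcal{F}_J$. By $d_{\mathrm{MB}}(\mathcal{F}_J;\Vab{\cdot}_{L^\infty}) \le J$, this gives $\ln N(\epsilon) \lesssim J\ln(1/\epsilon)$. Discretizing at scale $1/m$ and taking a union bound over the net turns the supremum of the sub-gaussian process into $\lesssim \sqrt{(J\ln m + t)/m}\,\Vab{\cdot}_{L^2_m} + 1/m$. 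Absorbing this with $2ab \le a^2/4 + 4b^2$ yields the displayed inequality, and the same bound holds with $J'$ in place of $J$.

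The next step passes from the empirical quantity $\Vab{f^*_J\circ\phi - g}^2_{L^2_m}$ to its population value $A_J$. Since the function is fixed and takes values in $[-1,1]$, Bernstein's inequality gives $\Vab{f^*_J\circ\phi - g}^2_{L^2_m} \le 2A_J + O(t/m)$ with probability $1-e^{-t}$. The upper bound then follows from the triangle inequality:
\begin{align}
  \hat B_{J,J'} \le 2\Vab{\hat f_J\circ\phi - g}^2_{L^2_m} + 2\Vab{\hat f_{J'}\circ\phi - g}^2_{L^2_m}.
\end{align}
Here I use $A_{J'} \le A_J$, because the classes are nested, and $V_{m,J} \le V_{m,J'}$.

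For the lower bound I use the reverse triangle inequality, $\hat B_{J,J'} \ge \tfrac12\Vab{\hat f_J\circ\phi - g}^2_{L^2_m} - \Vab{\hat f_{J'}\circ\phi - g}^2_{L^2_m}$. The second term is bounded by the oracle inequality as $\lesssim A_{J'} + V_{m,J'} + t/m$. For the first term I need the lower bound $\Vab{\hat f_J\circ\phi - g}^2_{L^2_m} \gtrsim A_J - V_{m,J} - t/m$. The argument is that $\hat f_J \in \mathcal{F}_J$ implies $\Vab{\hat f_J\circ\phi - g}^2_{L^2(X)} \ge A_J$. It then remains to transfer this from the population norm to the empirical norm uniformly over $\mathcal{F}_J$, which I would do with a uniform ratio-type (localized) concentration for the bounded class $\{(f\circ\phi - g)^2 : f\in\mathcal{F}_J\}$. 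That concentration gives $\Vab{h}^2_{L^2_m} \ge \tfrac12\Vab{h}^2_{L^2} - C(J\ln m + t)/m$.

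This uniform lower deviation is the main obstacle. The function $\hat f_J$ is data-dependent, so a fixed-function Bernstein bound does not apply, and I need a localized bound with fast rate $J\ln m/m$ rather than $\sqrt{J\ln m/m}$. I would try first a union bound over an $L^\infty$ $1/m$-net, combined with Bernstein for the nonnegative bounded variables $h^2$, using $\mathrm{Var}(h^2) \le \Mean h^2$. This gives $\Mean h^2 - \Vab{h}^2_{L^2_m} \le \sqrt{\Mean h^2 (J\ln m + t)/m} + (J\ln m+t)/m$, and then $ab \le a^2/2 + b^2/2$ produces the factor $\tfrac12$. Finally, I replace $t$ by $t + \ln 2$ (or adjust constants) to account for the union of the few events involved, which gives both inequalities with probability at least $1-e^{-t}$.
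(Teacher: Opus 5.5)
Your proposal is correct and follows essentially the same route as the paper. The paper also sandwiches $\hat{B}_{J,J'}$ between the squared difference and the squared sum of $\Vab{\hat f_{m,J}\circ\phi - \Mean[Y|X]}_{L^2_m(X)}$ and $\Vab{\hat f_{m,J'}\circ\phi - \Mean[Y|X]}_{L^2_m(X)}$. It then bounds these norms with two lemmas: \zcref{lem:sieved-LS-oracle} (basic inequality, a $c_1/m$-net of $\mathcal{F}_J$ in $L^\infty$ with log-covering number $\lesssim J\ln m$, self-normalized sub-gaussian and absolute-noise concentration, then Bernstein for the fixed comparator) and \zcref{lem:sieved-LS-lower} (Bernstein with a union bound over the same net, AM--GM to extract a constant factor, and $\Vab{\hat f_J\circ\phi - \Mean[Y|X]}^2_{L^2(X)} \ge A_J$), exactly as you outline.
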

\zcref{thm:bias-approx-error-bounds} shows that the empirical bias term $\hat{B}_{J,J'}$ is characterized dominantly by the approximation error $A_J$ for a small complexity $J$.
In particular, when $A_J \asymp J^{-2\alpha}$, the empirical comparison $\hat{B}_{J,J'}$ between regressors at complexity levels $J$ and $J'$ faithfully reflects the underlying approximation structure: it is large when $A_J$ is large (meaning complexity $J$ is insufficient) and small when $A_J \asymp V_{m,J'}$ (meaning the Lepski stopping criterion is met). This fidelity is what enables the Lepski method to identify the optimal complexity from data without prior knowledge of $\alpha$.

We also use the following property of the approximation error.
\begin{lemma}\label{lem:population-lepski-condition}
  For fixed $\phi$ and $(X, Y) \in \mathcal{P}$, assume $A_J(\phi; X, Y) \asymp J^{-2\alpha}$ for some $\alpha \in (0,\infty)$. Then, we have
  \begin{align}
    A_{J^*} \asymp V_{m, J^*} \asymp (m/\ln m)^{-2\alpha/(2\alpha+1)} \mbox{ and } J^* \asymp (m/\ln m)^{1/(2\alpha+1)},
  \end{align}
  where $J^* = J^*_m(\phi; X, Y)$.
\end{lemma}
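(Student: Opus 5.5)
We prove the two-sided bound $J^* \asymp J_0$ with $J_0 \coloneqq (m/\ln m)^{1/(2\alpha+1)}$. The remaining claims then follow at once: $A_{J^*} \asymp (J^*)^{-2\alpha} \asymp J_0^{-2\alpha}$ by hypothesis, and $V_{m,J^*} = J^*\ln m/m \asymp J_0 \ln m/m = J_0^{-2\alpha}$ by the choice of $J_0$. Both bounds rest on a sandwich of the population bias $B_{J,J'}(\phi;X,Y)$ between approximation errors. Since the classes $\mathcal{F}_J$ are increasing, $A_J$ is nonincreasing in $J$. By the triangle inequality in $L^2(X)$, passing through $\Mean[Y|X]$, for $J \le J'$ we have
\begin{align}
  \ab(\sqrt{A_J} - \sqrt{A_{J'}})_+^2 \le B_{J,J'}(\phi;X,Y) \le \ab(\sqrt{A_J} + \sqrt{A_{J'}})^2 \le 4A_J.
\end{align}
Throughout, let $c_1 J^{-2\alpha} \le A_J \le c_2 J^{-2\alpha}$ denote the constants implicit in the hypothesis.

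\emph{Upper bound.} Take $J = J_0$, rounded up to an integer; for large $m$ it lies in $[m]$ and below $m/\ln m$, since $J_0 = o(m/\ln m)$. For every $J'$ with $J \le J' \le m/\ln m$, the sandwich gives $B_{J,J'} \le 4A_J \le 4c_2 J_0^{-2\alpha}$. On the other side, $\rho V_{m,J'} \ge \rho V_{m,J_0} = \rho J_0^{-2\alpha}$. Hence the Lepski condition in \zcref{eq:m-dim-x-y} holds at $J$ as soon as $\rho \ge 4c_2$, and therefore $J^* \le \lceil J_0 \rceil \lesssim J_0$. This is where the requirement that $\rho$ be a sufficiently large constant enters.

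\emph{Lower bound.} Fix a constant $L \ge 1$ with $c_2 L^{-2\alpha} \le c_1/4$. Then $A_{LJ} \le A_J/4$ for every $J$, so for $J' = LJ$ the sandwich gives $B_{J,J'} \ge A_J/4 \ge (c_1/4)J^{-2\alpha}$. Now take any $J \le \epsilon J_0$; then $J' = LJ \le L\epsilon J_0 \le m/\ln m$ for large $m$. The test in \zcref{eq:m-dim-x-y} fails at $J$ provided $(c_1/4)J^{-2\alpha} > \rho L J\ln m/m$. Writing $J = sJ_0$, this inequality reads $(c_1/4)s^{-2\alpha} > \rho L s$, which holds whenever $s \le \epsilon$ with $\epsilon = \ab(c_1/(4\rho L))^{1/(2\alpha+1)}$. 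So every $J \le \epsilon J_0$ violates the Lepski condition. Since $J^*$ is the minimal admissible index, $J^* > \epsilon J_0$, and hence $J^* \gtrsim J_0$.

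The main point of care is the interplay of constants. The upper bound forces $\rho$ to be large relative to $c_2$, while the lower-bound constant $\epsilon$ degrades with $\rho$; it remains a positive constant because $\rho$, $c_1$, $c_2$, $\alpha$ are all fixed. Rounding $J'$ to integers and ensuring that $J'$ stays within the admissible range $J' \le m/\ln m$ only cost constants, valid for $m$ large. For small $m$ the statement is trivial after adjusting constants.
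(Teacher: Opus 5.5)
Your proof is correct and follows essentially the paper's route. You sandwich $B_{J,J'}$ between $(\sqrt{A_J}-\sqrt{A_{J'}})^2$ and $4A_J$ (the paper's \zcref{lem:population-bias-approx}), exhibit an admissible index of order $J_0=(m/\ln m)^{1/(2\alpha+1)}$, and rule out indices much smaller than $J_0$. Your lower bound is in fact more careful than the paper's: you test against $\rho V_{m,J'}$ at the specific $J'=LJ$ and show that \emph{every} $J\le\epsilon J_0$ fails, which is what minimality of $J^*$ actually requires.

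The only nit is that the restriction $\rho\ge 4c_2$ is unnecessary. Taking $J=\lceil KJ_0\rceil$ with $K^{2\alpha+1}\ge 4c_2/\rho$ makes the upper bound work for any fixed $\rho>0$.
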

\zcref{lem:population-lepski-condition} states that $J^*$ achieves the ideal complexity of $(m/\ln m)^{1/(2\alpha+1)}$, and $A_{J^*}$ and $V_{m, J^*}$ are equivalent up to multiplicative constants.

Now, we prove \zcref{thm:complexity-bounds}.
\begin{proof}[Proof of \zcref{thm:complexity-bounds}]
  Let $J^* = J^*_m(\phi; X, Y)$, $\hat{J} = \hat{J}_m(\phi; X, Y)$, $\hat{B}_{J,J'} = \hat{B}_{J,J'}(\phi; X, Y)$, and $A_J = A_J(\phi; X, Y)$. Applying the union bound over $J,J' \in [m]$ into \zcref{thm:bias-approx-error-bounds} gives that with probability at least $1-e^{-t}$, for all $J,J' \in [m]$ where $J' \ge J$,
  \begin{align}
    \hat{B}_{J,J'} \lesssim A_J + V_{m,J'} + \frac{t}{m}, \label{eq:complexity-bounds-1-upper}
  \end{align}
  and
  \begin{align}
    \hat{B}_{J,J'} \gtrsim A_J - A_{J'} - V_{m,J'} - \frac{t}{m}, \label{eq:complexity-bounds-1-lower}
  \end{align}
  where we use $\frac{\ln m}{m} \lesssim V_{m,J} \lesssim V_{m,J'}$. Let $\mathcal{E}_t$ be this event; hence, $\p\Bab*{\mathcal{E}_t} \ge 1 - e^{-t}$.

  \paragraph*{Upper bound.} Let $J_{\mathrm{up}} \in [m]$ such that $CJ^* \ge J_{\mathrm{up}} \ge CJ^*-1$ for some $C > 1$. Suppose that $\mathcal{E}_t$ occurs. Assume that $\hat{J} > J_{\mathrm{up}}$. Then, $J_{\mathrm{up}}$ must fail the empirical Lepski condition; hence, for some $J' \ge J_{\mathrm{up}}$, we have
  \begin{align}
    \hat{B}_{J_{\mathrm{up}},J'} > \rho V_{m,J'}. \label{eq:complexity-bounds-upper-bound-1}
  \end{align}
  By \zcref{eq:complexity-bounds-1-upper}, there exist constants $C_1, C_2 > 0$ such that
  \begin{align}
    \rho V_{m,J'} < C_1\ab(A_{J_{\mathrm{up}}} + V_{m,J'}) + C_2\frac{t}{m}.
  \end{align}
  By \zcref{lem:population-lepski-condition}, we have
  \begin{align}
    A_{J_{\mathrm{up}}} \le A_{J^*} \lesssim V_{m,J^*} \le V_{m,J'}.
  \end{align}
  Hence, there are constants $C_1, C_2 > 0$ such that
  \begin{align}
    \frac{C_2t}{m} > \ab(\rho - C_1)V_{m,J'},
  \end{align}
  where the sufficiently large $\rho$ ensures $\rho - C_1 > 0$. The above inequality is contradicting if
  \begin{align}
    t \le \frac{m \ab(\rho - C_1)V_{m,J'}}{C_2}.
  \end{align}
  For such $t$, $\hat{J} \le J_{\mathrm{up}}$ under the event $\mathcal{E}_t$. Choosing $t = \ln m \cdot \kappa (m/\ln m)V_{m,J^*}$ for $\kappa = (\rho - C_1)/C_2$ yields the contradictory $t$. Consequently, with probability at least $\p\Bab{\mathcal{E}_t} \ge 1 - m^{-\kappa (m/\ln m)V_{m,J^*}}$, we have $\hat{J} \le J_{\mathrm{up}} \le C J^*$.

  \paragraph*{Lower bound.} Let $J_{\mathrm{lo}} \in [m]$ such that $cJ^* + 1 \ge J_{\mathrm{lo}} \ge cJ^*$ for some $c \in (0,1)$. Suppose that $\mathcal{E}_t$ occurs. Assume that $\hat{J} < J_{\mathrm{lo}}$. Then, since $\hat{J}$ satisfies the empirical Lepski condition, we have
  \begin{align}
    \hat{B}_{\hat{J}, J^*} \le \rho V_{m, J^*}.
  \end{align}
  By \zcref{eq:complexity-bounds-1-lower}, there exist constants $C_1, C_2, C_3 > 0$ such that
  \begin{align}
    \rho V_{m, J^*} \ge C_1A_{\hat{J}} - C_2\ab(A_{J^*} + V_{m, J^*}) - \frac{C_3t}{m}.
  \end{align}
  By the assumption of $A_J \asymp J^{-2\alpha}$ and \zcref{lem:population-lepski-condition}, we have
  \begin{align}
    A_{\hat{J}} \gtrsim c^{-2\alpha}A_{J^*} \gtrsim c^{-2\alpha}V_{m, J^*},
  \end{align}
  and $A_{J^*} \lesssim V_{m,J^*}$. Hence, there exist constants $C_1, C_2, C_3 > 0$ such that
  \begin{align}
    \frac{C_3t}{m} \ge \ab(C_1c^{-2\alpha} - C_2 - \rho)V_{m, J^*}.
  \end{align}
  A sufficiently small $c$ ensures $C_1c^{-2\alpha} - C_2 - \rho > 0$. The above inequality is contradicting if
  \begin{align}
    t \le \frac{m \ab(C_1c^{-2\alpha} - C_2 - \rho)V_{m, J^*}}{C_3}.
  \end{align}
  For such $t$, $\hat{J} \ge J_{\mathrm{lo}}$ under the event $\mathcal{E}_t$. Choosing $t = \ln m \cdot \kappa (m/\ln m)V_{m,J^*}$ for $\kappa = (C_1c^{-2\alpha} - C_2 - \rho)/C_3$ yields the contradictory $t$. Consequently, with probability at least $\p\Bab{\mathcal{E}_t} \ge 1 - m^{-\kappa (m/\ln m)V_{m,J^*}}$, we have $\hat{J} \ge J_{\mathrm{lo}} \ge cJ^*$.

  By \zcref{lem:population-lepski-condition}, we have $(m/\ln m)V_{m,J^*} \gtrsim (m/\ln m)^{1/(2\alpha + 1)}$, which gives the claim.
\end{proof}

\subsection{Bias Analysis with Fixed Feature Extractor}\label{sec:bias-fixed}
In this subsection, we investigate the empirical bias term $\hat{B}_{J,J'}$ to prove \zcref{thm:bias-approx-error-bounds}. To derive bounds on $\hat{B}_{J,J'}$, we first derive the error upper and lower bounds on the sieved least-square estimator $\hat{f}_{m,J}$. 
\paragraph{Sieved least-square estimator}
\begin{lemma}\label{lem:sieved-LS-oracle}
  Fix $\phi$, $J$, and $(X, Y) \in \mathcal{P}$. Assume \zcref{asm:sub-gauss-noise}. Then, there exists universal constants $c_1, c_2 > 0$ such that for any $\eta \in (0,1)$, with probability at least $1 - 3e^{-t}$,
  \begin{multline}
    \Vab{\hat{f}_J \circ \phi - \Mean[Y | X]}_{L^2_{m}(X)}^2 \le \\ \frac{1}{1-\eta}\pab[Bigg]{2(1+\eta)A_J(\phi; X, Y) + \frac{6\sigma^2 c_2J\ln m}{\eta m} \\ + \ab(2\sqrt{2\ln(2)}\sigma c_1 + \frac{\eta c_1^2}{m})\frac{1}{m} + \ab(\frac{2\sqrt{2}\sigma c_1}{\ln^{1/2}(2)}+\frac{6\sigma^2}{\eta}+\frac{5(1+\eta)}{6})\frac{t}{m}}.
  \end{multline}
\end{lemma}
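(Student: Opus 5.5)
The plan is the standard basic inequality for least squares, followed by a localized control of the noise cross term through a covering of $\mathcal{F}_J$. Write $f_0 = \Mean[Y|X]$ and $\varepsilon_i = Y_i - f_0(X_i)$. Let $g = \hat{f}_J\circ\phi$ and $g^* = f^*_J\circ\phi$. Since $\hat f_J$ minimizes the empirical squared loss over $\mathcal{F}_J$, it beats $f^*_J$, which gives
\begin{align}
  \Vab{g - f_0}_{L^2_m(X)}^2 \le \Vab{g^* - f_0}_{L^2_m(X)}^2 + \frac{2}{m}\sum_{i=1}^m \varepsilon_i\ab(g(X_i) - g^*(X_i)).
\end{align}
I then bound the two terms on the right separately.

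\textbf{Empirical approximation error.} The first term is an empirical mean of $(g^*-f_0)^2(X_i)$. These are i.i.d.\ variables in $[0,1]$ with mean $A_J(\phi;X,Y)$, and their variance is at most their mean. Bernstein's inequality then gives, with probability $1-e^{-t}$,
\begin{align}
  \Vab{g^*-f_0}^2_{L^2_m} \le A_J + \sqrt{2A_J t/m} + t/(3m) \le (1+\eta)A_J + (\tfrac{1}{2\eta}+\tfrac13)t/m.
\end{align}
This accounts for the $2(1+\eta)A_J$ term and part of the $t/m$ coefficient; the factor $2$ comes from the decomposition below.

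\textbf{Cross term.} This is the main obstacle, because $\hat f_J$ depends on the noise. I would fix a $\delta$-net $\mathcal{N}$ of $\mathcal{F}_J$ in $L^\infty$ with $\delta = 1/m$. Since $d_{\mathrm{MB}}(\mathcal{F}_J)\le J$, we have $\ln|\mathcal{N}| \le c_2 J\ln m$ for $m$ large, where $c_2$ is uniform by the assumption of \zcref{thm:oracle-rate}. Let $h\in\mathcal{N}$ be the element closest to $\hat f_J$. The cross term splits as follows.
\begin{itemize}
  \item The discretization part $\frac{2}{m}\sum\varepsilon_i(g-h\circ\phi)(X_i)$ is at most $2\delta\cdot\frac1m\sum|\varepsilon_i|$. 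Sub-gaussian concentration of $\frac1m\sum|\varepsilon_i|$ around its mean (which is $\lesssim\sigma$) produces the $\sigma c_1/m$ and $\sigma c_1 t/m$ pieces.
  \item For the net part, conditional on $X_{1:m}$, the quantity $\sum\varepsilon_i(h\circ\phi-g^*)(X_i)$ is sub-gaussian with variance proxy $\sigma^2 m\Vab{h\circ\phi-g^*}^2_{L^2_m}$. A union bound over $\mathcal{N}$ gives, simultaneously for all $h$ with probability $1-e^{-t}$,
  \begin{align}
    \tfrac{2}{m}\textstyle\sum\varepsilon_i(h\circ\phi-g^*)(X_i) \le 2\sigma\Vab{h\circ\phi-g^*}_{L^2_m}\sqrt{2(c_2J\ln m+t)/m}.
  \end{align}
  Applying $2ab\le \eta a^2+b^2/\eta$ turns this into $\eta\Vab{h\circ\phi-g^*}^2_{L^2_m} + 2\sigma^2(c_2J\ln m+t)/(\eta m)$.
\end{itemize}

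\textbf{Absorption.} Next I bound $\Vab{h\circ\phi-g^*}^2_{L^2_m} \le 2\Vab{g-f_0}^2_{L^2_m}+2\Vab{g^*-f_0}^2_{L^2_m}$ up to $O(\delta)$ corrections. Using a slightly different split, namely $(a+b)^2\le(1+\eta)a^2+(1+1/\eta)b^2$, the $\eta\Vab{g-f_0}^2$ contribution can be moved to the left-hand side. This is the source of the prefactor $1/(1-\eta)$. The remaining $\Vab{g^*-f_0}^2$ pieces are bounded again by the Bernstein step, which doubles the $A_J$ coefficient.

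\textbf{Collecting constants.} There are three events: Bernstein for the approximation error, concentration of the noise magnitude, and the union bound over the net. A union bound over them gives probability $1-3e^{-t}$. Summing the contributions gives the $\sigma^2 J\ln m/(\eta m)$ term with constant $6c_2$, the $(\sigma c_1 + \eta c_1^2/m)/m$ discretization term, and the $t/m$ coefficients. I expect matching the exact numerical constants, such as $6$ and $5(1+\eta)/6$, to require only careful bookkeeping of these inequalities. The real difficulty is the uniform sub-gaussian control over the data-dependent $\hat f_J$, and the net argument above handles it.
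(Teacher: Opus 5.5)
Your proposal is correct and follows essentially the paper's proof. The shared steps are the basic inequality against $f^*_J$, an $L^\infty$ net of $\mathcal{F}_J$ at scale $c_1/m$ that splits the cross term into a discretization piece (absolute-sum concentration) and a net piece (conditional sub-gaussianity plus a union bound), AM-GM absorption giving the $1/(1-\eta)$ factor, and Bernstein for the empirical approximation error, for three events in total.

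The only bookkeeping difference is where the constants come from. The paper uses unit weight in the Bernstein step, $\sqrt{2tv/m}\le v+t/(2m)$, which gives $2A_J+\frac{5t}{6m}$. It obtains both the $(1+\eta)$ factor and the $6\sigma^2/\eta$ coefficient by applying $\eta$-weighted AM-GM to each of the three pieces of $\sqrt{8\sigma^2(c_2J\ln m+t)/m}\,\bigl(\|\hat f_J\circ\phi-\Mean[Y|X]\|_{L^2_m(X)}+\|f^*_J\circ\phi-\Mean[Y|X]\|_{L^2_m(X)}+\epsilon_m\bigr)$. So the factor $2$ on $A_J$ comes from Bernstein, not from the decomposition; using unit weight there, rather than $\eta$, is what reproduces the stated constants exactly.
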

\begin{lemma}\label{lem:sieved-LS-lower}
  Fix $\phi$, $J$, and $(X, Y) \in \mathcal{P}$. Assume \zcref{asm:sub-gauss-noise}. Then, there exists universal constants $c_1, c_2 > 0$ such that with probability at least $1 - e^{-t}$,
  \begin{align}
    \Vab{\hat{f}_J \circ \phi - \Mean[Y | X]}_{L^2_{m}(X)}^2 \ge \frac{1}{4}A_J(\phi; X, Y) - \frac{2c_2J\ln m}{3m} - \frac{3c_1^2}{2m^2} - \frac{2t}{3m}.
  \end{align}
\end{lemma}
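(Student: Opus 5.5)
The statement to prove is \zcref{lem:sieved-LS-lower}, a lower bound on the empirical error of the sieved least-squares estimator. My plan is to compare the empirical norm of $\hat{f}_J\circ\phi - \Mean[Y|X]$ with its population counterpart uniformly over $\mathcal{F}_J$, and then use the defining property of $A_J$. For every $f\in\mathcal{F}_J$ we have $\Vab{f\circ\phi - \Mean[Y|X]}_{L^2(X)}^2 \ge A_J(\phi;X,Y)$, because $A_J$ is the minimum over $\mathcal{F}_J$. It therefore suffices to show that, with probability at least $1-e^{-t}$, the following holds simultaneously for all $f\in\mathcal{F}_J$:
\begin{align}
  \Vab{f\circ\phi - \Mean[Y|X]}_{L^2_m(X)}^2 \ge \frac14 \Vab{f\circ\phi - \Mean[Y|X]}_{L^2(X)}^2 - \frac{2c_2 J\ln m}{3m} - \frac{3c_1^2}{2m^2} - \frac{2t}{3m}.
\end{align}
Applying this at $f=\hat f_J$ then gives the claim. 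The argument never uses optimality of $\hat f_J$ or the noise, so \zcref{asm:sub-gauss-noise} enters only trivially.

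\textbf{Step 1 (discretization).} Since $d_{\mathrm{MB}}(\mathcal{F}_J;\Vab{\cdot}_{L^\infty})\le J$, for $\epsilon = c_1/m$ I take an $\epsilon$-cover $\mathcal{N}$ of $\mathcal{F}_J$ in $L^\infty$ with $\ln|\mathcal{N}| \le c_2 J\ln m$. The constants $c_1,c_2$ absorb the $\limsup$ in the definition, since the bound holds for all small $\epsilon$ up to a constant. For each $f$ pick $g\in\mathcal{N}$ with $\Vab{f-g}_{L^\infty}\le c_1/m$, and write $h_f = f\circ\phi - \Mean[Y|X]$, which takes values in $[-1,1]$. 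Using $(a+b)^2 \ge \tfrac12 a^2 - b^2$ in the empirical norm and $(a+b)^2\le 2a^2+2b^2$ in the population norm, the discretization error costs only an $O(c_1^2/m^2)$ term. This produces the $3c_1^2/(2m^2)$ term and part of the constant loss toward $1/4$.

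\textbf{Step 2 (concentration for a fixed $g$).} For fixed $g\in\mathcal{N}$, the variables $Z_i = h_g(X_i)^2\in[0,1]$ satisfy $\mathrm{Var}(Z_i)\le \Mean Z_i$. The one-sided Bernstein inequality for the lower tail then gives, with probability $1-e^{-s}$,
\begin{align}
  \frac1m\sum_i Z_i \ge \Mean Z - \sqrt{\frac{2\Mean Z\, s}{m}} - \frac{s}{3m}.
\end{align}
Applying AM–GM, $\sqrt{2\Mean Z\,s/m}\le \tfrac12\Mean Z + s/m$, yields $\frac1m\sum_i Z_i \ge \tfrac12\Mean Z - \tfrac{4s}{3m}$. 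A sharper split of the AM–GM constant should produce the stated $2/3$ coefficients after multiplying by the factor $1/2$ from Step 1.

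\textbf{Step 3 (union bound and assembly).} I take a union bound over $\mathcal{N}$ with $s = t + c_2J\ln m$ and combine with Step 1. The delicate part is only the bookkeeping needed to land exactly on $1/4$, $2/3$ and $3/2$. If the exact constants do not come out, the lemma can be stated up to universal constants, since downstream (\zcref{thm:bias-approx-error-bounds}) uses it only through $\gtrsim$. I expect no real obstacle beyond matching constants.
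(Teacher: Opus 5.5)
Your proposal is correct and follows essentially the same route as the paper. Both arguments use an $L^\infty$-cover of $\mathcal{F}_J$ at radius $c_1/m$, the Bernstein bound of \zcref{lem:squared-error-difference-concentration} with a union bound over the cover, AM--GM, the discretization losses $(a-b)^2\ge\frac12a^2-b^2$ on both the empirical and population side, and finally $\Vab{\hat f_J\circ\phi-\Mean[Y|X]}_{L^2(X)}^2\ge A_J$. Your uniform-over-$\mathcal{F}_J$ phrasing is just a cleaner packaging of the paper's nearest-cover-element argument.

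On constants: $\frac12\cdot\frac43=\frac23$ already gives the stated coefficient, so no sharper split is needed there. However, the three halvings produce $\frac18A_J$ rather than $\frac14A_J$. The paper's own chain contains the same three halvings, so your fallback of stating the bound up to universal constants is appropriate, and it is all that \zcref{thm:bias-approx-error-bounds} needs.
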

\begin{remark}
  \zcref{lem:sieved-LS-oracle} implies that, with probability at least $1 - e^{-t}$,
  \begin{align}
    \Vab{\hat{f}_J \circ \phi - \Mean[Y | X]}_{L^2_{m}(X)}^2 \lesssim A_J(\phi; X, Y) + V_{m,J} + \frac{t}{m}.
  \end{align}
  Also, \zcref{lem:sieved-LS-lower} implies that, with probability at least $1 - e^{-t}$,
  \begin{align}
    \Vab{\hat{f}_J \circ \phi - \Mean[Y | X]}_{L^2_{m}(X)}^2 \gtrsim A_J(\phi; X, Y) - V_{m,J} - \frac{t}{m}.
  \end{align}
\end{remark}

Now, we prove \zcref{thm:bias-approx-error-bounds}.
\begin{proof}[Proof of \zcref{thm:bias-approx-error-bounds}]
  For shorthands, let $\hat{B}_{J,J'} = \hat{B}_{J,J'}(\phi; X, Y)$ and $A_J = A_J(\phi; X, Y)$. By the triangle and reverse triangle inequalities, we have
  \begin{multline}
    \ab(\Vab{\hat{f}_{m,J}\circ\phi - \Mean[Y|X]}_{L^2_m(X)} - \Vab{\hat{f}_{m,J'}\circ\phi - \Mean[Y|X]}_{L^2_m(X)})^2 \\ \le \hat{B}_{J,J'} \le \\ \ab(\Vab{\hat{f}_{m,J}\circ\phi - \Mean[Y|X]}_{L^2_m(X)} + \Vab{\hat{f}_{m,J'}\circ\phi - \Mean[Y|X]}_{L^2_m(X)})^2. \label{eq:bias-approx-error-bounds-1}
  \end{multline}
  We now prove the upper and lower bounds separately.

  \paragraph{Upper bound}
  Application of \zcref{lem:sieved-LS-oracle} to both terms in the right-hand side of \zcref{eq:bias-approx-error-bounds-1} yields that with probability at least $1-2e^{-t}$,
  \begin{align}
    \Vab{\hat{f}_{m,J}\circ\phi - \Mean[Y|X]}_{L^2_m(X)}^2 \lesssim& A_J + V_{m,J} + \frac{t}{m} \\
    \Vab{\hat{f}_{m,J'}\circ\phi - \Mean[Y|X]}_{L^2_m(X)}^2 \lesssim& A_{J'} + V_{m,J'} + \frac{t}{m}.
  \end{align}
  Noting that $A_J$ is decreasing in $J$, $V_{m,J}$ is increasing in $J$, and $V_{m,J} \gtrsim \frac{1}{m}$, we have with probability at least $1-e^{-t}$,
  \begin{align}
    \hat{B}_{J,J'} \lesssim A_J + V_{m,J'} + \frac{t}{m}.
  \end{align}

  \paragraph{Lower bound}
  Respectively applying \zcref{lem:sieved-LS-lower} and \zcref{lem:sieved-LS-oracle} into the first and second terms of the left-hand side of \zcref{eq:bias-approx-error-bounds-1} gives that with probability at least $1-2e^{-t}$,
  \begin{align}
    \Vab{\hat{f}_{m,J}\circ\phi - \Mean[Y|X]}_{L^2_m(X)}^2 \gtrsim& A_J - V_{m,J} - \frac{t}{m} \\
    \Vab{\hat{f}_{m,J'}\circ\phi - \Mean[Y|X]}_{L^2_m(X)}^2 \lesssim& A_{J'} + V_{m,J'} + \frac{t}{m}.
  \end{align}
  Again, using the facts that $A_J$ is decreasing in $J$, $V_{m,J}$ is increasing in $J$, and $V_{m,J} \gtrsim \frac{1}{m}$, we have with probability at least $1-e^{-t}$,
  \begin{align}
    \hat{B}_{J,J'} \gtrsim A_J - A_{J'} - V_{m,J'} - \frac{t}{m}.
  \end{align}
\end{proof}

\subsection{Best Complexity Estimator with Learned Feature Extractor}\label{sec:best-complexity-est-learned}
In this subsection, we prove \zcref{thm:meta-learning-analysis}. To this end, we extend \zcref{thm:complexity-bounds} for the learned feature extractor $\hat\phi$. Specifically, we prove the following theorem.
\begin{theorem}\label{thm:complexity-bounds-learned}
  Fix $(X, Y) \in \mathcal{P}$. Let $\hat\phi$ be the learned feature extractor depending on the pre-training samples. Assume \zcref{asm:complex-phi,asm:poly-approx}. If $\rho > 0$ is a sufficiently large constant, there exist constants $c,C > 0$ such that for some $\epsilon > 0$,  with probability at least $1 - \sum_{J,J'\in[m]:J' \ge J}N(V_{m,J}, \Phi, \rho_J)N(V_{m,J'}, \Phi, \rho_{J'})m^{-\Omega((m/\ln m)^{1/(2\bar\alpha+1)})}$,
  \begin{align}
    c J^*_m(\hat\phi; X, Y) \le \hat{J}_m(\hat\phi; X, Y) \mbox{ and } \hat{J}_m(\hat\phi; X, Y) \le CJ^*_m(\hat\phi; X, Y),
  \end{align}
  or equivalently, $\hat{J}_m(\hat\phi; X, Y) \asymp (m/\ln m)^{1/(2\alpha(\hat\phi; X, Y)+1)}$.
\end{theorem}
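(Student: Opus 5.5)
The plan is to reduce the statement to \zcref{thm:complexity-bounds} via a uniform-in-$\phi$ version of \zcref{thm:bias-approx-error-bounds}. The only place the proof of \zcref{thm:complexity-bounds} uses randomness is the event $\mathcal{E}_t$, on which the two-sided bounds \zcref{eq:complexity-bounds-1-upper,eq:complexity-bounds-1-lower} hold for all pairs $J \le J'$. Everything after that is deterministic. It uses \zcref{lem:population-lepski-condition} with $\alpha = \alpha(\phi;X,Y)$, which is legitimate for every $\phi$ by \zcref{asm:poly-approx}, together with the contradiction argument in $\hat J$. So if I can show that, with the stated probability, $\mathcal{E}_t$ holds simultaneously for all $\phi \in \Phi$, then the deterministic argument can be run at $\phi = \hat\phi$ regardless of how $\hat\phi$ depends on the pre-training data.

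To obtain uniformity, I fix $J \le J'$. I take a $V_{m,J}$-net $\Phi_J$ of $\Phi$ under $\rho_J$ and a $V_{m,J'}$-net $\Phi_{J'}$ under $\rho_{J'}$. For each pair $(\phi_1,\phi_2) \in \Phi_J\times\Phi_{J'}$ I apply \zcref{lem:sieved-LS-oracle,lem:sieved-LS-lower}, and hence \zcref{thm:bias-approx-error-bounds}, then take a union bound. This produces the factor $N(V_{m,J},\Phi,\rho_J)N(V_{m,J'},\Phi,\rho_{J'})$ summed over $J \le J'$. Next I transfer the bounds from net points to arbitrary $\phi$.

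If $\rho_J(\phi,\phi_1)\le V_{m,J}$, then for every $f\in\mathcal{F}_J$ we have $\|f\circ\phi-f\circ\phi_1\|_{L^\infty}\le V_{m,J}$. Consequently the empirical risks over $\mathcal{F}_J\circ\phi$ and $\mathcal{F}_J\circ\phi_1$ differ uniformly by $O(V_{m,J})$, using that outputs lie in $[0,1]$ and the sub-gaussian noise has bounded empirical second moment with high probability. The approximation errors satisfy $|A_J(\phi)-A_J(\phi_1)|\lesssim V_{m,J}$ in the same way. I will therefore argue that the least-squares estimator at $\phi$ is a $V_{m,J}$-approximate minimizer relative to $\phi_1$. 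The proofs of \zcref{lem:sieved-LS-oracle,lem:sieved-LS-lower} should tolerate an additive optimization slack of order $V_{m,J}$, so the bounds of \zcref{thm:bias-approx-error-bounds} hold at $\phi$ with the same form up to constants, since $V_{m,J}\le V_{m,J'}$ is already one of the error terms.

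Finally I set $t \asymp \ln m\,(m/\ln m)V_{m,J^*}$ as in the proof of \zcref{thm:complexity-bounds}. Since $\alpha(\phi;X,Y)\le\bar\alpha$, \zcref{lem:population-lepski-condition} gives $(m/\ln m)V_{m,J^*_m(\phi)}\gtrsim (m/\ln m)^{1/(2\bar\alpha+1)}$ uniformly in $\phi$. This gives the failure probability stated in the theorem, uniformly over $\Phi$. The implicit constants in \zcref{lem:population-lepski-condition} must also be uniform in $\phi$. I will take this from the $\asymp$ in \zcref{asm:poly-approx}, reading those constants as uniform; this is the ``for some $\epsilon$'' slack in the statement.

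I expect the main obstacle to be the transfer from net points to $\hat\phi$. The estimator $\hat f_{m,J}$ is defined as an argmin at $\hat\phi$, not at the net point, and the lower bound of \zcref{lem:sieved-LS-lower} needs care, because an approximate minimizer over a perturbed class must still have error $\gtrsim A_J$. My first attempt will be to redo the basic inequality of the sieved least-squares analysis with the class $\mathcal{F}_J\circ\hat\phi$ sandwiched within $L^\infty$-distance $V_{m,J}$ of $\mathcal{F}_J\circ\phi_1$. The noise cross term can then be controlled on the same union-bounded event by Cauchy--Schwarz with $\|Y-\Mean[Y|X]\|_{L^2_m}\lesssim\sigma$.
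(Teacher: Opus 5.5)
Your plan follows the paper's proof. The paper establishes \zcref{thm:bias-approx-error-bounds-learned} by covering $\Phi$ under $\rho_J$ at radius $V_{m,J}$ and union-bounding (\zcref{lem:sieved-LS-oracle-learned,lem:sieved-LS-lower-learned}); it then reruns the proof of \zcref{thm:complexity-bounds} with the deterministic choice $t=\kappa\ln m\,(m/\ln m)^{1/(2\bar\alpha+1)}$, valid for every $\hat\phi$ since $\alpha\le\bar\alpha$, so fix this $t$ rather than one depending on $J^*_m(\hat\phi;X,Y)$.

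The only difference is cosmetic: the paper keeps the basic inequality at $\hat\phi$ and discretizes just the noise cross term and the comparator, instead of treating $\hat f_J\circ\phi_1$ as an approximate ERM. Its lower bound uses only $\hat f_J\in\mathcal{F}_J$ plus uniform concentration over the covers, so the obstacle you anticipate there does not arise.
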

\zcref{thm:complexity-bounds-learned} is an analogue of \zcref{thm:complexity-bounds} with learned $\hat\phi$.

Following the analyses with the fixed $\phi$ case, we derive bounds on the bias terms with $\hat\phi$ to prove \zcref{thm:complexity-bounds-learned}.
\begin{theorem}\label{thm:bias-approx-error-bounds-learned}
  Fix $\phi$, $(X, Y) \in \mathcal{P}$, and $J, J' \in [m]$ such that $J' \ge J$. Let $\hat\phi$ be the learned feature extractor depending on the pre-training samples. Assume \zcref{asm:complex-phi,asm:sub-gauss-noise}. Then, with probability at least $1 - N(V_{m,J}, \Phi, \rho_J)N(V_{m,J'}, \Phi, \rho_{J'})e^{-t}$,
  \begin{align}
    \hat{B}_{J,J'}(\hat\phi; X, Y) \lesssim A_J(\hat\phi; X, Y) + V_{m, J'} + \frac{t}{m}.
  \end{align}
  Moreover, with probability at least $1 - N(V_{m,J}, \Phi, \rho_J)N(V_{m,J'}, \Phi, \rho_{J'})e^{-t}$,
  \begin{align}
    \hat{B}_{J,J'}(\hat\phi; X, Y) \gtrsim A_J(\hat\phi; X, Y) - A_{J'}(\hat\phi; X, Y) - V_{m, J'} - \frac{t}{m}.
  \end{align}
\end{theorem}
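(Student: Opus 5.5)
We will follow the strategy behind \zcref{thm:bias-approx-error-bounds}. The new ingredient is a discretization of $\Phi$ that makes the fixed-$\phi$ bounds hold uniformly over a finite net. The dependence of $\hat\phi$ on the pre-training sample is then absorbed by a union bound.

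\textbf{Nets and the union bound.} Let $\Phi_J$ be a minimal $V_{m,J}$-net of $\Phi$ under $\rho_J$, and let $\Phi_{J'}$ be a minimal $V_{m,J'}$-net under $\rho_{J'}$. For each pair $(\phi_1,\phi_2)\in\Phi_J\times\Phi_{J'}$, we apply the proofs of \zcref{lem:sieved-LS-oracle} and \zcref{lem:sieved-LS-lower}, and hence of \zcref{thm:bias-approx-error-bounds}. Here $\phi_1$ is the extractor used at level $J$ and $\phi_2$ the one used at level $J'$. Each application holds with probability $1-O(e^{-t})$, because each net element is deterministic. A union bound over the $N(V_{m,J},\Phi,\rho_J)N(V_{m,J'},\Phi,\rho_{J'})$ pairs gives the stated probability. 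On this event, the upper and lower bounds on $\|\hat f_{m,J}\circ\phi_1-\Mean[Y|X]\|_{L^2_m(X)}$ and $\|\hat f_{m,J'}\circ\phi_2-\Mean[Y|X]\|_{L^2_m(X)}$ hold simultaneously for all net points.

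\textbf{Transfer to $\hat\phi$.} Choose $\phi_1\in\Phi_J$ with $\rho_J(\hat\phi,\phi_1)\le V_{m,J}$, and $\phi_2\in\Phi_{J'}$ with $\rho_{J'}(\hat\phi,\phi_2)\le V_{m,J'}$. For every $f\in\mathcal F_J$ we have $\|f\circ\hat\phi-f\circ\phi_1\|_{L^\infty}\le V_{m,J}$. This controls three quantities.
\begin{itemize}
\item Approximation error: since $\|\cdot\|_{L^2(X)}\le\|\cdot\|_{L^\infty}$ and everything is bounded, $A_J(\hat\phi)$ and $A_J(\phi_1)$ differ by $O(V_{m,J})$.
\item Empirical risk: the empirical least-squares objective changes by $O(V_{m,J}\cdot(1+\|Y\|_{L^1_m}))$. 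On a sub-Gaussian high-probability event, $\|Y\|_{L^1_m}=O(1)$, and this costs only $e^{-t}$ more.
\item The estimator: $\hat f_{m,J}$ computed with $\hat\phi$ is therefore an $O(V_{m,J})$-approximate empirical risk minimizer for the $\phi_1$ problem.
\end{itemize}
The oracle inequality of \zcref{lem:sieved-LS-oracle} tolerates an additive optimization slack in the basic inequality. The same holds for the lower bound \zcref{lem:sieved-LS-lower}, which only uses that the estimator lies in $\mathcal F_J$. Consequently, the bounds for $\hat\phi$ hold up to additive $O(V_{m,J}+V_{m,J'})=O(V_{m,J'})$, and this is absorbed into the $V_{m,J'}$ term. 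The triangle and reverse-triangle sandwich \zcref{eq:bias-approx-error-bounds-1} then yields both displayed inequalities, exactly as in the proof of \zcref{thm:bias-approx-error-bounds}.

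\textbf{Main obstacle.} The delicate step is to redo the proof of \zcref{lem:sieved-LS-oracle} with an approximate minimizer whose composed function $f\circ\hat\phi$ is random through $\hat\phi$. There are two ways to keep the empirical process terms over a class that does not depend on the data.
\begin{itemize}
\item First attempt: route everything through the fixed net point, via $\hat f_{m,J}\circ\hat\phi=(\hat f_{m,J}\circ\phi_1)+O(V_{m,J})$ uniformly. The supremum over $\mathcal F_J\circ\phi_1$ is then exactly the one already controlled.
\item Alternative, if the slack in the noise cross term $\langle \xi, f\circ\hat\phi-f\circ\phi_1\rangle_m$ is problematic: bound that term by $V_{m,J}\|\xi\|_{L^1_m}$, using sub-Gaussianity. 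This again gives $O(V_{m,J})$ with probability $1-e^{-t}$.
\end{itemize}
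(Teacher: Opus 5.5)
Your proposal is correct and takes essentially the paper's route. The paper covers $\Phi$ under $\rho_J$ at scale $V_{m,J}$ (and under $\rho_{J'}$ at scale $V_{m,J'}$), substitutes the nearest net point for $\hat\phi$ inside the fixed-$\phi$ sieved least-squares arguments, and bounds the resulting noise cross-term discrepancy by $\epsilon\cdot\frac{1}{m}\sum_i|\epsilon_i|$, which is your ``alternative''. It packages this as \zcref{lem:sieved-LS-oracle-learned,lem:sieved-LS-lower-learned}, absorbs the additive $O(\epsilon)$ into $V_{m,J'}$, and reruns the triangle/reverse-triangle sandwich of \zcref{thm:bias-approx-error-bounds}; treating $\hat f_{m,J}$ as an approximate minimizer for the deterministic net-point problem, with the comparator $f^*_J$ taken there, is simply a slightly tidier way to organize the same union bound.
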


Based on \zcref{thm:bias-approx-error-bounds-learned}, we prove \zcref{thm:complexity-bounds-learned}.
\begin{proof}[Proof of \zcref{thm:complexity-bounds-learned}]
  We invoke the same proof of \zcref{thm:complexity-bounds} except leveraging \zcref{thm:bias-approx-error-bounds-learned} and choosing $t$ as
  \begin{align}
    t = \ln m \cdot \kappa (m/\ln m)^{1/(2\bar\alpha + 1)},
  \end{align}
  where $\kappa$ is an appropriate constant leading to the contradictory $t$. Noting that $m/\ln m > 1$, such a choice of $t$ gives contradictory $t$ for any $\hat\phi$ and $(X, Y) \in \mathcal{P}^*$.
\end{proof}

\subsection{Meta-Learning and Downstream Learning Analyses}\label{sec:learning-analyses}
Now, we prove \zcref{thm:meta-learning-analysis}.
\begin{proof}[Proof of \zcref{thm:meta-learning-analysis}]
  By the definition of $\hat\phi$, we have
  \begin{align}
    \max_{d \in [D]}\hat{J}_m(\hat\phi; X^{(d)}, Y^{(d)}) \le \max_{d \in [D]}\hat{J}_m(\phi^*; X^{(d)}, Y^{(d)}).
  \end{align}

  Application of the union bound into \zcref{thm:complexity-bounds-learned} over $d \in [D]$ yields that with probability at least $1 - D\sum_{J,J'\in[m]:J' \ge J}N(V_{m,J}, \Phi, \rho_J)N(V_{m,J'}, \Phi, \rho_{J'})m^{-\Omega((m/\ln m)^{1/(2\bar\alpha+1)})}$,
  \begin{align}
    \max_{d \in [D]}(m/\ln m)^{1/(2\alpha(\hat\phi; X^{(d)}, Y^{(d)}) + 1)} \lesssim \max_{d \in [D]}J^*_m(\hat\phi; X^{(d)}, Y^{(d)}) \lesssim \max_{d \in [D]}\hat{J}_m(\hat\phi; X^{(d)}, Y^{(d)})
  \end{align}

  Also, applying the union bound to \zcref{thm:complexity-bounds} with $\phi = \phi^*$ over $d \in [D]$ gives that with probability at least $1 - Dm^{-\Omega((m/\ln m)^{1/(2\bar\alpha + 1)})}$,
  \begin{align}
    \max_{d \in [D]}\hat{J}_m(\phi^*; X^{(d)}, Y^{(d)}) \lesssim \max_{d \in [D]}J^*_m(\phi^*; X^{(d)}, Y^{(d)}) \le J^*_m(\phi^*) \lesssim (m/\ln m)^{1/(2\alpha^*_{\mathcal{P}} + 1)}.
  \end{align}

  Note that $1/(2\alpha(\phi; X, Y) + 1) = 1 - \beta(\phi; X, Y)$. Consequently, there exists a constant $C > 1$ such that with probability at least $1 - 2D\sum_{J,J'\in[m]:J' \ge J}N(V_{m,J}, \Phi, \rho_J)N(V_{m,J'}, \Phi, \rho_{J'})m^{-\Omega((m/\ln m)^{1/(2\bar\alpha+1)})}$,
  \begin{align}
    (m/\ln m)^{1 - \min_{d \in [D]}\beta(\hat\phi; X^{(d)}, Y^{(d)})} \le C(m/\ln m)^{1 - \beta^*_{\mathcal{P}}}.
  \end{align}
  Taking the logarithm of both sides and dividing by $\ln(m/\ln m)$, we have
  \begin{align}
    \min_{d \in [D]}\beta(\hat\phi; X^{(d)}, Y^{(d)}) \ge \beta^*_{\mathcal{P}} - \frac{\ln C}{\ln(m/\ln m)}.
  \end{align}
  Fix an arbitrary $\phi \in \Phi$.   Let $(X^{[1]},Y^{[1]}),\ldots,(X^{[\lceil 1/\epsilon \rceil]}, Y^{[\lceil 1/\epsilon \rceil]})$ be such that for any $(X, Y) \in \mathcal{P}$, there exists $i$ satisfying $|\beta(\phi; X^{[i]}, Y^{[i]}) - \beta(\phi; X, Y)| \le \epsilon$. By \zcref{asm:unif-domain}, there exists $d \in [D]$ such that $|\min_i\beta(\phi; X^{[i]}, Y^{(i)}) - \beta(\phi; X^{(d)}, Y^{(d)})| \le \epsilon$ with probability at least $1 - (1-C\epsilon^{\nu})^D$ for some universal constant $C > 0$. Hence, with probability at least $1 - (1-C\epsilon^{\nu})^D$,
  \begin{align}
    \min_{d \in [D]}\beta(\phi; X^{(d)}, Y^{(d)}) \le \inf_{(X, Y) \in \mathcal{P}}\beta(\phi; X, Y) + 2\epsilon.
  \end{align}
  For $t > 0$, taking $\epsilon^{\nu} = t/CD$ yields with probability at least $1 - e^{-t}$,
  \begin{align}
    \min_{d \in [D]}\beta(\phi; X^{(d)}, Y^{(d)}) \le \inf_{(X, Y) \in \mathcal{P}}\beta(\phi; X, Y) + 2\ab(\frac{t}{CD})^\nu.
  \end{align}

  Consider $\ln^{-\gamma}m$-cover of $\Phi$ in \zcref{asm:complex-phi}, denoted as $\phi_1,\ldots,\phi_{N_m}$, where $N_m = N(\ln^{-\gamma}m, \Phi, \rho_{\beta,\mathcal{P}})$. Let $\hat\phi_m$ be the closest $\phi_i$ to $\hat\phi$ in terms of $\rho_{\beta,\mathcal{P}}$. Then, we have with probability at least $1 - N_me^{-t}$,
  \begin{align}
    \min_{d \in [D]}\beta(\hat\phi; X^{(d)}, Y^{(d)}) \le& \min_{d \in [D]}\beta(\hat\phi_m; X^{(d)}, Y^{(d)}) + \frac{1}{\ln^\gamma m} \\
    \le& \inf_{(X, Y) \in \mathcal{P}}\beta(\hat\phi_m; X, Y) + 2\ab(\frac{t}{CD})^\nu + \frac{1}{\ln^\gamma m} \\
    \le& \beta_{\mathcal{P}}(\hat\phi) + 2\ab(\frac{t}{CD})^\nu + \frac{2}{\ln^\gamma m}.
  \end{align}
  By \zcref{asm:complex-phi} and $D \gtrsim \ln^{1+\nu\gamma} m$, with $t = \ln N_m + \ln m$, we have with probability at least $1 - m^{-1}$,
  \begin{align}
    \min_{d \in [D]}\beta(\hat\phi; X^{(d)}, Y^{(d)}) \le \beta_{\mathcal{P}}(\hat\phi) + O\ab(\frac{1}{\ln^\gamma m}).
  \end{align}
  By the union bound, we have with probability at least $1 - m^{-1} - 2m^{O(1)}\sum_{J,J'\in[m]:J' \ge J}N(V_{m,J}, \Phi, \rho_J)N(V_{m,J'}, \Phi, \rho_{J'})m^{-\Omega((m/\ln m)^{1/(2\bar\alpha+1)})}$,
  \begin{align}
    \beta_{\mathcal{P}}(\hat\phi) \ge \beta^*_{\mathcal{P}} - O\ab(\frac{1}{\ln^\gamma m}).
  \end{align}
  By \zcref{asm:complex-phi}, we have
  \begin{multline}
    \sum_{J,J'\in[m]:J' \ge J}N(V_{m,J}, \Phi, \rho_J)N(V_{m,J'}, \Phi, \rho_{J'})m^{-\Omega((m/\ln m)^{1/(2\bar\alpha+1)})} \\ \lesssim m^{-\Omega((m/\ln m)^{1/(2\bar\alpha+1)})} \lesssim m^{-1},
  \end{multline}
  which gives the desired result.
\end{proof}

Next, we prove \zcref{thm:downstream-learning-analysis}.
\begin{proof}[Proof of \zcref{thm:downstream-learning-analysis}]
  Let $J^* = J^*_n(\hat\phi; X, Y)$ and $\hat{J} = \hat{J}_n(\hat\phi; X, Y)$. Let $\hat{f}'_{n,J}$ be the closest function among $O(1/n)$-net of $\mathcal{F}_J$ to $\hat{f}_{n,J}$ in $L^\infty$-norm. Application of the union bound over $O(1/n)$-net and $\hat{J}$ and \zcref{lem:squared-error-difference-concentration} yields that with probability at least $1 - e^{-t}$,
  \begin{align}
    \bar{E}_{\hat\phi}(\hat{f}_{n}) =& \Vab{\hat{f}_{n,\hat{J}}\circ\hat\phi - \Mean[Y|X]}_{L^2(X)}^2 \\
    \lesssim& \Vab{\hat{f}'_{n,\hat{J}}\circ\hat\phi - \Mean[Y|X]}_{L^2(X)}^2 + \frac{1}{n^2} \\
    \lesssim&
    \begin{multlined}[t]
      \Vab{\hat{f}'_{n,\hat{J}}\circ\hat\phi - \Mean[Y|X]}_{L^2_n(X)}^2 \\ + \sqrt{V_{n,\hat{J}} + \frac{\ln n}{n} + \frac{t}{n}}\Vab{\hat{f}'_{n,\hat{J}}\circ\hat\phi - \Mean[Y|X]}_{L^2(X)} + V_{n,\hat{J}} + \frac{\ln n}{n} + \frac{t}{n}.
    \end{multlined}
  \end{align}
  By AM-GM inequality, with probability at least $1 - e^{-t}$,
  \begin{align}
    \bar{E}_{\hat\phi}(\hat{f}_{n}) \lesssim \Vab{\hat{f}_{n,\hat{J}}\circ\hat\phi - \Mean[Y|X]}_{L^2_n(X)}^2 + V_{n,\hat{J}} + \frac{t}{n}.
  \end{align}

  If $\hat{J} \le J^*$, the empirical Lepski's rule ensures that
  \begin{align}
    \bar{E}_{\hat\phi}(\hat{f}_{n}) \lesssim \Vab{\hat{f}_{n,J^*}\circ\hat\phi - \Mean[Y|X]}_{L^2_n(X)}^2 + V_{n,J^*} + \frac{t}{n}.
  \end{align}
  By \zcref{lem:sieved-LS-oracle}, we get with probability at least $1 - e^{-t}$,
  \begin{align}
    \bar{E}_{\hat\phi}(\hat{f}_{n}) \lesssim A_{J^*}(\hat\phi; X, Y) + V_{n,J^*} + \frac{t}{n}.
  \end{align}
  From \zcref{lem:population-lepski-condition}, we have with probability at least $1 - e^{-t}$,
  \begin{align}
    \bar{E}_{\hat\phi}(\hat{f}_{n}) \lesssim (n/\ln n)^{-\beta(\hat\phi; X, Y)} + \frac{t}{n}.
  \end{align}

  If $\hat{J} > J^*$, there exists $J' \ge J^*$ such that $J^*$ fails the empirical Lepski condition, i.e.,
  \begin{align}
    \hat{B}_{J^*,J'}(\hat\phi; X, Y) > \rho V_{n,J'}.
  \end{align}
  By \zcref{thm:bias-approx-error-bounds} (upper bound), with probability at least $1-e^{-t}$,
  \begin{align}
    \hat{B}_{J^*,J'}(\hat\phi; X, Y) \lesssim A_{J^*}(\hat\phi; X, Y) + V_{n,J'} + \frac{t}{n}.
  \end{align}
  From \zcref{lem:population-lepski-condition}, $A_{J^*} \lesssim V_{n,J^*} \le V_{n,J'}$, so there exist constants $C_1,C_2 > 0$ such that
  \begin{align}
    \rho V_{n,J'} < C_1V_{n,J'} + C_2\frac{t}{n}.
  \end{align}
  Rearranging gives $(\rho - C_1)V_{n,J'} < C_2 t/n$. Choosing $\rho > C_1 + C_2$ and $t = \ln n$, this is contradicted since $V_{n,J'} \ge V_{n,1} = \ln(n)/n$ implies $(\rho-C_1)\ln(n)/n \le C_2\ln(n)/n$. Hence, by choosing $\rho$ sufficiently large, the event $\hat{J} > J^*$ occurs with probability at most $e^{-t} = 1/n$. Combining both cases with a union bound, we have with probability at least $1-2/n$,
  \begin{align}
    \bar{E}_{\hat\phi}(\hat{f}_{n}) \lesssim (n/\ln n)^{-\beta_{\mathcal{P}}(\hat\phi)} + \frac{\ln n}{n} \lesssim (n/\ln n)^{-\beta_{\mathcal{P}}(\hat\phi)},
  \end{align}
  where the last step uses $\beta_{\mathcal{P}}(\hat\phi) \in (0,1)$.
\end{proof}

\subsection{Bias Analysis with Learned Feature Extractor}
Here, we analyze $\hat{B}_{J,J'}$ for the learned feature extractor $\hat\phi$. We follow similar steps of the analyses in \zcref{sec:bias-fixed} but introduce an approximation and the union bound due to the covering of $\Phi$. First, we reveal the error upper and lower bounds on the sieved least-square estimator.
\begin{lemma}\label{lem:sieved-LS-oracle-learned}
  Fix $(X, Y) \in \mathcal{P}$, $J \in [m]$, and $\epsilon > 0$. Let $\hat\phi$ be the learned feature extractor depending on the pre-training samples. Assume \zcref{asm:complex-phi,asm:sub-gauss-noise}. Then, there exist universal constants $c_1,c_2 > 0$ such that for any $\eta \in (0,1)$, with probability at least $1 - (1+2N(\epsilon, \Phi, \rho_J))e^{-t}$,
  \begin{multline}
    \Vab{\hat{f}_J \circ \hat\phi - \Mean[Y | X]}_{L^2_{m}(X)}^2 \le \\ \frac{1}{1-\eta}\pab[Bigg]{4(2+3\eta)A_J(\hat\phi; X, Y) + \frac{6\sigma^2 c_2J\ln m}{\eta m} \\ + \ab(2\sqrt{2\ln(2)}\sigma c_1 + \frac{3\eta c_1^2}{m})\frac{1}{m} + (4+3\eta)\epsilon + \ab(\frac{2\sqrt{2}\sigma c_1}{\ln^{1/2}(2)}+\frac{6\sigma^2}{\eta}+\frac{5(1+\eta)}{6})\frac{t}{m}}.
  \end{multline}
\end{lemma}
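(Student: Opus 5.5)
Because $\hat\phi$ depends on the pre-training sample, \zcref{lem:sieved-LS-oracle} cannot be applied to it directly. The plan is to discretize $\Phi$: take an $\epsilon$-cover $\phi_1,\ldots,\phi_N$ of $\Phi$ in $\rho_J$, with $N = N(\epsilon,\Phi,\rho_J)$, and apply the fixed-$\phi$ argument to each $\phi_k$. A union bound over the cover then handles the data dependence. Let $\phi_k$ be the cover element closest to $\hat\phi$. By the definition of $\rho_J$, every $f \in \mathcal{F}_J$ satisfies $\Vab{f\circ\hat\phi - f\circ\phi_k}_{L^\infty} \le \epsilon$. This sup-norm bound controls all empirical and population $L^2$ discrepancies between the two compositions. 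The counting $1 + 2N e^{-t}$ in the statement suggests the following allocation: for each $\phi_k$, one noise-concentration event and one empirical-versus-population event for the approximation error, plus a single event that does not depend on $\phi$.

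I will redo the basic-inequality proof of \zcref{lem:sieved-LS-oracle} with $\hat\phi$ in place of $\phi$, rather than transferring its conclusion. Since $\hat f_J$ minimizes the empirical squared loss over $f_J\circ\hat\phi$, comparing with the competitor $f^*_{J}\circ\hat\phi$, where $f^*_J = f^*_{Y|X,\hat\phi,J}$, gives
\begin{align}
  \Vab{\hat f_J\circ\hat\phi - \Mean[Y|X]}_{L^2_m(X)}^2 \le \Vab{f^*_J\circ\hat\phi - \Mean[Y|X]}_{L^2_m(X)}^2 + 2\ab<\xi, \hat f_J\circ\hat\phi - f^*_J\circ\hat\phi>_m,
\end{align}
with $\xi = Y - \Mean[Y|X]$.

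In the cross term, replace $\hat\phi$ by $\phi_k$ in both functions. The replacement error is $2\ab<|\xi|, 2\epsilon>_m$, and a sub-gaussian bound on the empirical mean of $|\xi|$ (one event, independent of $\phi$) bounds it by $O(\epsilon)$ plus lower-order terms. The remaining cross term $\ab<\xi, (\hat f_J - f^*_J)\circ\phi_k>_m$ involves a data-dependent pair of functions in $\mathcal{F}_J\circ\phi_k$. Conditioned on $X$, this is a sub-gaussian process indexed by a class with $d_{\mathrm{MB}}\le J$. The chaining/covering bound of the fixed-$\phi$ lemma applies to it for each fixed $k$, giving
\begin{align}
  \frac{c_2\sigma^2 J\ln m}{\eta m} + \frac{\sigma^2 t}{\eta m} + \eta \Vab{(\hat f_J - f^*_J)\circ\phi_k}_{L^2_m}^2.
\end{align}
Converting the last term back to $\hat\phi$ costs another $O(\epsilon)$. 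Using $\Vab{a-b}^2\le 2\Vab{a-c}^2 + 2\Vab{c-b}^2$ and absorbing the $\eta$-multiple of the left-hand side produces the factor $1/(1-\eta)$. The union over $k$ costs $N e^{-t}$.

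The main obstacle is the approximation term $\Vab{f^*_J\circ\hat\phi - \Mean[Y|X]}_{L^2_m}^2$. Because $f^*_J$ depends on $\hat\phi$, the fixed-$\phi$ Bernstein step cannot be used. I will bound it by $2\Vab{f^*_{Y|X,\phi_k,J}\circ\phi_k - \Mean[Y|X]}_{L^2_m}^2$ plus $O(\epsilon)$-type terms. Here I use that $f^*_{Y|X,\hat\phi,J}\circ\phi_k$ is a feasible competitor for $\phi_k$, so that $A_J(\phi_k)\le 2A_J(\hat\phi)+2\epsilon^2$, and symmetrically for the other direction. For each fixed $k$, $f^*_{Y|X,\phi_k,J}$ is deterministic, so Bernstein's inequality with boundedness in $[0,1]$ gives an empirical-to-population bound of the form $(1+\eta)A_J(\phi_k) + O(t/m)$. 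A union bound over $k$ costs the second $N e^{-t}$. Chaining $\hat\phi\to\phi_k\to\hat\phi$ is what inflates the constant from $2(1+\eta)$ to $4(2+3\eta)$ and yields the $(4+3\eta)\epsilon$ term. I expect tracking these constants exactly to be the only delicate part; the qualitative bound follows once each step is done on the cover.
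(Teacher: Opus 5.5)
Your overall architecture matches the paper's:
\begin{itemize}
\item the basic inequality at $\hat\phi$ and an $\epsilon$-cover $\phi_1,\dots,\phi_N$ of $\Phi$ in $\rho_J$;
\item swapping $\hat\phi$ for the nearest $\phi_k$ inside the noise cross term, at cost $(\epsilon_m+2\epsilon)\frac{1}{m}\sum_i|\xi_i|$ (the one $\phi$-free event);
\item a per-$k$ sub-Gaussian covering bound with AM--GM absorption;
\item a per-$k$ Bernstein bound on the approximation term.
\end{itemize}
This gives exactly the $1+2N$ count.

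The step that fails as written is the transfer of the approximation term,
\begin{align}
\Vab{f^*_{Y|X,\hat\phi,J}\circ\hat\phi-\Mean[Y|X]}^2_{L^2_m(X)}\le 2\Vab{f^*_{Y|X,\phi_k,J}\circ\phi_k-\Mean[Y|X]}^2_{L^2_m(X)}+O(\epsilon).
\end{align}
The inequalities you cite, $A_J(\phi_k)\le 2A_J(\hat\phi)+2\epsilon^2$ and its mirror, compare population minima. They say nothing about how two \emph{different} heads fit the $m$ sample points. The sup-norm bound from $\rho_J$ only exchanges $\hat\phi$ and $\phi_k$ under the \emph{same} head. Nor can you instead apply Bernstein to $f^*_{Y|X,\hat\phi,J}\circ\phi_k$. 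That function is random and not determined by $k$, so a union bound over the $\Phi$-cover alone does not reach it.

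The repair uses a freedom you already have: the basic inequality holds pointwise for every element of $\mathcal{F}_J$, random or not. Take the competitor $\tilde f=f^*_{Y|X,\phi_k,J}$ (composed with $\hat\phi$), where $k$ is the random index of the nearest cover element. With $g=\Mean[Y|X]$,
\begin{align}
  \Vab{\tilde f\circ\hat\phi-g}_{L^2_m(X)}^2
  &\le 2\Vab{\tilde f\circ\phi_k-g}_{L^2_m(X)}^2+2\epsilon^2
  \le 4A_J(\phi_k)+2\epsilon^2+\frac{5t}{3m} \\
  &\le 8A_J(\hat\phi)+10\epsilon^2+\frac{5t}{3m}.
\end{align}
The three steps are:
\begin{itemize}
\item the same-head sup-norm swap;
\item \zcref{lem:squared-error-difference-concentration} applied to the $N$ deterministic functions $f^*_{Y|X,\phi_j,J}\circ\phi_j-g$, with a union over $j$;
\item your population inequality $A_J(\phi_k)\le(\sqrt{A_J(\hat\phi)}+\epsilon)^2$.
\end{itemize}

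This choice also tidies the cross term. Its second head is now deterministic for each fixed $k$, so the $\epsilon_m$-cover of $\mathcal{F}_J$ only has to capture $\hat f_J$, exactly as in the fixed-$\phi$ proof. With $f^*_{Y|X,\hat\phi,J}$ you would need a cover of pairs, which doubles $c_2$.

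Running the AM--GM of \zcref{lem:sieved-LS-oracle} gives $(1-\eta)\Vab{\hat f_J\circ\hat\phi-g}^2_{L^2_m(X)}\le(1+\eta)\Vab{\tilde f\circ\hat\phi-g}^2_{L^2_m(X)}+\cdots$. The coefficient of $A_J(\hat\phi)$ is therefore $8(1+\eta)\le 4(2+3\eta)$. Don't chase the stated $\epsilon$ and $t/m$ coefficients. The paper's own displays, combined, give $(2+3\eta)(2\epsilon^2+5t/(6m))$ there, and only orders matter downstream.

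Finally, the paper leaves its competitor $f_J$ unspecified. Its Bernstein step, a union over the $\Phi$-cover applied to $f_J\circ\hat\phi_\epsilon$, is legitimate only if $f_J$ is a function of the cover index, e.g.\ $f_J=f^*_{Y|X,\hat\phi_\epsilon,J}$. That is precisely the repaired competitor.
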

\begin{lemma}\label{lem:sieved-LS-lower-learned}
  Fix $(X, Y) \in \mathcal{P}$, $J \in [m]$, and $\epsilon > 0$. Let $\hat\phi$ be the learned feature extractor depending on the pre-training samples. Assume \zcref{asm:complex-phi,asm:poly-approx}. Then, there exist universal constants $c_1,c_2 > 0$ such that with probability at least $1 - N(\epsilon, \Phi, \rho_J)e^{-t}$,
  \begin{align}
    \Vab{\hat{f}_J\circ\hat\phi - \Mean[Y|X]}_{L^2_m(X)}^2 \ge \frac{1}{4}A_J(\hat\phi; X, Y) - \frac{2c_2J\ln m}{3m} - \frac{3\epsilon^2}{2} - \frac{3c_1^2}{2m^2} - \frac{2t}{3m}.
  \end{align}
\end{lemma}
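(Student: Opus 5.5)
The plan is to transfer \zcref{lem:sieved-LS-lower} from a fixed feature extractor to the learned $\hat\phi$ with a covering argument over $\Phi$ in the metric $\rho_J$.

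Let $\phi_1,\ldots,\phi_N$ be an $\epsilon$-cover of $\Phi$ with respect to $\rho_J$, where $N = N(\epsilon,\Phi,\rho_J)$. The cover is chosen deterministically, so it does not depend on the sample. For each $k$, apply \zcref{lem:sieved-LS-lower} with $\phi=\phi_k$; a union bound over $k$ then gives an event of probability at least $1-N(\epsilon,\Phi,\rho_J)e^{-t}$ on which, for every $k$,
\begin{align}
  \Vab{\hat{f}_{Y|X,\phi_k,m,J}\circ\phi_k - \Mean[Y|X]}_{L^2_m(X)}^2 \ge \frac14 A_J(\phi_k;X,Y) - \frac{2c_2J\ln m}{3m} - \frac{3c_1^2}{2m^2} - \frac{2t}{3m}.
\end{align}
On this event we choose $k$ with $\rho_J(\hat\phi,\phi_k)\le\epsilon$, so $\sup_{f\in\mathcal{F}_J}\Vab*{f\circ\hat\phi - f\circ\phi_k}_{L^\infty}\le\epsilon$. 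Because the bound above holds simultaneously for every net element, it holds for the data-dependent index $k$ as well.

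Two transfer steps remain, and both use the fact that composition with $\hat\phi$ versus $\phi_k$ moves any $f\in\mathcal{F}_J$ by at most $\epsilon$ in sup norm.

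\textbf{Population side.} Let $f^*_J$ be the minimizer for $\hat\phi$. Then
\begin{align}
  A_J(\phi_k) \le \Vab{f^*_J\circ\phi_k - \Mean[Y|X]}^2_{L^2(X)} \le \bigl(\sqrt{A_J(\hat\phi)}+\epsilon\bigr)^2.
\end{align}
We need the reverse direction, a lower bound on $A_J(\phi_k)$, and it follows by symmetry:
\begin{align}
  A_J(\hat\phi) \le \bigl(\sqrt{A_J(\phi_k)}+\epsilon\bigr)^2 \le 2A_J(\phi_k)+2\epsilon^2.
\end{align}
Hence $\tfrac14 A_J(\phi_k)\ge \tfrac18 A_J(\hat\phi)-\tfrac14\epsilon^2$.

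\textbf{Empirical side.} Both $\hat{f}_J\circ\hat\phi$ (that is, $\hat{f}_{Y|X,\hat\phi,m,J}\circ\hat\phi$) and $\hat{f}_{Y|X,\phi_k,m,J}\circ\phi_k$ are least-squares fits over classes that lie within sup-distance $\epsilon$ of each other. The sieved least-squares objectives over $\mathcal{F}_J\circ\hat\phi$ and $\mathcal{F}_J\circ\phi_k$ therefore differ by $O(\epsilon)$ uniformly. This follows by expanding the square: the cross term with $Y$ is controlled using $\Mean[Y|X]\in[0,1]$ and the sub-Gaussian noise, although the cleaner route avoids the noise term altogether, as described next.

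The step I expect to be the main obstacle is the empirical comparison. Comparing the two empirical minimizers directly brings in noise cross terms, and those should not appear in the final bound. I would avoid them by not comparing the minimizers at all. Instead I would lower-bound $\Vab{\hat{f}_J\circ\hat\phi-\Mean[Y|X]}_{L^2_m}$ via the reverse triangle inequality:
\begin{align}
  \Vab{\hat f_J\circ\hat\phi-\Mean[Y|X]}_{L^2_m} \ge \Vab{\hat f_J\circ\phi_k-\Mean[Y|X]}_{L^2_m}-\epsilon.
\end{align}
It then suffices to bound $\Vab{g\circ\phi_k-\Mean[Y|X]}_{L^2_m}$ from below uniformly over all $g\in\mathcal{F}_J$, not only at the least-squares fit. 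The proof of \zcref{lem:sieved-LS-lower} essentially provides such a uniform statement: it is a lower deviation bound for empirical versus population $L^2$ over a $1/m$-net of $\mathcal{F}_J\circ\phi_k$, combined with $\Vab{g\circ\phi_k-\Mean[Y|X]}_{L^2}^2\ge A_J(\phi_k)$. So I would reuse that uniform version.

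Squaring with $(a-\epsilon)^2\ge \tfrac23 a^2-2\epsilon^2$ and combining with the population-side bound then yields
\begin{align}
  \Vab{\hat{f}_J\circ\hat\phi - \Mean[Y|X]}_{L^2_m(X)}^2 \ge c\,A_J(\hat\phi;X,Y) - \frac{2c_2J\ln m}{3m} - O(\epsilon^2) - \frac{3c_1^2}{2m^2} - \frac{2t}{3m},
\end{align}
which holds with probability at least $1-N(\epsilon,\Phi,\rho_J)e^{-t}$. Matching the precise constants stated in the lemma ($\tfrac14$ and $\tfrac32\epsilon^2$) would take careful bookkeeping in the squaring step, for example by absorbing constants through Young's inequality with appropriately chosen weights. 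That bookkeeping is routine once the uniform-in-$g$ form of \zcref{lem:sieved-LS-lower} is in hand.
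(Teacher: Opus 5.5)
Once you drop the first display (it concerns the least-squares fit at $\phi_k$, not $\hat{f}_J$) in favor of the uniform-in-$g$ version, your argument is essentially the paper's. You cover $\Phi$ in $\rho_J$, move from $\hat\phi$ to the cover element by the reverse triangle inequality in $L^2_m(X)$, apply the Bernstein lower deviation uniformly over a net of $\mathcal{F}_J$ composed with the deterministic cover elements, and use only $\hat{f}_J\in\mathcal{F}_J$ to reach the approximation error. The one difference is that you return to $\hat\phi$ via $A_J(\hat\phi;X,Y)\le 2A_J(\phi_k;X,Y)+2\epsilon^2$, whereas the paper transfers $\hat{f}_{J,\epsilon_m}\circ\hat\phi_\epsilon$ back to $\hat{f}_J\circ\hat\phi$ in $L^2(X)$.

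Your claim that the exact constants $\frac{1}{4}$ and $\frac{3}{2}\epsilon^2$ follow by routine bookkeeping is overstated. The paper has the same issue: its own chain, tracked carefully, gives only $\frac{1}{8}A_J(\hat\phi;X,Y)$ and $\frac{5}{2}\epsilon^2$. This is harmless, since only the $\gtrsim$ form is used downstream.
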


We now give a proof of \zcref{thm:bias-approx-error-bounds-learned}.
\begin{proof}[Proof of \zcref{thm:bias-approx-error-bounds-learned}]
  We follow the proof of \zcref{thm:bias-approx-error-bounds} with \zcref{lem:sieved-LS-oracle-learned,lem:sieved-LS-lower-learned}. For shorthands, let $\hat{B}_{J,J'} = \hat{B}_{J,J'}(\hat\phi; X, Y)$ and $A_J = A_J(\hat\phi; X, Y)$. By the triangle and reverse triangle inequalities, we have
  \begin{multline}
    \ab(\Vab{\hat{f}_{m,J}\circ\hat\phi - \Mean[Y|X]}_{L^2_m(X)} - \Vab{\hat{f}_{m,J'}\circ\hat\phi - \Mean[Y|X]}_{L^2_m(X)})^2 \\ \le \hat{B}_{J,J'} \le \\ \ab(\Vab{\hat{f}_{m,J}\circ\hat\phi - \Mean[Y|X]}_{L^2_m(X)} + \Vab{\hat{f}_{m,J'}\circ\hat\phi - \Mean[Y|X]}_{L^2_m(X)})^2. \label{eq:bias-approx-error-bounds-1-learned}
  \end{multline}
  We now prove the upper and lower bounds separately.

  \paragraph{Upper bound}
  Application of \zcref{lem:sieved-LS-oracle-learned} to both terms in the right-hand side of \zcref{eq:bias-approx-error-bounds-1-learned} yields that with probability at least $1-N(V_{m,J}, \Phi, \rho_J)N(V_{m,J'}, \Phi, \rho_{J'})e^{-t}$,
  \begin{align}
    \Vab{\hat{f}_{m,J}\circ\hat\phi - \Mean[Y|X]}_{L^2_m(X)}^2 \lesssim& A_J + V_{m,J} + \frac{t}{m} \\
    \Vab{\hat{f}_{m,J'}\circ\hat\phi - \Mean[Y|X]}_{L^2_m(X)}^2 \lesssim& A_{J'} + V_{m,J'} + \frac{t}{m}.
  \end{align}
  Noting that $A_J$ is decreasing in $J$, $V_{m,J}$ is increasing in $J$, and $V_{m,J} \gtrsim \frac{1}{m}$, we have with probability at least $1-N(V_{m,J}, \Phi, \rho_J)N(V_{m,J'}, \Phi, \rho_{J'})e^{-t}$,
  \begin{align}
    \hat{B}_{J,J'} \lesssim A_J + V_{m,J'} + \frac{t}{m}.
  \end{align}

  \paragraph{Lower bound}
  Respectively applying \zcref{lem:sieved-LS-lower-learned} and \zcref{lem:sieved-LS-oracle-learned} into the first and second terms of the left-hand side of \zcref{eq:bias-approx-error-bounds-1-learned} gives that with probability at least $1-N(V_{m,J}, \Phi, \rho_J)N(V_{m,J'}, \Phi, \rho_{J'})e^{-t}$,
  \begin{align}
    \Vab{\hat{f}_{m,J}\circ\hat\phi - \Mean[Y|X]}_{L^2_m(X)}^2 \gtrsim& A_J - V_{m,J} - \frac{t}{m} \\
    \Vab{\hat{f}_{m,J'}\circ\hat\phi - \Mean[Y|X]}_{L^2_m(X)}^2 \lesssim& A_{J'} + V_{m,J'} + \frac{t}{m}.
  \end{align}
  Again, using the facts that $A_J$ is decreasing in $J$, $V_{m,J}$ is increasing in $J$, and $V_{m,J} \gtrsim \frac{1}{m}$, we have with probability at least $1-N(V_{m,J}, \Phi, \rho_J)N(V_{m,J'}, \Phi, \rho_{J'})e^{-t}$,
  \begin{align}
    \hat{B}_{J,J'} \gtrsim A_J - A_{J'} - V_{m,J'} - \frac{t}{m}.
  \end{align}

\end{proof}

\section{Proofs of Analyses}
\subsection{Proofs for Bias Analyses}
\paragraph{Properties of Lepski's method}
To prove \zcref{lem:population-lepski-condition}, we leverage the following lemma.
\begin{lemma}\label{lem:population-bias-approx}
  Fix $\phi$ and $(X, Y) \in \mathcal{P}$. Assume that $A_J(\phi; X, Y) \asymp J^{-2\alpha}$. For any constant $c > 1$, there exists $C > 0$ such that for any $J, J' \in \NaturalSet$ where $J' \ge CJ$,
  \begin{align}
    B_{J,J'}(\phi; X, Y) \le cA_J(\phi; X, Y).
  \end{align}
  Furthermore, for $c \ge 4$, this inequality is satisfied with $C = 1$.  Moreover, there exists a constant $c > 0$ such that for any constant $C > 0$ and for any $J, J' \in \NaturalSet$ where $J' \ge CJ$,
  \begin{align}
    B_{J,J'}(\phi; X, Y) \ge \ab(1 - cC^{-\alpha})A_J(\phi; X, Y).
  \end{align}
\end{lemma}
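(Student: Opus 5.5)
The plan is to use the Hilbert-space geometry of $L^2(X)$ together with the nesting $\mathcal{F}_J \subseteq \mathcal{F}_{J'}$ for $J' \ge J$. Write $g = \Mean[Y|X]$, $a_J = f^*_J\circ\phi - g$, so that $A_J = \Vab*{a_J}_{L^2(X)}^2$ and $B_{J,J'} = \Vab*{a_J - a_{J'}}^2_{L^2(X)}$. The argument splits into an upper bound on $B_{J,J'}$ via the triangle inequality and a lower bound via the reverse triangle inequality, with the polynomial decay $A_J \asymp J^{-2\alpha}$ controlling the ratio $A_{J'}/A_J$ in both.

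\textbf{Upper bound.} By the triangle inequality,
\begin{align}
  B_{J,J'} \le \ab(\sqrt{A_J} + \sqrt{A_{J'}})^2.
\end{align}
Since the classes are nested, $A_{J'} \le A_J$ for every $J' \ge J$, which gives $B_{J,J'} \le 4A_J$. This settles the claim for $c \ge 4$ with $C = 1$.

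For a general constant $c > 1$, use $A_{J'} \le C_0^2 (J')^{-2\alpha} \le C_0^2 C^{-2\alpha}J^{-2\alpha} \lesssim C^{-2\alpha}A_J$, where $C_0$ collects the constants hidden in $\asymp$. Substituting gives
\begin{align}
  B_{J,J'} \le \ab(1 + O(C^{-\alpha}))^2 A_J.
\end{align}
Choosing $C$ large enough makes the factor at most $c$.

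There is a subtlety here. A bound of the form $(1+\delta)^2A_J$ can only go below $c$ when $c>1$, and the claim requires exactly $c>1$, so this is consistent. The constant must be tracked through $\sqrt{A_{J'}/A_J} \le \sqrt{C_0^2/c_0^2}\,C^{-\alpha}$.

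\textbf{Lower bound.} By the reverse triangle inequality,
\begin{align}
  B_{J,J'} \ge \ab(\sqrt{A_J} - \sqrt{A_{J'}})^2 \ge A_J - 2\sqrt{A_JA_{J'}} \ge A_J\ab(1 - 2\sqrt{A_{J'}/A_J}).
\end{align}
Using $\sqrt{A_{J'}/A_J} \le (C_0/c_0)C^{-\alpha}$ yields $B_{J,J'} \ge (1 - cC^{-\alpha})A_J$ with $c = 2C_0/c_0$. Since the squared quantity is nonnegative, the bound is also trivially true whenever $1-cC^{-\alpha}\le 0$.

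\textbf{Main obstacle.} The only delicate step is that the general-$c$ upper bound needs the ratio of the $\asymp$ constants $C_0/c_0$ to be fixed. This holds because $\phi$ and $(X,Y)$ are fixed, and it also relies on $f^*_J$ existing, which the paper assumes. A refinement would exploit that $f^*_{J'}$ is a projection-type minimizer to get Pythagorean-like control, but the triangle inequality already suffices, so I will not pursue that.
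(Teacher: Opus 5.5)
Your proof is correct and follows the paper's argument. For the upper bound, you use the triangle inequality together with either nesting ($C=1$, $c\ge 4$) or the $\asymp$-ratio bound $\sqrt{A_{J'}/A_J}\lesssim C^{-\alpha}$; for the lower bound, you use the reverse triangle inequality with the same ratio bound. Your expansion $(\sqrt{A_J}-\sqrt{A_{J'}})^2 \ge A_J - 2\sqrt{A_J A_{J'}}$ is the same step as the paper's $(1-cC^{-\alpha}/2)^2 \ge 1-cC^{-\alpha}$ (printed there with the inequality sign reversed), and it has the small advantage of not needing $1-\sqrt{A_{J'}/A_J}\ge 0$, so it covers every $C>0$ uniformly.
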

\begin{proof}[Proof of \zcref{lem:population-bias-approx}]
  We use the shorthands $B_{J,J'} = B_{J,J'}(\phi; X, Y)$ and $A_J = A_J(\phi; X, Y)$.
  \paragraph*{Upper bound.} By the triangle inequality, for any $J,J' \in \NaturalSet$,
  \begin{align}
    B_{J,J'} \le A_J\ab(1 + \frac{A_{J'}^{1/2}}{A_J^{1/2}})^2.
  \end{align}
  By the assumption of $A_J \asymp J^{-2\alpha}$, there exists a constant $C' > 0$ such that
  \begin{align}
    \frac{A_{J'}^{1/2}}{A_J^{1/2}} \le C'\ab(\frac{J}{J'})^{\alpha} \le C'C^{-\alpha}.
  \end{align}
  Since $c > 1$, we can choose $C > 0$ such that $1 + C'C^{-\alpha} \le \sqrt{c}$, confirming the upper bound. The further statement follows from the fact that for any $J' \ge J$, $A_{J'} \le A_{J}$.

  \paragraph*{Lower bound.} By the reverse triangle inequality, for any $J,J' \in \NaturalSet$,
  \begin{align}
    B_{J,J'} \ge A_J\ab(1 - \frac{A_{J'}^{1/2}}{A_J^{1/2}})^2.
  \end{align}
  By the same argument as the upper bound, there exists a constant $c > 0$ such that
  \begin{align}
    B_{J,J'} \ge A_J\ab(1 - cC^{-\alpha}/2)^2.
  \end{align}
  For $C>0$ such that $1 - cC^{-\alpha}/2 \in (0,1)$, we have $(1 - cC^{-\alpha}/2)^2 \le 1 - cC^{-\alpha}$, confirming the lower bound.
\end{proof}

Now, we prove \zcref{lem:population-lepski-condition}.
\begin{proof}[Proof of \zcref{lem:population-lepski-condition}]
  Write $A_J$ for $A_J(\phi; X, Y)$ throughout. By assumption, there exist constants $0 < c_\ell \le c_u < \infty$ such that
  \begin{align}
    c_\ell J^{-2\alpha} \le A_J \le c_u J^{-2\alpha} \for \all J \in \NaturalSet.
  \end{align}
  Define $J_0 = (m / \ln m)^{1/(2\alpha+1)}$; a direct computation gives $V_{m, J_0} = J_0 \ln(m)/m = J_0^{-2\alpha}$. Hence, with this $J_0$, we have $V_{m, J_0} \asymp A_{J_0}$ due to assumption.

  By \zcref{lem:population-bias-approx}, for some constant $c \ge 4$, for any $J' \ge J$,
  \begin{align}
    B_{J,J'} \le cA_{J} \le c c_u J^{-2\alpha}.
  \end{align}
  We can choose $J$ such that $c c_u J^{-2\alpha} \le J_0^{-2\alpha} = V_{m, J_0}$, satisfying the Lepski condition. Hence, for such a $J$, we have $J^*_m(\phi; X, Y) \le J \lesssim J_0$.

  By \zcref{lem:population-bias-approx}, for some constant $c \in (0,1)$, there exists $C > 0$ such that for any $J, J' \in \NaturalSet$ where $J' \ge CJ$,
  \begin{align}
    B_{J,J'} \ge cA_{J} \ge c c_\ell J^{-2\alpha}.
  \end{align}
  We can choose $J$ such that $c c_\ell J^{-2\alpha} \ge J_0^{-2\alpha} = V_{m, J_0}$, breaking the Lepski condition. Hence, for such a $J$, we have $J^*_m(\phi; X, Y) \ge J \gtrsim J_0$.
\end{proof}

\paragraph{Proofs for sieved least square estimator}
We use three concentration inequalities for the empirical $L^2$-norm $\Vab*{\cdot}_{L^2_k(X)}^2$, the empirical inner product to the noise $\ab<\cdot, \epsilon>_k$, and the absolute sum of the noise $\frac{1}{k}\sum_{i=1}^k |\epsilon_i|$.
\begin{lemma}\label{lem:squared-error-difference-concentration}
  Let $X$ be a random variable on $\mathcal{X}$. For any fixed measurable function $h \colon \mathcal{X} \to [-1,1]$ and any $t > 0$, with probability at least $1 - e^{-t}$,
  \begin{align}
    \Vab{h}_{L^2_{k}(X)}^2 - \Vab{h}_{L^2(X)}^2 \le \sqrt{\frac{2 t}{k} \Vab{h}_{L^2(X)}^2} + \frac{t}{3k}.
  \end{align}
  Moreover, with probability at least $1 - e^{-t}$,
  \begin{align}
    \Vab{h}_{L^2(X)}^2 - \Vab*{h}_{L^2_{k}(X)}^2 \le \sqrt{\frac{2 t}{k} \Vab*{h}_{L^2(X)}^2} + \frac{t}{3k}.
  \end{align}
\end{lemma}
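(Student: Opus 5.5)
This is a Bernstein-type inequality for the bounded i.i.d.\ variables $Z_i = h^2(X_i)$, and the plan is to apply Bernstein's inequality in its one-sided form to the centered averages in each direction.

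Set $Z_i = h^2(X_i)$ for $i \in [k]$, where $X_1,\ldots,X_k$ are i.i.d.\ copies of $X$. Then $\Vab{h}_{L^2_k(X)}^2 = \frac{1}{k}\sum_{i=1}^k Z_i$ and $\Mean[Z_i] = \Vab{h}_{L^2(X)}^2$. Since $h$ takes values in $[-1,1]$, we have $Z_i \in [0,1]$. Hence the centered variables $Z_i - \Mean[Z_i]$ satisfy $|Z_i - \Mean Z_i| \le 1$.

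The key step is the variance bound. Because $0 \le Z_i \le 1$, we get $Z_i^2 \le Z_i$, and therefore
\begin{align}
  \mathrm{Var}(Z_i) \le \Mean[Z_i^2] \le \Mean[Z_i] = \Vab{h}_{L^2(X)}^2 .
\end{align}
This self-bounding property is what produces the factor $\Vab{h}_{L^2(X)}^2$ under the square root, rather than a constant.

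Bernstein's inequality, for i.i.d.\ variables with variance at most $v$ and centered values bounded by $b$, gives with probability at least $1-e^{-t}$
\begin{align}
  \frac{1}{k}\sum_{i=1}^k (Z_i - \Mean Z_i) \le \sqrt{\frac{2vt}{k}} + \frac{bt}{3k}.
\end{align}
Plugging in $v = \Vab{h}_{L^2(X)}^2$ and $b = 1$ yields the first inequality. Applying the same inequality to $-Z_i$, whose centered values are also bounded by $1$ in absolute value and which has the same variance, yields the second inequality.

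The only point that needs care is the constant $1/3$ in the linear term. For the upper tail this is immediate, since $Z_i - \Mean Z_i \le 1$. For the lower tail we need $\Mean Z_i - Z_i \le 1$, which holds because $\Mean Z_i \le 1$ and $Z_i \ge 0$. So both one-sided Bernstein bounds apply with $b = 1$, and nothing beyond these two checks is required.
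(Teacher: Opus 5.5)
Your proposal is correct and is essentially the paper's own proof. Both arguments apply one-sided Bernstein to $h^2(X_i)$ with range constant $1$ in each direction, using the self-bounding variance estimate $\mathrm{Var}[h^2(X)] \le \Mean[h^4(X)] \le \Mean[h^2(X)]$ that follows from $|h| \le 1$. Your explicit checks that $Z_i - \Mean Z_i \le 1$ and $\Mean Z_i - Z_i \le 1$ are just a more careful statement of the paper's remark that the centered summands lie in $[-1,1]$.
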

\begin{lemma}\label{lem:sub-gaussian-noise-concentration}
  Let $X \in \RealSet$ be a random variable. Let $\epsilon$ be sub-gaussian and mean-zero independent random variables such that their variance proxy is at most $\sigma^2 > 0$. Then, conditioned on $X$, with probability at least $1-e^{-t}$,
  \begin{align}
    \ab<X, \epsilon>_k \le \sqrt{\frac{2\sigma^2t}{k}}\Vab{X}_{L^2_k}.
  \end{align}
\end{lemma}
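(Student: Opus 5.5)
The plan is a conditional Chernoff bound. Condition on the design $X_1,\ldots,X_k$, so that the coefficients $X_i$ become deterministic. What remains is a weighted sum of independent, mean-zero, sub-gaussian variables, which is itself sub-gaussian with an explicit variance proxy. I read the hypothesis as saying that the $\epsilon_i$ remain independent, mean-zero and sub-gaussian with proxy at most $\sigma^2$ conditionally on $X_1,\ldots,X_k$; this is exactly how \zcref{asm:sub-gauss-noise} supplies it in the regression setting.

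First, I will bound the conditional moment generating function of $S = \ab<X, \epsilon>_k = \frac{1}{k}\sum_{i=1}^k X_i\epsilon_i$. For any $\lambda \in \RealSet$, conditional independence factorizes $\Mean[e^{\lambda S}\mid X_{1:k}]$ into $\prod_{i=1}^k \Mean[e^{(\lambda X_i/k)\epsilon_i}\mid X_{1:k}]$. Applying the sub-gaussian bound to each factor with parameter $\lambda X_i/k$ gives
\begin{align}
  \Mean\ab[e^{\lambda S}\mid X_{1:k}] \le \exp\ab(\frac{\sigma^2\lambda^2}{2k^2}\sum_{i=1}^k X_i^2) = \exp\ab(\frac{\lambda^2 v}{2}), \qquad v = \frac{\sigma^2}{k}\Vab{X}_{L^2_k}^2.
\end{align}

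Second, I will apply Markov's inequality to $e^{\lambda S}$ and optimize over $\lambda > 0$. This yields $\p[S > s\mid X_{1:k}] \le \exp(-s^2/(2v))$ for every $s \ge 0$. Choosing $s = \sqrt{2vt} = \sqrt{2\sigma^2 t/k}\,\Vab{X}_{L^2_k}$ makes the right-hand side equal to $e^{-t}$, which is exactly the claimed bound. The degenerate case $\Vab{X}_{L^2_k} = 0$ has to be treated separately, since the optimization is undefined there. In that case every $X_i = 0$, so $S = 0$ and the inequality holds trivially with probability one.

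No step is genuinely hard. The only point requiring care is the first step: we must make sure that conditioning on the $X_i$ preserves both the independence of the $\epsilon_i$ and the conditional sub-gaussian bound. Once that is in place, the factorization is legitimate and the rest is the standard Chernoff computation. If an unconditional statement is wanted, it follows by taking expectation over $X_{1:k}$, because the conditional tail bound $e^{-t}$ does not depend on $X$.
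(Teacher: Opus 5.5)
Your proof is correct and is essentially the paper's argument: factorize the conditional moment generating function of $\frac{1}{k}\sum_{i=1}^k X_i\epsilon_i$ using independence and the per-coordinate sub-gaussian bound, then apply the Chernoff bound and choose the threshold $\sqrt{2\sigma^2 t/k}\,\|X\|_{L^2_k}$ so that the tail equals $e^{-t}$. Your explicit handling of the conditioning and of the degenerate case $\|X\|_{L^2_k}=0$, and your use of an inequality rather than the paper's equality in the moment generating function step, are minor refinements and not a different route.
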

\begin{lemma}\label{lem:sub-gaussian-absolute-sum-concentration}
  Let $\epsilon_1,\dots,\epsilon_k$ be sub-gaussian and mean-zero independent random variables such that their variance proxy is at most $\sigma^2 > 0$. Then, with probability at least $1-e^{-t}$,
  \begin{align}
    \frac{1}{k}\sum_{i=1}^k \ab|\epsilon_i| \le \sigma\ab(\frac{2t}{k} + 2\ln 2)^{1/2}.
  \end{align}
\end{lemma}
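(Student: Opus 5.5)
The statement to prove is \zcref{lem:sub-gaussian-absolute-sum-concentration}: a concentration bound for $\frac1k\sum_i|\epsilon_i|$. I will use the Chernoff method, with the moment generating function of $|\epsilon_i|$ controlled through that of $\epsilon_i$.

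\textbf{Step 1: MGF of $|\epsilon_i|$.} For $\lambda\ge 0$ we have $e^{\lambda|x|}\le e^{\lambda x}+e^{-\lambda x}$. Applying sub-gaussianity to both terms gives
\begin{align}
  \Mean[e^{\lambda|\epsilon_i|}]\le 2e^{\sigma^2\lambda^2/2}=\exp\ab(\ln 2+\sigma^2\lambda^2/2).
\end{align}

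\textbf{Step 2: Chernoff bound for the average.} The $\epsilon_i$ are independent, so the MGF of the sum factorizes. By Markov's inequality, for any $s>0$ and $\lambda>0$,
\begin{align}
  \p\ab[\frac1k\sum_{i=1}^k|\epsilon_i|\ge s]\le \exp\ab(-\lambda k s+k\ln 2+k\sigma^2\lambda^2/2).
\end{align}
Setting the right-hand side equal to $e^{-t}$ means choosing
\begin{align}
  s=\frac{t/k+\ln 2}{\lambda}+\frac{\sigma^2\lambda}{2}.
\end{align}

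\textbf{Step 3: optimize over $\lambda$.} The choice $\lambda=\sqrt{2(t/k+\ln 2)}/\sigma$ gives
\begin{align}
  s=\sigma\sqrt{2(t/k+\ln 2)}=\sigma\ab(\frac{2t}{k}+2\ln 2)^{1/2},
\end{align}
which is exactly the claimed bound, holding with probability at least $1-e^{-t}$.

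There is no substantive obstacle in this argument. The only point needing care is Step 1: the factor $2$ in the MGF bound is what produces the $\ln 2$ term, and it has to be carried through the Chernoff exponent before optimizing over $\lambda$.
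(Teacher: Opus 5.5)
Your proof is correct and is essentially the paper's argument: the same moment generating function bound $2e^{\sigma^2\lambda^2/2}$ for $|\epsilon_i|$ (from $e^{\lambda|x|}\le e^{\lambda x}+e^{-\lambda x}$), followed by a Chernoff bound on the average. The only cosmetic difference is the order of steps. The paper optimizes $\lambda$ for a fixed threshold $s$ (taking $\lambda^*=s/\sigma^2$) and then solves for $s$. You instead fix the tail level $e^{-t}$ and minimize $s$ over $\lambda$. Both give the same threshold $\sigma(2t/k+2\ln 2)^{1/2}$.
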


From the definition of the Minkowski--Bouligand dimension, for $c_2 > 1$, there is a sufficiently small $c_1 > 0$ such that with $\epsilon_m = \frac{c_1}{m}$, $\ln N(\epsilon_m, \mathcal{F}_J, \Vab*{\cdot}_{L^\infty}) \le c_2J\ln(m)$ for any $m \ge 1$. Let $N_{m,J} = N(\epsilon_m, \mathcal{F}_J, \Vab*{\cdot}_{L^\infty})$ and $f_1,...,f_{N_{m,J}}$ be an $\epsilon_m$-cover of $\mathcal{F}_J$ in $\Vab*{\cdot}_{L^\infty}$. Let $\hat{f}_{J,\epsilon_m}$ be the closest function from these to $\hat{f}_J$ in $\Vab*{\cdot}_{L^\infty}$.

We now prove \zcref{lem:sieved-LS-oracle,lem:sieved-LS-lower}.
\begin{proof}[Proof of \zcref{lem:sieved-LS-oracle}]
  We can decompose the squared error as
  \begin{multline}
    \Vab{\hat{f}_J \circ \phi - \Mean[Y | X]}_{L^2_{m}(X)}^2 = \Vab{f_J \circ \phi - \Mean[Y | X]}_{L^2_{m}(X)}^2 \\ + \frac{2}{m}\sum_{i=1}^{m}\ab((\hat{f}_J\circ\phi)(X_i) - (f_J\circ\phi)(X_i))\epsilon_i \\ + \frac{1}{m}\sum_{i=1}^{m}\ab((\hat{f}_J\circ\phi)(X_i) - Y_i)^2 - \frac{1}{m}\sum_{i=1}^{m}\ab((f_J\circ\phi)(X_i) - Y_i)^2,
  \end{multline}
  where $\epsilon_i = Y_i - \Mean[Y | X]$. Since $\hat{f}_J$ is an empirical minimizer over $\mathcal{F}_J$ and $f_J \in \mathcal{F}_J$, we have
  \begin{multline}
    \Vab{\hat{f}_J \circ \phi - \Mean[Y | X]}_{L^2_{m}(X)}^2 \le \\ \Vab{f_J \circ \phi - \Mean[Y | X]}_{L^2_{m}(X)}^2 + \frac{2}{m}\sum_{i=1}^{m}\ab((\hat{f}_J\circ\phi)(X_i) - (f_J\circ\phi)(X_i))\epsilon_i.
  \end{multline}
  For the second term in the right-hand side, we have
  \begin{multline}
    \frac{1}{m}\sum_{i=1}^{m}\ab((\hat{f}_J\circ\phi)(X_i) - (f_J\circ\phi)(X_i))\epsilon_i \le  \\ \frac{\epsilon_m}{m}\sum_{i=1}^{m}\ab|\epsilon_i| + \frac{1}{m}\sum_{i=1}^{m}\ab((\hat{f}_{J,\epsilon_m}\circ\phi)(X_i) - (f_J\circ\phi)(X_i))\epsilon_i.
  \end{multline}
  By \zcref{lem:sub-gaussian-absolute-sum-concentration}, \zcref{lem:sub-gaussian-noise-concentration}, union bound over the $\epsilon_m$-covers, and triangle inequality, we have with probability at least $1 - 2e^{-t}$,
  \begin{align}
    \frac{\epsilon_m}{m}\sum_{i=1}^{m}\ab|\epsilon_i| \le \sigma\epsilon_m\ab(\frac{2t}{m} + 2\ln 2)^{1/2},
  \end{align}
  and
  \begin{multline}
    \frac{1}{m}\sum_{i=1}^{m}\ab((\hat{f}_{J,\epsilon_m}\circ\phi)(X_i) - (f_J\circ\phi)(X_i))\epsilon_i \le \\ \sqrt{\frac{2\sigma^2(\ln(N_{m,J}) + t)}{m}}\ab(\Vab{\hat{f}_J \circ \phi - f_J \circ \phi}_{L^2_{m}(X)} + \epsilon_m).
  \end{multline}
  Consequently, we have with probability at least $1 - 2e^{-t}$,
  \begin{multline}
    \Vab{\hat{f}_J \circ \phi - \Mean[Y | X]}_{L^2_{m}(X)}^2 \le \\ \Vab{f_J \circ \phi - \Mean[Y | X]}_{L^2_{m}(X)}^2 + 2\sigma\epsilon_m\ab(\frac{2t}{m} + 2\ln 2)^{1/2} \\ + \sqrt{\frac{8\sigma^2(c_2J\ln(m) + t)}{m}}\ab(\Vab{\hat{f}_J \circ \phi - f_J \circ \phi}_{L^2_{m}(X)} + \epsilon_m).
  \end{multline}
  By the triangle inequality, the AM-GM inequality, and the fact that $\sqrt{1 + x} \le 1 + x$ for $x \ge 0$, for any $\eta \in (0,1)$, we have
  \begin{multline}
    (1-\eta)\Vab{\hat{f}_J \circ \phi - \Mean[Y | X]}_{L^2_{m}(X)}^2 \le \\ (1+\eta)\Vab{f_J \circ \phi - \Mean[Y | X]}_{L^2_{m}(X)}^2 + \frac{6\sigma^2 c_2J\ln(m)}{\eta m} \\ + \ab(2\sqrt{2\ln(2)}\sigma c_1 + \frac{\eta c_1^2}{m})\frac{1}{m} + \ab(\frac{2\sqrt{2}\sigma c_1}{\ln^{1/2}(2)}+\frac{6\sigma^2}{\eta})\frac{t}{m}. \label{eq:sieved-LS-oracle-1}
  \end{multline}

  Application of \zcref{lem:squared-error-difference-concentration} to the first term in the right-hand side of \zcref{eq:sieved-LS-oracle-1} yields with probability at least $1 - e^{-t}$,
  \begin{multline}
    \Vab{f_J \circ \phi - \Mean[Y | X]}_{L^2_{m}(X)}^2 \le \\ \Vab{f_J \circ \phi - \Mean[Y | X]}_{L^2(X)}^2 + \sqrt{\frac{2 t}{m} \Vab{f_J \circ \phi - \Mean[Y | X]}_{L^2(X)}^2} + \frac{t}{3m}.
  \end{multline}
  The AM-GM inequality $\sqrt{2tv/m} \le v + t/(2m)$ yields with probability at least $1 - e^{-t}$,
  \begin{align}
    \Vab{f_J \circ \phi - \Mean[Y | X]}_{L^2_{m}(X)}^2 \le 2\Vab{f_J \circ \phi - \Mean[Y | X]}_{L^2(X)}^2 + \frac{5t}{6m}. \label{eq:sieved-LS-oracle-2}
  \end{align}
  Combining \zcref{eq:sieved-LS-oracle-1} and \zcref{eq:sieved-LS-oracle-2} yields the claim.
\end{proof}
\begin{proof}[Proof of \zcref{lem:sieved-LS-lower}]
  By the triangle and reverse triangle inequalities, we have
  \begin{align}
    \Vab{\hat{f}_J \circ \phi - \Mean[Y | X]}_{L^2_{m}(X)}^2 \ge& \ab(\Vab{\hat{f}_{J,\epsilon_m} \circ \phi - \Mean[Y | X]}_{L^2_{m}(X)} - \Vab{\hat{f}_J\circ\phi - \hat{f}_{J,\epsilon_m}\circ\phi}_{L^2_m(X)})^2 \\
    \ge& \ab(\Vab{\hat{f}_{J,\epsilon_m} \circ \phi - \Mean[Y | X]}_{L^2_{m}(X)} - \epsilon_m)^2 \\
    \ge& \frac{1}{2}\Vab{\hat{f}_{J,\epsilon_m} \circ \phi - \Mean[Y | X]}_{L^2_{m}(X)}^2 - \epsilon_m^2,
  \end{align}
  Application of the union bound to \zcref{lem:squared-error-difference-concentration} gives with probability at least $1-e^{-t}$,
  \begin{multline}
    \Vab{\hat{f}_{J,\epsilon_m} \circ \phi - \Mean[Y | X]}_{L^2_{m}(X)}^2 \ge \Vab{\hat{f}_{J,\epsilon_m} \circ \phi - \Mean[Y | X]}_{L^2(X)}^2 \\ - \sqrt{\frac{2\ln N_{m,J} + 2t}{m}}\Vab{\hat{f}_{J,\epsilon_m} \circ \phi - \Mean[Y | X]}_{L^2(X)} - \frac{4\ln N_{m,J}}{3m} - \frac{t}{3m}.
  \end{multline}
  The AM-GM inequality yields with probability at least $1 - e^{-t}$,
  \begin{align}
    \Vab{\hat{f}_{J,\epsilon_m} \circ \phi - \Mean[Y | X]}_{L^2_{m}(X)}^2 \ge \frac{1}{2}\Vab{\hat{f}_{J,\epsilon_m} \circ \phi - \Mean[Y | X]}_{L^2(X)}^2 - \frac{4\ln N_{m,J}}{3m} - \frac{4t}{3m}.
  \end{align}
  By the definition of $\hat{f}_{J,\epsilon_m}$, we have
  \begin{align}
    \Vab{\hat{f}_{J,\epsilon_m} \circ \phi - \Mean[Y | X]}_{L^2_{m}(X)}^2 \ge \frac{1}{2}\ab(\Vab{\hat{f}_{J} \circ \phi - \Mean[Y | X]}_{L^2(X)} - \epsilon_m)^2 - \frac{4\ln N_{m,J}}{3m} - \frac{4t}{3m}.
  \end{align}
  Noting that $\hat{f}_{J} \in \mathcal{F}_J$ almost surely, we have
  \begin{align}
    \Vab{\hat{f}_{J} \circ \phi - \Mean[Y | X]}_{L^2(X)}^2 \ge A_J(\phi; X, Y) \mbox{ almost surely}.
  \end{align}
  By the triangle inequality, we have
  \begin{align}
    \ab(A_J(\phi; X, Y) - \epsilon_m)^2 \ge \frac{1}{2}A_J(\phi; X, Y) - \epsilon_m^2.
  \end{align}
  Consequently, we have with probability at least $1-e^{-t}$,
  \begin{align}
    \Vab{\hat{f}_{J} \circ \phi - \Mean[Y | X]}_{L^2_{m}(X)}^2 \ge \frac{1}{4}A_J(\phi; X, Y) - \frac{3\epsilon_m^2}{2} - \frac{2\ln N_{m,J}}{3m} - \frac{2t}{3m}.
  \end{align}
  Using the upper bound on $\ln N_{m,J}$ and the definition of $\epsilon_m$, we get the desired result.
\end{proof}

\subsection{Proofs for Bias Analyses with Learned Feature Extractor}

\paragraph{Proofs for sieved least-square estimator}
Consider $\epsilon$-cover of $\Phi$ in \zcref{asm:complex-phi}, denoted as $\phi_1,...,\phi_{N_{\epsilon}}$, where $N_{\epsilon} = N(\epsilon, \Phi, \rho_J)$. Let $\hat{i}_{\epsilon} = \argmin_{i \in [N_{\epsilon}]}\rho_J(\hat\phi, \phi_i)$ and $\hat\phi_{\epsilon} = \phi_{\hat{i}_{\epsilon}}$.

From the definition of the Minkowski--Bouligand dimension, for $c_2 > 1$, there is a sufficiently small $c_1 > 0$ such that with $\epsilon_m = \frac{c_1}{m}$, $\ln N(\epsilon_m, \mathcal{F}_J, \Vab*{\cdot}_{L^\infty}) \le c_2J\ln(m)$ for any $m \ge 1$. Let $N_{m,J} = N(\epsilon, \mathcal{F}_J, \Vab*{\cdot}_{L^\infty})$ and $f_1,...,f_{N_{m,J}}$ be a $\epsilon_m$-cover of $\mathcal{F}_J$ in $\Vab*{\cdot}_{L^\infty}$. Let $\hat{f}_{J,\epsilon_m}$ be the closest function from these functions to $\hat{f}_J$ in $\Vab*{\cdot}_{L^\infty}$.

\begin{proof}[Proof of \zcref{lem:sieved-LS-oracle-learned}]
  We can decompose the squared error as
  \begin{multline}
    \Vab{\hat{f}_J \circ \hat\phi - \Mean[Y | X]}_{L^2_{m}(X)}^2 = \Vab{f_J \circ \hat\phi - \Mean[Y | X]}_{L^2_{m}(X)}^2 \\ + \frac{2}{m}\sum_{i=1}^{m}\ab((\hat{f}_J\circ\hat\phi)(X_i) - (f_J\circ\hat\phi)(X_i))\epsilon_i \\ + \frac{1}{m}\sum_{i=1}^{m}\ab((\hat{f}_J\circ\hat\phi)(X_i) - Y_i)^2 - \frac{1}{m}\sum_{i=1}^{m}\ab((f_J\circ\hat\phi)(X_i) - Y_i)^2,
  \end{multline}
  where $\epsilon_i = Y_i - \Mean[Y | X]$. Since $\hat{f}_J$ is an empirical minimizer over $\mathcal{F}_J$ and $f_J \in \mathcal{F}_J$, we have
  \begin{multline}
    \Vab{\hat{f}_J \circ \hat\phi - \Mean[Y | X]}_{L^2_{m}(X)}^2 \le \\ \Vab{f_J \circ \hat\phi - \Mean[Y | X]}_{L^2_{m}(X)}^2 + \frac{2}{m}\sum_{i=1}^{m}\ab((\hat{f}_J\circ\hat\phi)(X_i) - (f_J\circ\hat\phi)(X_i))\epsilon_i.
  \end{multline}
  For the second term in the right-hand side, we have
  \begin{multline}
    \frac{1}{m}\sum_{i=1}^{m}\ab((\hat{f}_J\circ\hat\phi)(X_i) - (f_J\circ\hat\phi)(X_i))\epsilon_i \le  \\ \frac{\epsilon_m + 2\epsilon}{m}\sum_{i=1}^{m}\ab|\epsilon_i| + \frac{1}{m}\sum_{i=1}^{m}\ab((\hat{f}_{J,\epsilon_m}\circ\hat\phi_{\epsilon})(X_i) - (f_J\circ\hat\phi_{\epsilon})(X_i))\epsilon_i.
  \end{multline}
  By \zcref{lem:sub-gaussian-absolute-sum-concentration}, \zcref{lem:sub-gaussian-noise-concentration}, union bound over the $\epsilon$-covers, and triangle inequality, we have with probability at least $1 - (1+N_{\epsilon})e^{-t}$,
  \begin{align}
    \frac{\epsilon}{m}\sum_{i=1}^{m}\ab|\epsilon_i| \le \sigma\ab(\epsilon_m + 2\epsilon)\ab(\frac{2t}{m} + 2\ln 2)^{1/2},
  \end{align}
  and
  \begin{multline}
    \frac{1}{m}\sum_{i=1}^{m}\ab((\hat{f}_{J,\epsilon}\circ\hat\phi_\epsilon)(X_i) - (f_J\circ\hat\phi_\epsilon)(X_i))\epsilon_i \le \\ \sqrt{\frac{2\sigma^2(\ln(N_{m,J}) + t)}{m}}\ab(\Vab{\hat{f}_J \circ \hat\phi - f_J \circ \hat\phi_{\epsilon}}_{L^2_{m}(X)} + \epsilon_m).
  \end{multline}
  Consequently, we have with probability at least $1 - (1+N_{\epsilon})e^{-t}$,
  \begin{multline}
    \Vab{\hat{f}_J \circ \hat\phi - \Mean[Y | X]}_{L^2_{m}(X)}^2 \le \\ \Vab{f_J \circ \hat\phi - \Mean[Y | X]}_{L^2_{m}(X)}^2 + 2\sigma(\epsilon_m + 2\epsilon)\ab(\frac{2t}{m} + 2\ln 2)^{1/2} \\ + \sqrt{\frac{8\sigma^2(c_2J\ln(m) + t)}{m}}\ab(\Vab{\hat{f}_J \circ \hat\phi - f_J \circ \hat\phi_\epsilon}_{L^2_{m}(X)} + \epsilon_m + \epsilon).
  \end{multline}
  By the triangle inequality, the AM-GM inequality, and the fact that $\sqrt{1 + x} \le 1 + x$ for $x \ge 0$, for any $\eta \in (0,1)$, we have
  \begin{multline}
    (1-\eta)\Vab{\hat{f}_J \circ \hat\phi - \Mean[Y | X]}_{L^2_{m}(X)}^2 \le \\ (2+3\eta)\Vab{f_J \circ \hat\phi_\epsilon - \Mean[Y | X]}_{L^2_{m}(X)}^2 + \frac{6\sigma^2 c_2J\ln(m)}{\eta m} \\ + \ab(2\sqrt{2\ln(2)}\sigma c_1 + \frac{3\eta c_1^2}{m})\frac{1}{m} + (2+3\eta)\epsilon + \ab(\frac{2\sqrt{2}\sigma c_1}{\ln^{1/2}(2)}+\frac{6\sigma^2}{\eta})\frac{t}{m}. \label{eq:sieved-LS-oracle-1-learned}
  \end{multline}

  Application of \zcref{lem:squared-error-difference-concentration} with union bound over $\epsilon$-cover of $\Phi$ to the first term in the right-hand side of \zcref{eq:sieved-LS-oracle-1-learned} yields with probability at least $1 - N_\epsilon e^{-t}$,
  \begin{multline}
    \Vab{f_J \circ \hat\phi_\epsilon - \Mean[Y | X]}_{L^2_{m}(X)}^2 \le \\ \Vab{f_J \circ \hat\phi_\epsilon - \Mean[Y | X]}_{L^2(X)}^2 + \sqrt{\frac{2 t}{m} \Vab{f_J \circ \hat\phi_\epsilon - \Mean[Y | X]}_{L^2(X)}^2} + \frac{t}{3m}.
  \end{multline}
  The AM-GM inequality $\sqrt{2tv/m} \le v + t/(2m)$ yields with probability at least $1 - N_{\epsilon}e^{-t}$,
  \begin{align}
    \Vab{f_J \circ \hat\phi_\epsilon - \Mean[Y | X]}_{L^2_{m}(X)}^2 \le 4\Vab{f_J \circ \hat\phi - \Mean[Y | X]}_{L^2(X)}^2 + 2\epsilon^2 + \frac{5t}{6m}. \label{eq:sieved-LS-oracle-2-learned}
  \end{align}
  Combining \zcref{eq:sieved-LS-oracle-1-learned} and \zcref{eq:sieved-LS-oracle-2-learned} yields the claim.
\end{proof}
\begin{proof}[Proof of \zcref{lem:sieved-LS-lower-learned}]
  By the reverse triangle and triangle inequalities, we have
  \begin{align}
    \Vab{\hat{f}_J\circ\hat\phi - \Mean[Y|X]}_{L^2_m(X)}^2 \ge& \ab(\Vab{\hat{f}_J\circ\hat\phi_{\epsilon} - \Mean[Y|X]}_{L^2_m(X)} - \epsilon)^2 \\
    \ge& \ab(\Vab{\hat{f}_{J,\epsilon_m}\circ\hat\phi_{\epsilon} - \Mean[Y|X]}_{L^2_m(X)} - \epsilon_m - \epsilon)^2 \\
    \ge& \frac{1}{2}\Vab{\hat{f}_{J,\epsilon_m}\circ\hat\phi_{\epsilon} - \Mean[Y|X]}_{L^2_m(X)}^2- \epsilon_m^2 - \epsilon^2.
  \end{align}

  Application of the union bound to \zcref{lem:squared-error-difference-concentration} gives with probability at least $1-N_{\epsilon}e^{-t}$,
  \begin{multline}
    \Vab{\hat{f}_{J,\epsilon_m} \circ \hat\phi_{\epsilon} - \Mean[Y | X]}_{L^2_{m}(X)}^2 \ge \Vab{\hat{f}_{J,\epsilon_m} \circ \hat\phi_{\epsilon}  - \Mean[Y | X]}_{L^2(X)}^2 \\ - \sqrt{\frac{2\ln N_{m,J} + 2t}{m}}\Vab{\hat{f}_{J,\epsilon_m} \circ \hat\phi_{\epsilon} - \Mean[Y | X]}_{L^2(X)} - \frac{\ln N_{m,J}}{3m} - \frac{t}{3m}.
  \end{multline}
  The AM-GM inequality yields with probability at least $1 - N_{\epsilon}e^{-t}$,
  \begin{align}
    \Vab{\hat{f}_{J,\epsilon_m} \circ \hat\phi_{\epsilon} - \Mean[Y | X]}_{L^2_{m}(X)}^2 \ge \frac{1}{2}\Vab{\hat{f}_{J,\epsilon_m} \circ \hat\phi_{\epsilon} - \Mean[Y | X]}_{L^2(X)}^2 - \frac{4\ln N_{m,J}}{3m} - \frac{4t}{3m}.
  \end{align}
  By the definitions of $\hat{f}_{J,\epsilon_m}$ and $\hat\phi_{\epsilon}$, we have
  \begin{multline}
    \Vab{\hat{f}_{J,\epsilon_m} \circ \hat\phi_{\epsilon} - \Mean[Y | X]}_{L^2_{m}(X)}^2 \ge \\ \frac{1}{2}\ab(\Vab{\hat{f}_{J} \circ \hat\phi - \Mean[Y | X]}_{L^2(X)} - \epsilon_m - \epsilon)^2 - \frac{4\ln N_{m,J}}{3m} - \frac{4t}{3m}.
  \end{multline}
  Noting that $\hat{f}_{J} \in \mathcal{F}_J$ almost surely, we have
  \begin{align}
    \Vab{\hat{f}_{J} \circ \hat\phi - \Mean[Y | X]}_{L^2(X)}^2 \ge A_J(\hat\phi; X, Y) \mbox{ almost surely}.
  \end{align}
  By the triangle inequality, we have
  \begin{align}
    \ab(A_J(\hat\phi; X, Y) - \epsilon_m - \epsilon)^2 \ge \frac{1}{2}A_J(\hat\phi; X, Y) - \epsilon^2_m - \epsilon^2.
  \end{align}
  Consequently, we have with probability at least $1-N_{\epsilon}e^{-t}$,
  \begin{align}
    \Vab{\hat{f}_{J} \circ \hat\phi - \Mean[Y | X]}_{L^2_{m}(X)}^2 \ge \frac{1}{4}A_J(\hat\phi; X, Y) - \frac{3(\epsilon^2_m + \epsilon^2)}{2} - \frac{2\ln N_{m,J}}{3m} - \frac{2t}{3m}.
  \end{align}
  Using the upper bound on $\ln N_{m,J}$ and the definition of $\epsilon_m$, we get the desired result.
\end{proof}

\subsection{Proofs for Concentration Inequalities}
\begin{proof}[Proof of \zcref{lem:squared-error-difference-concentration}]
  Let $X_1, \dots, X_k$ be i.i.d.\ copies of $X$ and set $g = h^2 \colon \mathcal{X} \to [0, 1]$. Then
  \begin{align}
    \Vab*{h}_{L^2_k(X)}^2 - \Vab*{h}_{L^2(X)}^2 = \frac{1}{k}\sum_{i=1}^{k}\ab(g(X_i) - \Mean[g(X)]).
  \end{align}
  The summands $g(X_i) - \Mean[g(X)]$ are i.i.d., bounded in $[-1, 1]$, and satisfy
  \begin{align}
    \Var[g(X)] \le \Mean[g(X)^2] = \Mean[h(X)^4] \le \Mean[h(X)^2] = \Vab*{h}_{L^2(X)}^2,
  \end{align}
  where $h^4 \le h^2$ holds pointwise since $|h| \le 1$. Applying Bernstein's inequality to the i.i.d.\ summands $g(X_i) - \Mean[g(X)]$ yields, with probability at least $1 - e^{-t}$,
  \begin{align}
    \frac{1}{k}\sum_{i=1}^{k}\ab(g(X_i) - \Mean[g(X)])
    \le \sqrt{\frac{2\Var[g(X)]\,t}{k}} + \frac{t}{3k}
    \le \sqrt{\frac{2 t}{k}\Vab*{h}_{L^2(X)}^2} + \frac{t}{3k}.
  \end{align}
  The same derivation is valid even if we exchange $\Vab*{h}_{L^2_{k}(X)}^2$ and $\Vab{h}_{L^2(X)}^2$.
\end{proof}
\begin{proof}[Proof of \zcref{lem:sub-gaussian-noise-concentration}]

  \begin{align}
    & \Mean\ab[\exp\ab(\frac{\lambda}{k}\sum_{i=1}^k h_i\epsilon_i)] \\
    =& \prod_{i=1}^k \Mean\ab[\exp\ab(\frac{\lambda}{k} h_i\epsilon_i)] \\
    =& \prod_{i=1}^k \exp\ab(\frac{\lambda^2\sigma^2 h_i^2}{2k^2}) \\
    =& \exp\ab(\frac{\lambda^2\sigma^2}{2k} \cdot \frac{1}{k}\sum_{i=1}^k h_i^2).
  \end{align}

  By the Chernoff bound, we have
  \begin{align}
    \p\Bab{\frac{1}{k}\sum_{i=1}^k h_i\epsilon_i > t} \le \inf_{\lambda > 0} \exp\ab(\frac{\lambda^2\sigma^2}{2k} \cdot \frac{1}{k}\sum_{i=1}^k h_i^2 - \lambda t) \le \exp\ab(-\frac{t^2k}{2\sigma^2}\ab(\frac{1}{k}\sum_{i=1}^k h_i^2)^{-1}).
  \end{align}
  Choosing $t$ appropriately yields the claim.
\end{proof}
\begin{proof}[Proof of \zcref{lem:sub-gaussian-absolute-sum-concentration}]
  For any $\lambda > 0$ and $x \in \RealSet$, the inequality $e^{\lambda|x|} \le e^{\lambda x} + e^{-\lambda x}$ holds, so the sub-gaussian assumption gives
  \begin{align}
    \Mean\ab[e^{\lambda|\epsilon_i|}] \le \Mean\ab[e^{\lambda\epsilon_i}] + \Mean\ab[e^{-\lambda\epsilon_i}] \le 2e^{\lambda^2\sigma^2/2}.
  \end{align}
  By independence,
  \begin{align}
    \Mean\ab[\exp\ab(\frac{\lambda}{k}\sum_{i=1}^k |\epsilon_i|)] \le 2^k \exp\ab(\frac{\lambda^2\sigma^2}{2k}).
  \end{align}
  The Chernoff bound then gives, for any $s > 0$,
  \begin{align}
    \p\ab(\frac{1}{k}\sum_{i=1}^k |\epsilon_i| > s)
    \le \inf_{\lambda > 0} \exp\ab(-\lambda ks + k\ln 2 + \frac{k\lambda^2\sigma^2}{2}).
  \end{align}
  The infimum over $\lambda > 0$ is attained at $\lambda^* = s/\sigma^2$, yielding
  \begin{align}
    \p\ab(\frac{1}{k}\sum_{i=1}^k |\epsilon_i| > s) \le \exp\ab(-\frac{ks^2}{2\sigma^2} + k\ln 2).
  \end{align}
  Setting $s = \sigma\ab(\frac{2t}{k} + 2\ln 2)^{1/2}$ gives $\frac{ks^2}{2\sigma^2} = t + k\ln 2$, so the right-hand side equals $e^{-t}$.
\end{proof}

\section{Experiments}\label{app:experiments}

\subsection{Tasks and Architectures}
We meta-train on three settings: Omniglot~\autocite{omniglot2015}
$5$-way and $20$-way $1$-shot on a Conv4-64 backbone over
$28\!\times\!28$ grayscale inputs, and Mini-ImageNet~\autocite{vinyals2016matching,ravi2017optimization} $5$-way
$1$-shot on a Conv4-128 backbone over $84\!\times\!84$ RGB inputs,
each with five queries per task. Omniglot features transfer to
SVHN~\autocite{svhn2011}, while Mini-ImageNet features transfer to
CIFAR-10 and CIFAR-100~\autocite{cifar2009}.
We set the pre-training sample size $m\in\{1{,}000, 2{,}000, 4{,}000, 8{,}000, 16{,}000\}$ for Omniglot and $m\in\{8{,}000, 16{,}000, 24{,}000, 32{,}000, 40{,}000\}$ for Mini-ImageNet, and the fine-tuning sample size $n\in\{100, 500, 1{,}000, 2{,}500, 5{,}000\}$.

\subsection{Algorithms, Regularizers, and Optimization}
We compare four meta-learners spanning the gradient-based and metric-based families: first- and second-order MAML~\autocite{finnModelAgnosticMetaLearningFast2017}, Prototypical Networks~\autocite{snell2017prototypical}, and R2-D2~\autocite{bertinetto2018metalearning}.
Meta-training adds a regularization term $\lambda \cdot \mathcal{R}(\theta)$ to the meta-loss, with $\mathcal{R}$ the spectral norm $\sum_{l} \sigma_{\max}(W_l)$ or the entry-wise $\ell_1$ norm $\sum_{l} \lVert W_l \rVert_{1}$, each globally normalized so that $\lambda$ is comparable across architectures and
penalties.
The outer loop runs $3{,}000$ steps of Adam~\autocite{adam2015} ($\beta_1{=}0.9$, $\beta_2{=}0.95$) at $\text{lr}=3\times10^{-4}$ with meta batch size $8$.
Fine-tuning trains the full backbone plus a fresh linear head for $30$ epochs of Adam at $3\times10^{-4}$.
The regularization coefficient $\lambda$ is selected by grid search over $\{10^{-2}, 10^{-1}, 1, 10, 10^{2}, 10^{3}\}$.
The mean values over three random seeds with standard deviation are reported.

\subsection{Implementation and Compute}
The pipeline is built on \texttt{JAX}~\autocite{jax2018} v0.10 with CUDA 12.9, \texttt{Equinox}~\autocite{equinox2021}, and \texttt{Optax}~\autocite{optax2020}.
Every trial occupies an NVIDIA~H100 ($80$\,GB) in single precision.
The full sweep consumed about $1{,}000$ H100-hours.

\subsection{Additional Results}

\zcref{app:fig:spectral_combined} presents the downstream test error rates on SVHN, CIFAR-10, and CIFAR-100 for CNN models pre-trained on Omniglot (5-way 1-shot and 20-way 1-shot) and Mini-ImageNet.
It can be seen that regularizing model complexity improves downstream sample efficiency in various settings.

In addition to the results with the spectral norm regularization, \zcref{app:fig:l1_combined} displays the results with the $\ell_1$-norm regularization. 
We can observe the improvements by $\ell_1$-norm regularization, although they are not as vivid as those by spectral-norm regularization.

\begin{figure}
  \centering
  \includegraphics[width=\linewidth]{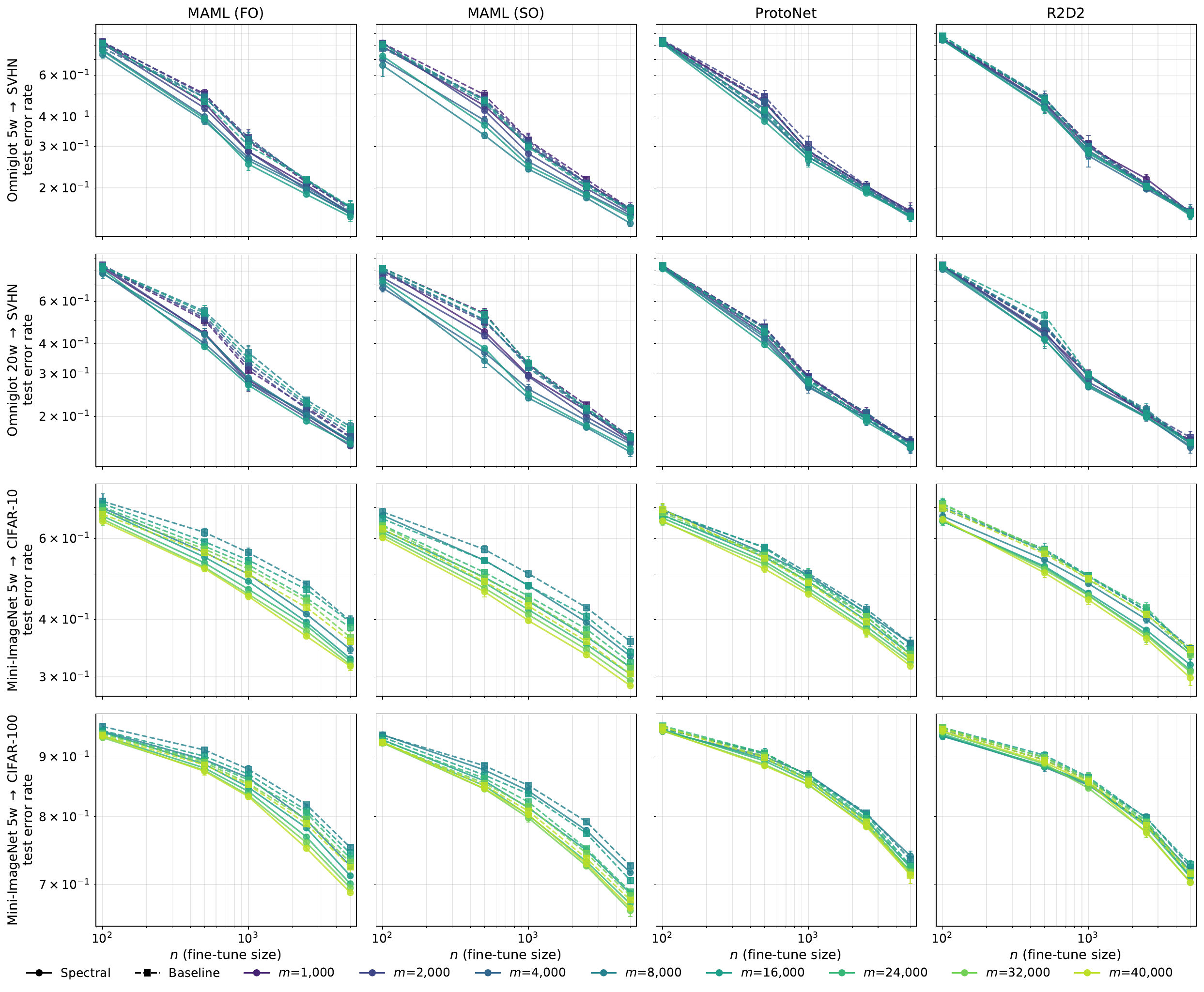}
  \caption{
    Downstream test error rate on SVHN, CIFAR-10, CIFAR-100 (log scale) vs.\ fine-tuning sample size ($n$, log scale) with different pre-training sample size ($m$) for four meta-learning algorithms with and without parameter spectral-norm regularization trained on Omniglot or Mini-ImageNet.
  }
  \label{app:fig:spectral_combined}
\end{figure}

\begin{figure}
  \centering
  \includegraphics[width=\linewidth]{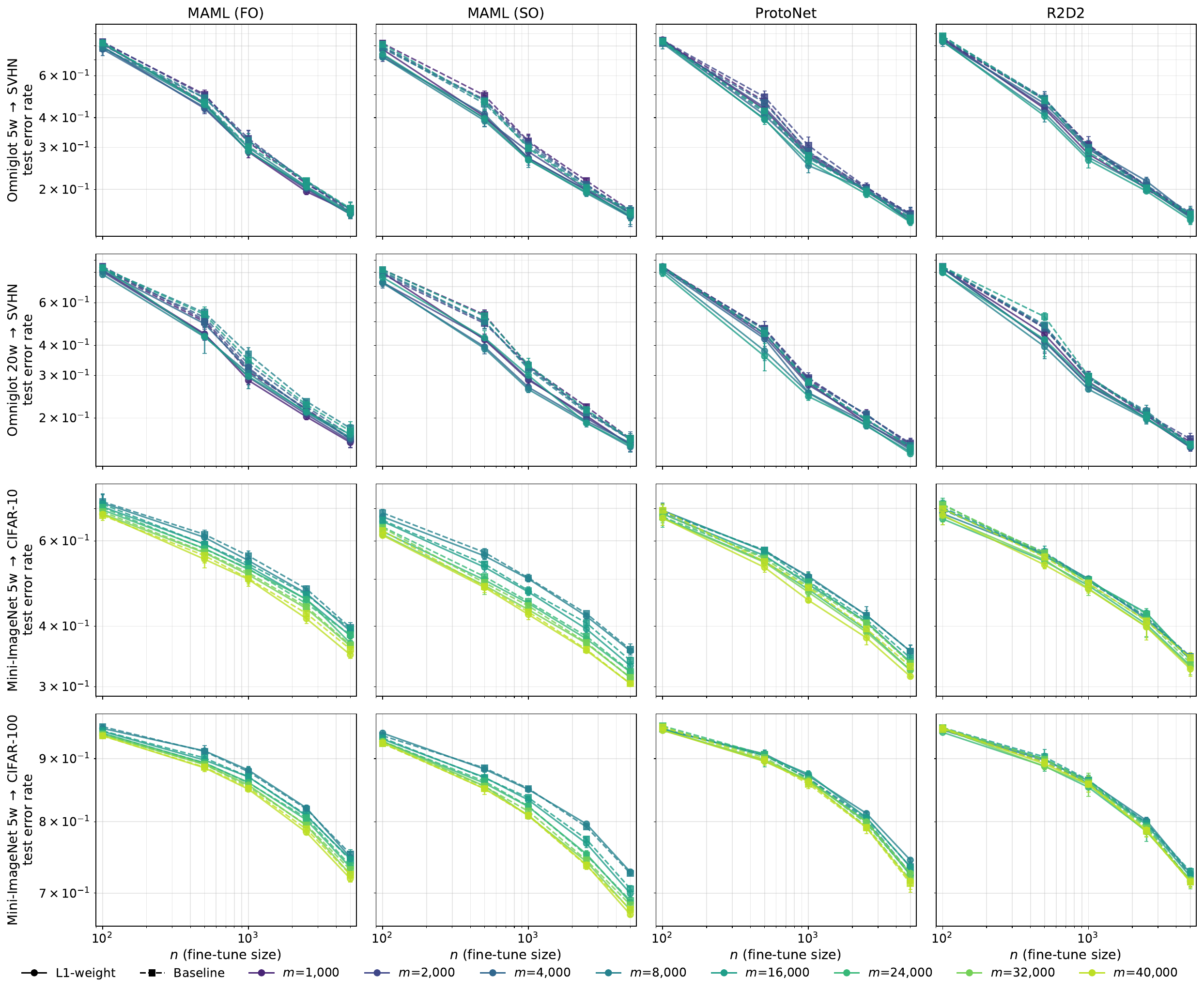}
  \caption{
    Downstream test error rate on SVHN, CIFAR-10, CIFAR-100 (log scale) vs.\ fine-tuning sample size ($n$, log scale) with different pre-training sample size ($m$) for four meta-learning algorithms with and without parameter $\ell_1$-norm regularization trained on Omniglot or Mini-ImageNet.
  }
  \label{app:fig:l1_combined}
\end{figure}


\section{Comprehensive survey of related work}\label{sec:related_work_appendix}

\subsection{Meta-Learning Methodologies}
Meta-learning is a framework that seeks to acquire a learning procedure capable of adapting to future unseen tasks using limited samples.
Rather than focusing on generalization within a single task, it leverages experiences from a collection of past tasks drawn from a task distribution~\autocite{hospedales2021meta,wang2020generalizing}.
A widely adopted taxonomy for these methods consists of a tripartite classification: (i) metric-based methods, which rely on distances and similarities; (ii) optimization-based methods, which incorporate gradient updates in an inner loop; and (iii) model-based methods, which construct the learner itself using mechanisms such as memory or hypernetworks.

\textbf{Metric-based approaches.}
The metric-based family employs a framework for classifying queries based on proximity, attention, or comparison within an embedding space.
As a representative example, Matching Networks proposed one-shot classification via attention between a support set and a query, formalizing the episodic training regime~\autocite{vinyals2016matching}.
Prototypical Networks introduced a concise classifier based on distances to class-specific ``prototypes'' (mean embeddings), providing a clear perspective on meta-representation learning~\autocite{snell2017prototypical}.
Relation Networks enable more expressive comparisons by learning the distance function itself using a neural network~\autocite{sung2018learning}.

\textbf{Optimization-based approaches.}
The optimization-based family views meta-learning as ``learning an initial parameter set or update rule such that a few steps of optimization on an unseen task lead to high performance.''
MAML established a model-agnostic framework by adapting to task-specific parameters via $K$-step gradient descent in the inner loop and optimizing the post-adaptation performance in the outer loop~\autocite{finnModelAgnosticMetaLearningFast2017}.
First-order methods such as FO-MAML and Reptile are categorized as algorithms that avoid second-order derivative computations while shifting initial values in a direction that makes ``simultaneous learning from the same starting point'' easier across tasks~\autocite{nichol2018first}.
Meta-SGD further parameterizes not only the initial values but also the update directions and learning rates, thereby learning a higher-capacity ``way to learn''~\autocite{li2017meta}.
Another line of work replaces the iterative inner-loop adaptation with differentiable
closed-form or rapidly convergent solvers.
R2-D2 introduced a differentiable ridge-regression base learner that constructs
task-specific classifiers on top of learned embeddings, allowing the meta-objective
to be optimized by backpropagating through the solver itself~\autocite{bertinetto2018metalearning}.
This approach occupies an intermediate position between metric-based methods,
which often rely on fixed nearest-neighbor or prototype rules after representation
learning, and gradient-based methods such as MAML, which perform explicit
iterative parameter adaptation.
Additionally, iMAML, which computes meta-gradients using implicit gradients without explicitly unrolling the inner loop, is a representative example of scaling these methods by treating them as bilevel optimization problems~\autocite{rajeswaran2019meta}.

\textbf{Model-based approaches.}
The model-based family implements intra-task adaptation as part of the network's computation using external memory, hypernetworks, or architectures designed to learn the optimizer.
Memory-Augmented Neural Networks (MANN) demonstrated rapid one/few-shot adaptation by using external memory to quickly write and read new information, providing a foundation for meta-learning via model design~\autocite{santoro2016meta}.
Furthermore, the classical lineage of ``learning to learn'' such as ``Optimization as a Model'' (which uses LSTMs to learn update rules), can be understood as a bridge between model-based and optimization-based approaches~\autocite{ravi2017optimization,hochreiter2001learning,andrychowicz2016learning}. SNAIL demonstrated high performance across multiple domains (supervised and reinforcement learning) as a general-purpose meta-learner combining temporal convolutions with attention~\autocite{mishra2017simple}.

\subsection{Meta-Representation Learning: Sharing Representations Across Tasks}

Meta-learning is closely related to transfer learning in that it transfers experiences from numerous tasks to an unseen task~\autocite{pan2009survey}.
Representation learning naturally motivates the acquisition of shared representations in transfer and multi-task learning, based on the general principle that mapping inputs to useful latent representations facilitates learning~\autocite{bengio2013representation}.
Meta-representation learning is characterized by specializing these shared representations for few-shot adaptation within a task, aiming for both statistical and computational efficiency simultaneously.

Classically, Baxter’s model of inductive bias learning clarified the concept of meta-generalization, namely learning a good hypothesis space by observing multiple tasks from a task environment, and served as the starting point for subsequent formalizations~\autocite{baxter2000model}.
In more recent learning theory, Multi-Task Representation Learning (MTRL) has emerged, showing the benefits of learning low-dimensional representations such as dictionaries or feature maps from multiple tasks through generalization error bounds; this provides theoretical support for the acquisition of shared representations in meta-representation learning~\autocite{maurer2016benefit,maurer2013sparse}.

As a theory dealing more explicitly with meta-representation learning, research has provided algorithms and lower bounds for achieving sample-efficient representation estimation and transfer to unseen tasks in settings where a group of linear regression tasks shares a common low-dimensional linear representation~\autocite{tripuraneni2021provable}.
Additionally, some studies analyze the effects of overparameterization on the sample efficiency of meta-representation learning using linear regression sequences, beginning to explain the phenomena observed in deep meta-learning where few-shot adaptation is possible even with large-scale models~\autocite{sun2021towards}.

\subsection{Learning Theory of Meta-Learning}
Learning theory for meta-learning must account for a dual-sampling structure: the extraction of tasks from a task distribution and the sampling of data points within each individual task~\autocite{baxter2000model,hospedales2021meta}.
Recent theoretical studies commonly employ a framework that decomposes excess risk into statistical estimation error, optimization error, and model approximation error, centered around the meta-generalization gap, the discrepancy between the expected risk on unseen tasks and the empirical meta-objective~\autocite{rezazadeh2022unified,wang2024stability}.
In particular, significant progress has been made in precisely analyzing the effects of representation sharing across tasks and how the number of adaptation steps (the inner loop) influences the overall stability of the algorithm~\autocite{huang2022provable,chen2022understanding}.

\textbf{Algorithmic Stability.} This measures the sensitivity of the output to the replacement of a single data point in the training set. By introducing the concept of ``meta-stability'' which accounts for the stability of both the inner and outer loops, this framework provides realistic bounds even for gradient-based methods involving non-convex optimization~\autocite{wang2024stability,bousquet2002stability}.

\textbf{PAC-Bayes Theory.} By introducing a hierarchy of meta-priors and task-specific posteriors, this approach derives bounds dependent on both the task count and sample size~\autocite{pentina2014pac,amit2018meta,rezazadeh2022unified}.

\textbf{Information-Theoretic Approach.} This approach evaluates generalization error using the mutual information between the algorithm's output and the input data, thereby quantifying the dependency on the underlying data distribution~\autocite{chen2021generalization}.

\textbf{Uniform Convergence.} Although this framework utilizes traditional complexity measures, the resulting bounds tend to be loose in the context of deep learning and meta-learning.
Consequently, data-dependent analyses have become the mainstream approach in recent years \autocite{nagarajan2019uniform}.

\subsection{Deep Learning Theory}
Theoretical understanding of deep learning has been developed from several complementary perspectives, including approximation theory, statistical generalization, optimization, representation learning, and scaling laws. 
A classical line of work studies the expressive power and statistical estimation properties of neural networks. Deep ReLU networks are known to approximate rich function classes with rates depending on smoothness, sparsity, compositionality, or intrinsic dimension~\autocite{yarotsky2017error,schmidt-hieberNonparametricRegressionUsing2020}. 
Particularly relevant to our work is the theory of adaptive approximation and estimation by deep networks. 
Suzuki~\autocite{suzuki2019adaptivity} showed that deep ReLU networks achieve minimax optimal rates over Besov and mixed-smooth Besov spaces and can adapt to spatially inhomogeneous smoothness. 
Hayakawa and Suzuki~\autocite{hayakawaMinimaxOptimalitySuperiority2020} further established minimax optimality and the superiority of deep neural network learning over sparse parameter spaces. 
These results clarify an important statistical mechanism behind deep learning: deep nonlinear architectures can exploit hidden structural regularities that are difficult for non-adaptive linear or kernel methods to capture. 
This adaptivity perspective has also been extended to modern architectures, including convolutional and ResNet-type networks, Transformers, and diffusion models~\autocite{oono2019approximation,takakura2023approximation,oko2023diffusion}. 

Another major line of work studies generalization in overparameterized neural networks. 
Since classical capacity bounds based on the raw number of parameters are too pessimistic for modern deep learning, refined analyses have been developed using norms, margins, PAC-Bayes bounds, compression, and algorithm-dependent complexity measures. 
For example, Bartlett et al.~\autocite{bartlett2017spectrally} derived spectrally-normalized margin bounds, and Neyshabur et al.~\autocite{neyshabur2018pac} developed PAC-Bayesian spectrally-normalized bounds. 
At the same time, empirical and theoretical studies of interpolation, benign overfitting, and double descent have shown that the classical bias--variance trade-off does not fully explain modern neural network generalization \autocite{zhang2017understanding,belkin2019reconciling,nakkiran2021deep}. 
These studies mainly concern single-task learning, whereas our work studies how representations learned from multiple source tasks affect the sample complexity of future tasks. 

Optimization theory provides another perspective. 
The neural tangent kernel (NTK) theory shows that infinitely wide neural networks trained by gradient descent can behave like kernel methods~\autocite{jacot2018neural}, and related overparameterization analyses establish global convergence of gradient-based methods under suitable assumptions~\autocite{du2019gradient,allen2019convergence}. 
However, NTK analyses typically describe a lazy-training regime in which features remain nearly fixed during training~\autocite{chizat2019lazy}. 
This perspective alone is insufficient to explain representation learning, where the features themselves are learned. 
Mean-field analyses provide an alternative view in which the distribution of neurons evolves during training and feature learning can occur~\autocite{mei2018mean,sirignano2020mean,woodworth2020kernel}.

Recent work has therefore focused on feature learning beyond fixed-kernel or lazy-training regimes. 
Ba et al.~\autocite{ba2022high} showed that even a single gradient step on the first-layer weights of a two-layer network can improve the learned representation over random features and outperform broad classes of fixed-kernel methods. 
Suzuki et al.~\autocite{suzuki2023feature} analyzed feature learning via mean-field Langevin dynamics and showed that mean-field neural networks can achieve sample-complexity improvements over kernel methods for structured problems such as sparse parity learning. 
More recently, Nishikawa et al.~\autocite{nishikawa2025nonlinear} showed that nonlinear Transformers can perform inference-time feature learning in in-context learning. 
These works are closely aligned with our motivation: the statistical advantage of deep learning comes not only from large model capacity, but also from the ability to learn task-relevant representations. 

The success of large pre-trained models has also motivated theoretical studies of representation learning and scaling laws. 
Contrastive and self-supervised representation learning have been analyzed as mechanisms for extracting downstream-useful features from auxiliary or unlabeled data \autocite{saunshi2019theoretical,haochen2021provable}. 
Empirical scaling laws have shown that loss often follows predictable power-law behavior as data, model size, or compute increases~\autocite{hestness2017deep,kaplan2020scaling, hoffmann2022training}, and recent theoretical work has attempted to explain such laws through variance-limited regimes, data geometry, kernel spectra, and feature learning~\autocite{bahri2024explaining,bordelon2025feature}. 
Scaling laws have also been studied in transfer and downstream settings \autocite{hernandez2021scaling,lourie-etal-2025-scaling}, where the relation between source data and target tasks becomes essential. 

Our work is situated at the intersection of these theories and the theory of meta-learning. 
Classical and modern meta-learning theory shows that multiple related tasks can reduce the sample complexity of future tasks by learning a shared inductive bias or representation~\autocite{baxter2000model,maurer2016benefit,tripuraneni2021provable}. 
In contrast to most general deep learning theory, which primarily studies single-task approximation, optimization, or generalization, we analyze a meta-representation learning algorithm and prove the achievability of a data scaling law. 
Thus, our result connects the adaptivity and feature-learning viewpoint of deep learning theory with the statistical theory of meta-learning, making explicit how the number of source tasks and the number of samples per task jointly determine downstream sample efficiency.

\end{document}